\DeclareMathOperator*{\argmax}{arg\,max}
\DeclareMathOperator*{\argmin}{arg\,min}
\newtheorem{lemma}{Lemma}
\newtheorem{theorem}{Theorem}
\newtheorem{proposition}{Proposition}
\newtheorem{corollary}{Corollary}
\theoremstyle{definition}
\newtheorem{definition}{Definition}
\newtheorem{example}{Example}
\newcommand{\Rbar}{\overline{\mathbb{R}}}
\newcommand{\E}{\mathbb{E}}
\newcommand{\Score}{S}
\newcommand{\Id}{I}
\newcommand{\p}{{\bm{p}}}
\newcommand{\q}{\bm{q}}
\newcommand{\Pvar}{\bm{P}}
\newcommand{\Y}{Y}
\newcommand{\y}{\bm{y}}
\newcommand{\Regret}{\mathrm{Regret}}
\newcommand{\x}{\bm{x}}
\newcommand{\Tv}{\bm{v}}
\newcommand{\Tw}{\bm{w}}
\newcommand{\op}{\mathrm{op}}
\newcommand{\D}{\mathcal{D}}
\newcommand{\N}{\mathcal{N}}
\newcommand{\Pset}{\Delta(\N)}
\newcommand{\TPset}{\mathcal{T}}
\newcommand{\interior}[1]{\mathrm{int}(#1)}
\newcommand{\vc}[1]{\ifthenelse{\boolean{commentsactivated}}{{\color{blue} {\em VC: #1 }}}{}}
\newcommand{\co}[1]{\ifthenelse{\boolean{commentsactivated}}{{\color{red} {\em CO: #1 }}}{}}
\newcommand{\jt}[1]{\ifthenelse{\boolean{commentsactivated}}{{\color{olive} {\em JT: #1 }}}{}}
\newcommand{\ec}[1]{\ifthenelse{\boolean{commentsactivated}}{{\color{teal} {\em EC: #1 }}}{}}
\newcommand{\RH}[1]
{\ifthenelse{\boolean{commentsactivated}}{{\color{violet} {\em RH: #1 }}}{}}
\newcommand{\norm}[1]{\left\Vert#1\right\Vert}
\renewcommand{\vec}[1]{\bm{#1}} 
\newcommand{\defeq}{\vcentcolon=}
\title{Incentivizing honest performative predictions with proper scoring rules}
\author[1]{\href{mailto:<oesterheld@cmu.edu>?Subject=Incentivizing honest performative predictions with proper scoring rules}{Caspar~Oesterheld}{}\thanks{Equal contribution}}
\author[2]{\href{mailto:<jtreutlein@berkeley.edu>?Subject=Incentivizing honest performative predictions with proper scoring rules}{Johannes~Treutlein}{}\footnote[1]{}}
\author[3]{Emery~Cooper}
\author[4]{Rubi~Hudson}
\affil[1]{%
Carnegie Mellon University
}
\affil[2]{%
University of California, Berkeley
}
  \affil[3]{%
Center on Long-Term Risk
  }
\affil[4]{%
University of Toronto
  }
\begin{document}
\maketitle

\begin{abstract}
Proper scoring rules incentivize experts to accurately report beliefs, assuming predictions cannot influence outcomes. We relax this assumption and investigate incentives when predictions are \emph{performative}, i.e., when they can influence the outcome of the prediction, such as when making public predictions about the stock market. We say a prediction is a \emph{fixed point} if it accurately reflects the expert’s beliefs after that prediction has been made. We show that in this setting, reports maximizing expected score generally do not reflect an expert’s beliefs, and we give bounds on the inaccuracy of such reports. We show that, for binary predictions, if the influence of the expert's prediction on outcomes is bounded, it is possible to define scoring rules under which optimal reports are arbitrarily close to fixed points. However, this is impossible for predictions over more than two outcomes.
We also perform numerical simulations in a toy setting, showing that our bounds are tight in some situations and that prediction error is often substantial (greater than 5-10\%). Lastly, we discuss alternative notions of optimality, including performative stability, and show that they incentivize reporting fixed points.
\end{abstract}



\section{Introduction}

As AI capabilities increase, this raises concern for safety, including how to scalably control AI systems with superhuman capabilities \citep{
Russell2019,ngo2022alignment}. 
One proposed design for safety is \emph{oracle AI} [\citealp{armstrong2012thinking}; \citealp{armstrong2013risks}; \citealp{bostrom2014superintelligence}, Ch.\ 10]. An oracle AI 
makes predictions or forecasts about the world, but does not autonomously pursue goals. 
It could thus be safer while still being useful for many applications.

A proper scoring rule assigns scores to forecasts in a way that incentivizes honest reporting of beliefs [\citealp{Brier1950}; \citealp{Good1952}, Section 8; \citealp{McCarthy1956}; \citealp{Savage1971}; \citealp{gneiting2007strictly}]. Proper scoring rules have been used to incentivize honest reports from experts \citep{carvalho2016overview}. They could thus be used as an objective 
for oracle AIs. However, prior work assumes that predictions themselves do not influence the events they are trying to predict. 
In reality, predictions may be \emph{performative} \citep{perdomo2020performative,armstrong2017good}, meaning that they can influence the distribution of outcomes. For example, an AI predicting stock market prices might be able to influence whether people buy or sell stocks, and thus influence whether its predictions come true or not. This makes it important to investigate incentives and honesty of predictions when predictions are performative.

In this paper, we analyze the case of an AI model or human, henceforth called expert, making a probabilistic forecast over a finite set of possibilities to maximize a proper scoring rule. We say that a prediction is performatively optimal if it maximizes expected score, and we define a prediction as a \textit{fixed point} or self-fulfilling if it is equal to the expert's beliefs, conditional on the expert having made that prediction. We investigate to what extent honest predictions, i.e., fixed points, are incentivized in this setting.\footnote{We assume that the AI model can be ascribed explicit beliefs, so that its reports can be characterized as honest if they reflect the model's beliefs.} 
All else equal, honest predictions are preferable since, assuming a sufficiently capable expert, they provide us with more accurate information. However, if an expert has incentives other than to predict honestly---e.g., to bring about fixed points with lower entropy---this is undesirable even if the expert otherwise makes approximately accurate predictions.


The setting in which a model's predictions can influence the predicted distribution has been discussed as \emph{performative prediction} \citep{perdomo2020performative} in the machine learning literature. However, performative prediction focuses on classification or regression tasks with arbitrary model classes and loss functions rather than probabilistic predictions incentivized by proper scoring rules. The literature is motivated by minimization of a given loss function, whereas we take a mechanism design perspective, asking which scoring rules incentivize honest predictions. Focusing on a special case and taking a different perspective will lead to original results that are unique to our setting.

\textbf{Contributions. }
In \Cref{problem-setting}, we adapt the performative prediction formalism to probabilistic predictions or forecasts. We allow for an arbitrary function \(f\) describing the relationship between the expert's predictions and distributions over predicted outcomes caused by these predictions. 

In \Cref{incentives-non-fixed-points}, we show that for any strictly proper scoring rule, \textbf{there exist functions \(f\) from predictions to beliefs such that performatively optimal reports are not fixed points}, even if one exists and is unique. Moreover, we show that under reasonable distributions over such functions, \textbf{optimal reports are almost never fixed points}. This strengthens analogous results from the performative prediction literature.

In \Cref{bounds-on-deviation}, we then \textbf{provide upper bounds} for the inaccuracy of reported beliefs, and for the distance of predictions from fixed points.

In \Cref{sec:approximate-fixed-point-prediction}, we use the bounds to develop \textbf{scoring rules that make the bounds arbitrarily small for binary predictions}. \co{Currently this doesn't mention the result about how these scoring rules must be exponential. That's because this result seems pretty subtle and a bit hard to explain here.}
We also show that \textbf{when reporting a prediction over more than two outcomes, the bounds cannot be made arbitrarily small}.


In \Cref{numerical-simulations}, we perform \textbf{numerical simulations using the quadratic scoring rule}, to show how the inaccuracy of predictions and the distance of predictions from fixed points depend on the expert's influence on the world via its prediction.
The results show that our bounds are tight in some cases. They also show that substantially inaccurate reports (i.e., with errors greater than \(5-10\)\%) are common in our toy setting.

In \Cref{stop-gradients}, we discuss alternatives to performative optimality that do not set incentives other than honest predictions. We show that \emph{performatively stable} \citep{perdomo2020performative} predictions are fixed points. We then consider repeated risk minimization, repeated gradient descent, no-regret learning and prediction markets, and show that all of these settings lead to predictions that are fixed points or close to fixed points.

Finally, in \Cref{related-work}, we elaborate on related work, and in \Cref{conclusion}, we conclude and outline avenues for future work.

Proofs are in corresponding sections in \Cref{appendix:proofs}.

\section{Background}

\textbf{Proper scoring rules. }
Proper scoring rules are used to incentivize an expert to report probabilistic beliefs honestly. Consider a prediction given by a probability distribution \(\p\in\Delta(\N)\) over a set \(\N:=\{1,\dotsc,n\}\) of \(n\in\mathbb{N}\) disjoint and exhaustive outcomes. We identify each distribution \(\p\in\Delta(\N)\) with a vector \(\p\in[0,1]^n\) and write \(p_i\) for the probability of event \(i\in\N\) under distribution \(\p\). A \emph{scoring rule} is a function \(S\colon \Delta(\N)\times \N \rightarrow \Rbar\), where \(\Rbar:=[-\infty,\infty]\) is the extended real line. Given prediction \(\p\in\Delta(\N)\) and outcome \(i\in\N\), the expert receives the score \(S(\p,i).\)
We write $\Score(\p,\q):=\E_{i\sim \q}[S(\p,i)]$
for the expert's expected score, given that outcome \(i\) follows distribution \(\q\in\Delta(\N).\)
\begin{definition}A scoring rule \(S\) is called \emph{proper} if $\Score(\q,\q)\geq \Score (\p,\q)$
for all \(\p,\q\in\Delta(\N).\) It is called \emph{strictly proper} if this inequality is strict whenever \(\p\neq \q.\)
\end{definition}

\begin{example}[Logarithmic scoring rule]The logarithmic scoring rule is defined as
\(S(\p,i):=\log p_i\)
and \(\Score(\p,\q)=\sum_{i=1}^nq_i\log p_i\). This is also the negative of the 
cross-entropy loss employed in training, for example, current large language models \citep{brown2020language}. It is strictly proper.
\end{example}

\begin{example}[Quadratic scoring rule]
Another strictly proper scoring rule is the quadratic score, defined as
\(S(\p,i):=2p_i-\Vert \p \Vert_2^2\)
with \(\Score(\p,\q)=2 \p^\top \q-\Vert \p\Vert_2^2\). This is an affine transformation of the Brier score, making them equivalent scoring rules.
\end{example}

\citet[Theorem~1]{gneiting2007strictly} provide a characterization of proper scoring rules, which will be helpful for stating and proving many of our results.

First, given a convex function \(G\colon\Pset\rightarrow \Rbar\), a subgradient is a function \(g\colon \Pset \rightarrow \Rbar^n\) such that
for any \(\p,\q\in \Pset,\) we have \(G(\q)\geq G(\p) + g(\p)^\top(\q-\p)\). In general, this function may not be unique. Throughout this paper we assume that whenever the subgradients are finite, they are normalized to lie in the \emph{tangent space} of \(\Pset\), i.e., $g(\p)\in \TPset \defeq \{\x\in\mathbb{R}^n\mid \sum_ix_i=0\}$. This can be assumed since if $g(\p)$ is a subgradient of $G$ at point $\p$, so is $(g_i(\p)-\frac{1}{n}\sum_j g_j(\p))_i$.

\begin{theorem}[\citealp{gneiting2007strictly}]
\label{theorem:gneiting-raftery}
A scoring rule \(S\) is (strictly) proper, if and only if there exists a (strictly) convex function \(G\colon \Pset\rightarrow\Rbar\) with a \emph{subgradient} \(g\colon \Pset\rightarrow \Rbar^n\) such that $\Score(\p,\q)=G(\p)+g(\p)^\top(\q-\p)$ for all \(\p,\q\in \Pset\).
\end{theorem}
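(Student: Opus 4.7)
The plan is to prove both directions of the biconditional, handling the ``(strictly)'' variant along with the non-strict one. For the easier direction, suppose $G$ is (strictly) convex with subgradient $g$ satisfying $\Score(\p,\q) = G(\p) + g(\p)^\top(\q-\p)$. Then $\Score(\q,\q) = G(\q) + g(\q)^\top(\q-\q) = G(\q)$, and the (strict) subgradient inequality $G(\q) \geq G(\p) + g(\p)^\top(\q-\p)$ is exactly $\Score(\q,\q) \geq \Score(\p,\q)$, with strictness preserved when $\p \neq \q$.

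For the converse, the key observation is that $\Score(\p,\q) = \sum_i q_i S(\p,i)$ is affine in $\q$. So I would define $G(\p) := \Score(\p,\p)$ and $g_i(\p) := S(\p,i)$; a one-line algebraic check then recovers $\Score(\p,\q) = G(\p) + g(\p)^\top(\q-\p)$. Properness now reads $G(\q) = \Score(\q,\q) \geq \Score(\p,\q) = G(\p) + g(\p)^\top(\q-\p)$, which is precisely the subgradient inequality certifying that $g(\p)$ is a subgradient of $G$ at $\p$; the existence of a subgradient at every point of $\Pset$ then forces $G$ to be convex. To upgrade strict properness to strict convexity, I would apply a standard trick: for $\p \neq \q$, $t \in (0,1)$, and $\x := t\p + (1-t)\q$, the convex combination of the two strict subgradient inequalities at $\x$ (evaluated against $\p$ and $\q$) makes the $g(\x)$ terms cancel and leaves $tG(\p) + (1-t)G(\q) > G(\x)$. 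Finally, to place $g(\p)$ in $\TPset$, I would subtract the coordinatewise mean $\tfrac{1}{n}\sum_j g_j(\p)$; since $\sum_i(q_i-p_i)=0$ on $\Pset$, the inner product $g(\p)^\top(\q-\p)$ is unaffected, so both the representation formula and the subgradient inequality survive the normalization.

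The main obstacle I anticipate is not the algebra but the bookkeeping around $\Rbar$-valued scores and subgradients: rules like the logarithmic score take value $-\infty$ at boundary probabilities, so I must check that $G(\p) + g(\p)^\top(\q-\p)$ is well-defined (with no $\infty - \infty$ ambiguities) and that the subgradient inequality is interpreted consistently when components of $g$ or $G$ itself are infinite. The standard workaround is to establish everything first on the relative interior of $\Pset$, where finiteness holds, and then extend to the boundary by lower semicontinuity, which is exactly the kind of behavior the $\Rbar$-valued convention in the theorem statement is designed to accommodate.
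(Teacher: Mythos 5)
The paper does not prove this statement: it is cited directly from \citet[Theorem~1]{gneiting2007strictly} (whose argument in turn traces back to McCarthy and Savage). So there is no in-paper proof to compare against. That said, your proof is correct and is in fact the standard Gneiting--Raftery argument: the easy direction is the subgradient inequality read backwards; the converse exploits affinity of $\Score(\p,\cdot)$ in $\q$ to define $G(\p)=\Score(\p,\p)$ and $g(\p)=(S(\p,i))_i$, verifies the subgradient inequality from properness, and upgrades strict properness to strict convexity via the midpoint trick with the $g(\x)$ terms cancelling. The tangent-space normalization you add is exactly the convention the paper imposes (and it leaves $g(\p)^\top(\q-\p)$ unchanged since $\mathbf{1}^\top(\q-\p)=0$). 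Your caveat about $\Rbar$-valued scores is the right one to flag --- cases like the log score at boundary probabilities require the $0\cdot(-\infty)=0$ convention and care to avoid $\infty-\infty$, which is precisely why Gneiting and Raftery state the theorem with the ``regular'' hypothesis and argue on the relative interior first. In short: correct, and essentially the proof of the cited source.
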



\textbf{Differentiable scoring functions. }
If \(G\) is differentiable at some point \(\p\),  then the subgradient \(g(\p)\) is just the gradient of \(G\), \(g(\p)=\nabla  G(\p)\). As before we let \(\nabla G(\p)\) be an element of the tangent space $\TPset$.
For any \(\Tv\in\TPset\), \(g(\p)^\top\Tv\) then gives the directional derivative of \(G\) at point \(\p\) in the direction \(\Tv\). Note that since $G$ is only defined on the simplex $\Pset$, the partial derivatives are not well-defined.\footnote{For example, in the case of three outcomes, the partial derivative at \((0.3,0.4,0.4)\) w.r.t.\ the first entry is the limit $\lim_{\epsilon\rightarrow 0}(G(0.3+\epsilon,0.4,0.4)-G(0.3,0.4,0.4))/\epsilon$. But $G(0.3+\epsilon,0.4,0.4)$ is not (necessarily)  defined for positive (or negative) $\epsilon$.}

Given \(G,g\) as in the Gneiting and Raftery characterization, we write \(Dg(\p)\in\mathbb{R}^{n,n}\) for the Jacobian matrix of \(g\), if it exists (i.e., this is the Hessian of \(G\)). Note that because $g$ is only defined on $\Pset$, the matrix representation of \(Dg(\p)\) in \(\mathbb{R}^{n,n}\) is not unique. Generally it does not matter which representation of $Dg(\p)$ we use. Importantly, for all $\vec{v}\in\TPset$, $Dg(\p)\vec{v}$ will always be unique and (because we assume that $g(\p)$ is in the tangent space) in $\mathcal{T}$.

\begin{figure}
\includegraphics[width=1\columnwidth]{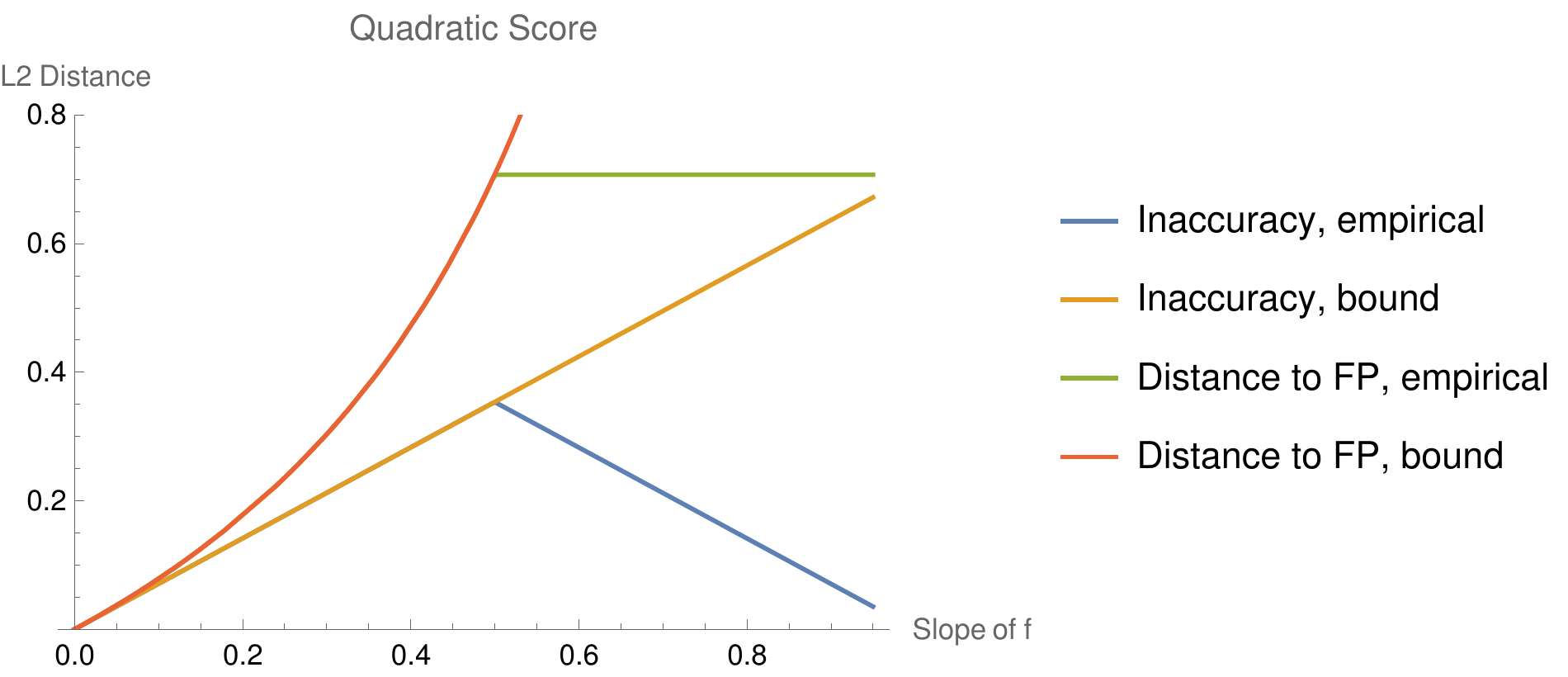}
\caption{Maximal inaccuracy and maximal distance to fixed point (FP) of optimal predictions, depending on the slope of \(f\), according to our simulation and our theoretical bound.
}
\label{fig:max-l2-distance-brier-two-outcome}
\end{figure}

\section{Problem setting}
\label{problem-setting}


In this paper, we take the stance of a principal trying to elicit honest predictions from an expert (human or AI system). We assume that the expert reports a prediction \(\p\) to maximize the expected score given by a proper scoring rule, \(\Score(\p,\q)\). 

Importantly, we assume that the expert's beliefs over outcomes, \(\q\), can themselves vary given different predictions \(\p\), because the expert may believe that its predictions affect the probability of outcomes. 
To model this, we assume that there is a function \({f\colon \Pset\rightarrow \Pset}\) such that beliefs are given by \(\q=f(\p)\).\footnote{Note that any other factor influencing the expert's belief \(\q\) can be incorporated into $f$ by marginalizing. For example, assume $\q$ is a function $\q=g(\p; X)$ where $\p$ is the expert’s prediction and $X$ is some environmental factor the expert is uncertain about. Then we can let $f(\p) := \mathbb{E}_X \left[ g(\p,X) \right]$.
}
We assume \(f\) is known to the expert, but not to the principal.

In the case of an AI system, \(f(\p)\) could also be seen as a ground distribution from which we sample to \emph{train} a model (see \Cref{appendix:online-learning}). In that case, the objective is to design a training procedure that sets the right incentives. However, in most of the following, we assume \(f(\p)\) are the subjective beliefs of a highly capable and knowledgeable expert. 

We say that a prediction \(\p\) is \emph{performatively optimal} \citep{perdomo2020performative} if \(\p\in\argmax_{\Pset} \Score(\p,f(\p))\). In the following, we will not assume convexity of this objective. Our bounds will depend on differentiability of \(S\) and \(f\).

A point \(\p\) is a \emph{fixed point} of \(f\) if \(f(\p)=\p\). By Brouwer's fixed point theorem, if \(f\) is continuous, a fixed point \(\p\in\Pset\) always exists. Moreover, if \(f\) is Lipschitz continuous with constant \(L_f<1\), then by Banach's fixed point theorem, the fixed point is unique.


\begin{example}[Bank Run]
A newspaper's AI predicts whether a certain bank will suffer a bank run or not. Readers use this information when deciding whether to withdraw their money. Specifically, imagine that the probability of a bank run as a function of the AI expert's prediction $\mathbf{p}=(p_1,p_2)\in \Delta(\{1,2 \})$ is given by the (monotonic) function $f\colon \Delta(\{1,2 \}) \rightarrow \Delta(\{1,2 \})$ whose entries are defined by $f_1(\mathbf{p}) = p_1-3(p_1-\nicefrac{1}{10})(p_1-\nicefrac{3}{5})(p_1-\nicefrac{9}{10})/2$ and $f_2(\mathbf{p})=1-f_1(\mathbf{p})$ for all $\mathbf{p}$. Then $f$ has fixed points at $\mathbf{p}=(\nicefrac{1}{10},\nicefrac{9}{10})$, $\mathbf{p}=(\nicefrac{3}{5},\nicefrac{2}{5})$, and $\mathbf{p}=(\nicefrac{9}{10},\nicefrac{1}{10})$.
\end{example}


We focus on fixed points (or approximate fixed points) as a standard of honesty. To see why one may prefer reports that are fixed points, consider a case in which there are no strong guarantees (upper bounds) on \(\Vert \p-f(\p)\Vert \). Then the actual probability of an event, \(f_i(\p)\), could be much higher or lower than the reported probability \(p_i\). This would prevent one from drawing any useful conclusions from the report. However, if \(\p=f(\p)\) or \(\Vert \p-f(\p)\Vert\) is small, then one can rely on the prediction \(\p\) to guide decisions.

That being said, fixed points are not all one might care about, especially when it comes to potential superhuman oracle AIs. Ideally, we would want such systems to not think about how to influence the world at all \citep{armstrong2017good}. Alternatively, they should choose good fixed points over bad ones, hoping that such fixed points exist (we discuss preferences between different fixed points in \Cref{preferences-between-fps}). Regardless, it is still important to understand whether and when fixed points are incentivized. For instance, if a model reports fixed points, one could try to use it only in situations in which a unique desirable fixed point exists.

\textbf{Relation to performative prediction. } As noted in the introduction, our setting is a special case of performative prediction \citep{perdomo2020performative}. In performative prediction, the goal is to find a model parameter that minimizes empirical risk for a classification or regression task, assuming that the choice of parameter can influence the data distribution. The loss-minimizing parameter when taking into account this influence is called performatively optimal. The analogue to fixed points in performative prediction are \emph{performatively stable} predictions. 

We indicate below when our results are analogous to results in the performative prediction setting. However, most of our results are unique to our setting. We take the perspective of a mechanism designer instead of taking a loss function as given. Moreover, we focus on fixed points instead of performative optima. In particular, we bound the quantity \(\Vert \p-f(\p)\Vert\) corresponding to the inaccuracy of predictions, which does not have a direct analogue in performative prediction. We give a more detailed comparison in \Cref{related-work}. 


\textbf{Additional notation. } 
We use \(\mathbf{1}\) to denote the vector \((1,\dotsc,1)^\top\in\mathbb{R}^n\) and \(\Id\) to denote the identity matrix. We define \(\interior{\Pset}:=\{\p\in\Pset\mid \forall i\colon 0<p_i<1\}\) and use \(\Vert\vec{x}\Vert:=\sqrt{\Vec{x}^\top \x}\) to denote the Euclidean norm on \(\mathbb{R}^n\).

\section{Incentives to predict non-fixed-points}
\label{incentives-non-fixed-points}

We begin by investigating whether an expert makes honest predictions, even in the presence of performativity. In performative prediction, it has been shown that performative optimality comes apart from performative stability (the analogous concept to a fixed point in our setting) \citep{perdomo2020performative,izzo2021learn}. However, one may ask whether this is always the case or whether, e.g., some scoring function would prevent this. 


We show that this is not the case: fixed points are in general not optimal. First, we show that for any strictly proper scoring rule there exist cases where a fixed point exists but the optimal prediction is not a fixed point. 
Afterwards, we show that when assuming differentiability and some reasonable distribution over \(f\), optimal predictions are almost surely not fixed points.

\begin{restatable}{proposition}{propone} \label{prop:non-fixed-point-optimal}
Let 
\(S\)
 be any strictly proper scoring rule. For any interior fixed point \(\p^*\in \interior{\Pset}\)  there exists a function 
\(f\)
 with Lipschitz constant 
\(L_f<1\)
 and a unique fixed point at 
\(\p^*\), such that there exists 
\(\p'\neq \p^*\) with 
\(\Score(\p', f(\p'))>\Score(\p^*,f(\p^*))\). That is, the unique fixed point of 
\(f\) is not performatively optimal.
\end{restatable}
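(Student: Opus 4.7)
The plan is to invoke the Gneiting--Raftery characterization (\Cref{theorem:gneiting-raftery}) to write $\Score(\p,\q) = G(\p) + g(\p)^\top(\q-\p)$ with $G$ strictly convex, and then construct $f$ as a very mild contraction towards $\p^*$: so mild that the properness penalty for reporting a nearby $\p'\neq \p^*$ is nearly cancelled by the fact that $f(\p')$ is pulled partway towards $\p'$, leaving strict convexity of $G$ free to push the expected score above $\Score(\p^*,\p^*)=G(\p^*)$.

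Concretely, I would set $f(\p) := \alpha \p + (1-\alpha)\p^*$ for a parameter $\alpha \in (0,1)$ to be chosen later. This $f$ sends $\Pset$ into $\Pset$ because each $f(\p)$ is a convex combination of two points of $\Pset$; it is $\alpha$-Lipschitz in every norm, in particular the Euclidean one; and its unique fixed point is $\p^*$ since the only solution of $(1-\alpha)(\p-\p^*)=0$ is $\p=\p^*$. Using $f(\p')-\p' = -(1-\alpha)(\p'-\p^*)$, the one-line computation
\begin{equation*}
\Score(\p', f(\p')) - \Score(\p^*, \p^*) \;=\; \bigl[G(\p') - G(\p^*)\bigr] \;-\; (1-\alpha)\,g(\p')^\top(\p'-\p^*)
\end{equation*}
shows that the left-hand side tends to $G(\p') - G(\p^*)$ as $\alpha \to 1^-$, provided $g(\p')$ is finite.

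It therefore suffices to exhibit some $\p' \in \interior{\Pset}$ with $G(\p') > G(\p^*)$. For any nonzero $\vec{v}\in\TPset$ and $\epsilon>0$ small enough that $\p^* \pm \epsilon\vec{v} \in \interior{\Pset}$, strict convexity of $G$ gives $G(\p^*) < \tfrac{1}{2}\bigl(G(\p^* + \epsilon\vec{v}) + G(\p^* - \epsilon\vec{v})\bigr)$, so at least one of the two neighbours has strictly larger $G$-value; I take that neighbour as $\p'$. The only real technical obstacle is to ensure that $g(\p')$ is finite so that the $(1-\alpha)g(\p')^\top(\p'-\p^*)$ term stays bounded as $\alpha \to 1^-$; this is automatic, since $\p'$ lies in $\interior{\Pset}$ and any proper convex function admits a finite subgradient at every interior point of its domain, and we may further take $g(\p')\in\TPset$ as per the paper's convention. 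Choosing $\alpha$ sufficiently close to $1$ then makes the displayed difference strictly positive, yielding the desired $\p'$ and establishing the claim.
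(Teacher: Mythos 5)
Your proposal is correct and follows essentially the same route as the paper's proof: both use the near-identity contraction towards $\p^*$ (you write $f(\p)=\alpha\p+(1-\alpha)\p^*$, the paper writes $f_\alpha(\p)=(1-\alpha)\p+\alpha\p^*$, a trivial reparametrization) and lean on strict convexity of $G$ to find a nearby interior $\p'$ with $G(\p')>G(\p^*)$. The only cosmetic difference is that you write out the difference $\Score(\p',f(\p'))-\Score(\p^*,\p^*)=\bigl[G(\p')-G(\p^*)\bigr]-(1-\alpha)\,g(\p')^\top(\p'-\p^*)$ explicitly and take $\alpha\to 1^-$, whereas the paper phrases the same estimate as a continuity-in-$\alpha$ argument starting from the identity case; you also usefully make explicit the finiteness of $g(\p')$ at interior points, which the paper's continuity step relies on implicitly.
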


\co{[LOW PRIORITY:] There are some interesting variants of this result. For example, I think for all but one $\p^*$, you can get the conclusion for arbitrarily small $L_f$. Whereas the current proof hinges on  $f\approx \mathrm{id}$.


\begin{proposition}
    Let $S$ be a strictly proper scoring rule and $L_f>0$. Then for almost all (measure $1$ of) points $\p^*\in \interior{\Pset}$ 
    there exists a function $f$ with Lipschitz constant $L_f$ s.t.\ $\p^*$ is the unique fixed point of $f$ and there exists $\p'\neq \p^*$ with $S(\p',f(\p'))>S(\p^*,f(\p^*))$. That is, the unique fixed point of $f$ is not performatively optimal.
\end{proposition}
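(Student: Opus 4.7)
The plan is to construct an $f$ that is close to the identity, so that the performative objective $\Score(\p,f(\p))$ closely tracks $G(\p)$, where $G$ is the strictly convex function from \Cref{theorem:gneiting-raftery}. Concretely, for a parameter $\alpha\in(0,1)$ to be chosen near $1$, I take the affine contraction
\[ f(\p)\defeq (1-\alpha)\p^*+\alpha\p. \]
Because $\p^*\in\interior{\Pset}$, this is a convex combination for each $\p\in\Pset$ and hence maps $\Pset$ into itself; it is Lipschitz with constant $\alpha<1$; and since $f(\p)-\p=(\alpha-1)(\p-\p^*)$, the only fixed point is $\p^*$. So $f$ satisfies all the required structural properties for every $\alpha\in(0,1)$.

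Writing $\Score(\p,\q)=G(\p)+g(\p)^\top(\q-\p)$ via \Cref{theorem:gneiting-raftery} and substituting $\q=f(\p)$ gives
\[ \Score(\p,f(\p))=G(\p)-(1-\alpha)\,g(\p)^\top(\p-\p^*). \]
In particular, $\Score(\p^*,f(\p^*))=G(\p^*)$. So the task reduces to exhibiting a single point $\p'\ne\p^*$ at which the right-hand side exceeds $G(\p^*)$ for some $\alpha<1$.

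I will find $\p'\in\interior{\Pset}$ with $G(\p')>G(\p^*)$; since $\p'$ is interior, $g(\p')$ is finite (a convex function that is finite on an open set has a nonempty, bounded subdifferential at each interior point), and so $\Score(\p',f(\p'))\to G(\p')>G(\p^*)$ as $\alpha\uparrow 1$. To produce $\p'$, pick any nonzero $\Tv\in\TPset$ and consider $\phi(t)\defeq G(\p^*+t\Tv)$, which is defined on a neighborhood of $0$ because $\p^*$ is interior. Strict convexity of $G$ yields $\tfrac12\phi(-\epsilon)+\tfrac12\phi(\epsilon)>\phi(0)$ for every small $\epsilon>0$, so at least one of $\p^*\pm\epsilon\Tv$ satisfies $G>G(\p^*)$; take that point as $\p'$. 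Then choose $\alpha$ close enough to $1$ that $(1-\alpha)|g(\p')^\top(\p'-\p^*)|<G(\p')-G(\p^*)$, which delivers $\Score(\p',f(\p'))>\Score(\p^*,f(\p^*))$, as desired.

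The main obstacle is the convex-analytic step of showing that a strictly convex $G$ must strictly exceed $G(\p^*)$ somewhere arbitrarily close to an interior $\p^*$; this is a clean consequence of strict convexity applied along a line through $\p^*$ in $\TPset$, and working along such a line keeps us in $\interior{\Pset}$ where subgradients are automatically finite. Note that the construction also deals gracefully with nondifferentiable $S$: the argument never demands a unique gradient, only the specific subgradient $g$ furnished by \Cref{theorem:gneiting-raftery}.
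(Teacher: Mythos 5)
Your construction (the affine contraction $f(\p)=\p^*+\alpha(\p-\p^*)$) and the reduction via the Gneiting--Raftery form are the right starting point, and everything you write is correct as far as it goes. But you have proved \Cref{prop:non-fixed-point-optimal}, not the statement in question. In \Cref{prop:non-fixed-point-optimal} the existential quantifier is over the Lipschitz constant (``there exists $f$ with \emph{some} $L_f<1$''), so sending $\alpha\uparrow 1$ is legitimate. Here $L_f>0$ is given first, and may be arbitrarily small; your $f$ has Lipschitz constant $\alpha$, so you are required to set $\alpha=L_f$. Your final step --- fix $\p'$, then ``choose $\alpha$ close enough to $1$ that $(1-\alpha)\,|g(\p')^\top(\p'-\p^*)|<G(\p')-G(\p^*)$'' --- is exactly what you cannot do. And for fixed small $\alpha=L_f$ that inequality may genuinely fail: by the subgradient inequality at $\p'$ one always has $G(\p')-G(\p^*)<g(\p')^\top(\p'-\p^*)$, so the ratio $\tfrac{G(\p')-G(\p^*)}{g(\p')^\top(\p'-\p^*)}$ lies in $(0,1)$, and an arbitrarily chosen $\p'$ gives no lower bound that beats $1-L_f$.

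The resource you have not used is the ``almost all'' hedge, and that is a tip-off: a proof that works for literally every interior $\p^*$ but only for $\alpha$ near $1$ is proving the wrong theorem. The repair is to keep $\alpha=L_f$ fixed and send $\p'\to\p^*$ instead, exploiting $g(\p^*)\ne 0$. Restrict to $\p^*$ with $g(\p^*)\ne 0$ (strict convexity of $G$ makes the excluded set a single point) and with $g$ continuous at $\p^*$ (a.e.\ true for convex $G$). Take $\p'-\p^*=t\,g(\p^*)/\|g(\p^*)\|$ with $t\downarrow 0$. Then $g(\p^*)^\top(\p'-\p^*)=t\|g(\p^*)\|$, so automatically $G(\p')>G(\p^*)$, and both $G(\p')-G(\p^*)$ and $g(\p')^\top(\p'-\p^*)$ equal $t\|g(\p^*)\|+o(t)$; the ratio therefore tends to $1$, and for $t$ small enough it exceeds $1-L_f$ for any prescribed $L_f>0$. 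So the directional choice and the limit in $\p'$ (not in $\alpha$) are the missing ideas, and the ``almost all'' restriction is precisely what licenses them.
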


\begin{proof}
    Let $S(\p,\q)=g(\p)(\q-\p)+G(\p)$.
    Consider specifically $p^*$ s.t.\
    \begin{itemize}
        \item $g(p^*) \neq 0$; and
        \item $g$ is locally Lipschitz continuous at $p^*$.
    \end{itemize}
    Since $G$ is convex, there is only one point on $\Pset$ s.t.\ doesn't satisfy the first point.
    TODO: why is the second true?
    I think we actually don't need the second thing necessarily. I think we could use Alexandrov's theorem, which states that $g$ is differentiable almost everywhere. 

    Now consider linear $f$ of the form $f(\p) = \p^* + \alpha (\p -\p^*)$ for $\alpha<1$. Note that $f$'s unique fixed point is $p^*$ and that $f$ has Lipschitz constant $\alpha$. 
    Now note that for any $p\neq p^*$
    \begin{eqnarray*}
        && S(p,f(p))-S(p^*,f(p^*))\\
        &=& g(p)(f(p)-p) + G(p) - G(p^*)\\
        &> & g(p)(f(p)-p) + g(p^*)(p-p^*)\\
        &=& g(p)(\alpha (p-p^*)) + g(p^*)(1-\alpha)(p^*-p),
    \end{eqnarray*}
    where the last line applies the subgradient inequality $G(p^*)+g(p^*)(p-p^*)<G(p)$.

    Now consider $p$ s.t.\ $p-p^*$ is in the direction of $g(p^*)$. Then $g(p)(\alpha (p-p^*))=\alpha \lVert g(p) \rVert \lVert p-p^* \rVert$. Further, because $G$ is continuously differentiable at $p^*$, there is a $K$ s.t.\ for small enough $\lVert p-p^* \rVert$ we have that $\lVert g(p)-g(p^*) \rVert\leq K \lVert p-p^* \rVert$. [TODO: actually this assumes that $g$ is locally Lipschitz continuous. Right?]

%
%
%
\end{proof}

}

Note that since the function 
\(f\) has Lipschitz constant strictly smaller than \(1\), it represents a world that \enquote{dampens} the influence of the prediction, leading to a unique fixed point by Banach's fixed point theorem. It is interesting that the expert still prefers to make a prediction that is not a fixed point.

The above result raises the question whether a situation where fixed points are suboptimal is a niche counterexample or whether it is common. We show that under some relatively mild assumptions, the optimal prediction is almost surely not a fixed point.
 The intuition behind this result is that if a prediction \(\p\) is an interior point and optimal, then \(\nabla_{\p}(\Score(\p,f(\p)))=0\). Using the Gneiting and Raftery characterization, we can show that this is a knife-edge case in which \(g(\p)^\top Df(\p)=0\). Given sufficiently continuous distributions, this happens with probability \(0\). The conditions on the stochastic field \(\{F(\p)\}_{\p\in\interior{\Pset}}\) ensure this continuity, i.e., that the distributions over \(f\) as well as \(Df(\p)\) do not assign positive probability to any single point or subspace, hence almost never sampling the knife edge case. The condition would hold, e.g., for a Gaussian process with smooth kernel and mean functions (see \Cref{ex:gaussian-process} in \Cref{appendix-proof-of-theorem-8}).

\begin{restatable}[]{theorem}{fprare}\label{prop:fixed-points-optimal-reports-are-rare}
Let \(\Score\) be a twice differentiable strictly proper scoring rule. Let \(\mathcal{F}:=\{F(\p)\}_{\p\in\interior{\Pset}}\) be a stochastic field with values in \(\Pset\) and let \(Y(\p,\Tv):=(\Pi_{n-1} F(\p), \Pi_{n-1}\partial_{\Tv}F(\p))\) for \(\p\in \interior{\Pset}\) and \(\Tv\in \TPset\cap S^{n-1}\). Assume that
    \begin{itemize}[nolistsep]
    \item the sample paths \(\p\rightsquigarrow F(\p)\) are twice continuously differentiable
    \item for each \(\p\in \interior{\Pset}\) and \(\Tv\in \TPset\cap S^{n-1}\), the random vector \(Y(\p,\Tv)\) has a joint density \(h_{Y(\p,\Tv)}\) and there exists a constant \(C\) such that $h_{Y(\p,\Tv)}\leq C$ for all \(\p\in\Pset,\Tv\in S^{n-1}\cap \TPset\). 
    \end{itemize}
    Then, almost surely, there is no point \(\p\in \interior{\Pset}\) such that \(\p\in\argmax_{\p'}\Score(\p',F(\p'))\) and \(F(\p)=\p\).
\end{restatable}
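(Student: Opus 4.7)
The plan is to derive a first-order necessary condition for an interior maximizer to also be a fixed point, and then show via a covering argument combined with the bounded-density hypothesis that this condition fails almost surely. Using the Gneiting--Raftery representation $\Score(\p',\q) = G(\p') + g(\p')^\top(\q-\p')$, I would differentiate $\p' \mapsto \Score(\p', F(\p'))$ in a direction $\Tv \in \TPset$; the two $g(\p')^\top \Tv$ contributions from the product rule cancel, leaving $(Dg(\p')\Tv)^\top(F(\p') - \p') + g(\p')^\top \partial_\Tv F(\p')$. If $\p \in \interior{\Pset}$ is both an optimum and a fixed point of $F$, the first summand vanishes, so the FOC reduces to $g(\p)^\top \partial_\Tv F(\p) = 0$ for every $\Tv \in \TPset$. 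In particular, for any fixed direction $\Tw \in \TPset \cap S^{n-1}$ the pair $\bigl(F(\p) - \p,\, g(\p)^\top \partial_\Tw F(\p)\bigr)$ vanishes, and it suffices to rule out this joint event.

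Strict convexity of $G$ together with twice-differentiability of $\Score$ makes $Dg = D^2 G$ positive definite on $\TPset$, so $g$ has at most one zero on $\interior{\Pset}$, at some point $\p^0$, and $\norm{g(\p)} \gtrsim \norm{\p - \p^0}$ nearby. The point $\p^0$ is handled separately: $\Prob(F(\p^0) = \p^0) = 0$ follows directly from the bounded density of $\Pi_{n-1} F(\p^0)$. For every $\delta > 0$, I would restrict attention to a compact subset $K_\delta \subset \interior{\Pset} \setminus B(\p^0, \delta)$ on which $\norm{g}$ is uniformly bounded below by some $c_\delta > 0$, and truncate on the event $E_R = \{\sup_{\p \in K_\delta} \norm{\partial_\Tw F(\p)} \leq R\}$, whose probability tends to $1$ as $R \to \infty$ by $C^2$ regularity of the sample paths.

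On $E_R$, I would cover $K_\delta$ by $O(\epsilon^{-(n-1)})$ balls of radius $\epsilon$ with centers $\p_k$. If any $\p \in B(\p_k, \epsilon) \cap K_\delta$ satisfied both conditions, a $C^2$ Taylor expansion gives $\norm{\Pi_{n-1}(F(\p_k) - \p_k)} = O(\epsilon)$ and $|g(\p_k)^\top \Pi_{n-1} \partial_\Tw F(\p_k)| = O(\epsilon)$. The density bound $h_{Y(\p_k, \Tw)} \leq C$ then controls the probability at each $\p_k$ by $C$ times the volume of an $O(\epsilon)$-ball in $\TPset$ intersected (in the second factor) with a slab of width $O(\epsilon / c_\delta)$ and perpendicular extent $R$, which is $O(\epsilon^n R^{n-2} / c_\delta)$ uniformly in $k$. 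Union-bounding over the $O(\epsilon^{-(n-1)})$ centers gives $O(\epsilon R^{n-2} / c_\delta) \to 0$; letting $R \to \infty$, then $\delta \to 0$, and combining with the zero-probability event at $\p^0$ completes the argument.

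The main obstacle is that the slab-width factor $\norm{g(\p_k)}^{-1}$ blows up near $\p^0$, preventing a uniform global estimate; the fix is to isolate $\p^0$ via $\Prob(F(\p^0) = \p^0) = 0$ and to work on compact sets $K_\delta$ where $\norm{g}$ is bounded below, which is exactly where the strict convexity plus twice-differentiability of $\Score$ is used. A secondary technicality is the absence of an a priori bound on $\partial_\Tw F$, which is handled by the truncation at level $R$.
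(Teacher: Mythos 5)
The proposal is correct and takes a genuinely different route from the paper. Both start from the same first-order condition: differentiating $\Score(\p',F(\p'))$ and using $F(\p)=\p$ leaves $DF(\p)^\top g(\p)=0$ (equivalently $g(\p)^\top\partial_{\Tv}F(\p)=0$ for all $\Tv\in\TPset$), and both isolate the unique $\p^0$ with $g(\p^0)=0$ via $\Prob(F(\p^0)=\p^0)=0$. Where you diverge is in how the remaining condition is exploited. The paper passes from $DF(\p)^\top g(\p)=0$ (when $g(\p)\neq 0$) to ``$DF(\p)|_{\TPset}$ is singular,'' which forces it to search over an extra $(n-2)$-parameter family of kernel directions, and it then invokes the Bulinskaya-type Proposition~6.11 of Azaïs and Wschebor on the field $(\p,\Tv)\mapsto(\Pi_{n-1}F(\p)-\Pi_{n-1}\p,\Pi_{n-1}DF(\p)\Tv)$ over the $(2n-3)$-dimensional set $\Pset\times(\TPset\cap S^{n-1})$. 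You instead observe that it suffices to show the \emph{weaker} consequence $g(\p)^\top\partial_{\Tw}F(\p)=0$ fails at fixed points for a single fixed $\Tw$, which combined with the $(n-1)$ equations $F(\p)=\p$ gives $n$ scalar constraints in only $n-1$ unknowns, and you kill this over-determined system directly with an $\epsilon$-covering of a compact $K_\delta$ away from $\p^0$, the bounded joint density of $Y(\p_k,\Tw)$, and a truncation $E_R$. Your route is more elementary (no external random-field lemma) and avoids the auxiliary direction parameter entirely, at the cost of redoing a covering estimate that the cited proposition packages; it also uses a strictly weaker instance of the density hypothesis (a single $\Tw$ rather than all $\Tv$).

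One point to tighten: your Taylor step at the center $\p_k$ for the scalar $g(\cdot)^\top\partial_{\Tw}F(\cdot)$ needs a bound on $\sup_{K_\delta}\|D^2F\|$ as well, since $\partial_{\Tw}F(\p_k)-\partial_{\Tw}F(\p)$ is controlled by the second derivative; the truncation event $E_R$ as you wrote it only controls $\|\partial_{\Tw}F\|$. This is easily fixed by extending $E_R$ to also bound $\sup_{K_\delta}\|D^2F\|$, which is legitimate by the $C^2$ sample-path assumption and compactness of $K_\delta$. Also, the expression $g(\p_k)^\top\Pi_{n-1}\partial_{\Tw}F(\p_k)$ is dimensionally off---you mean the linear functional on $\mathbb{R}^{n-1}$ obtained from $g(\p_k)$ via the isomorphism $\Pi_{n-1}|_{\TPset}$---but this is only notational.
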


\begin{figure*}[t]
\centering
\begin{subfigure}[t]{.45\textwidth}
    \centering
    \includegraphics[width=\linewidth]{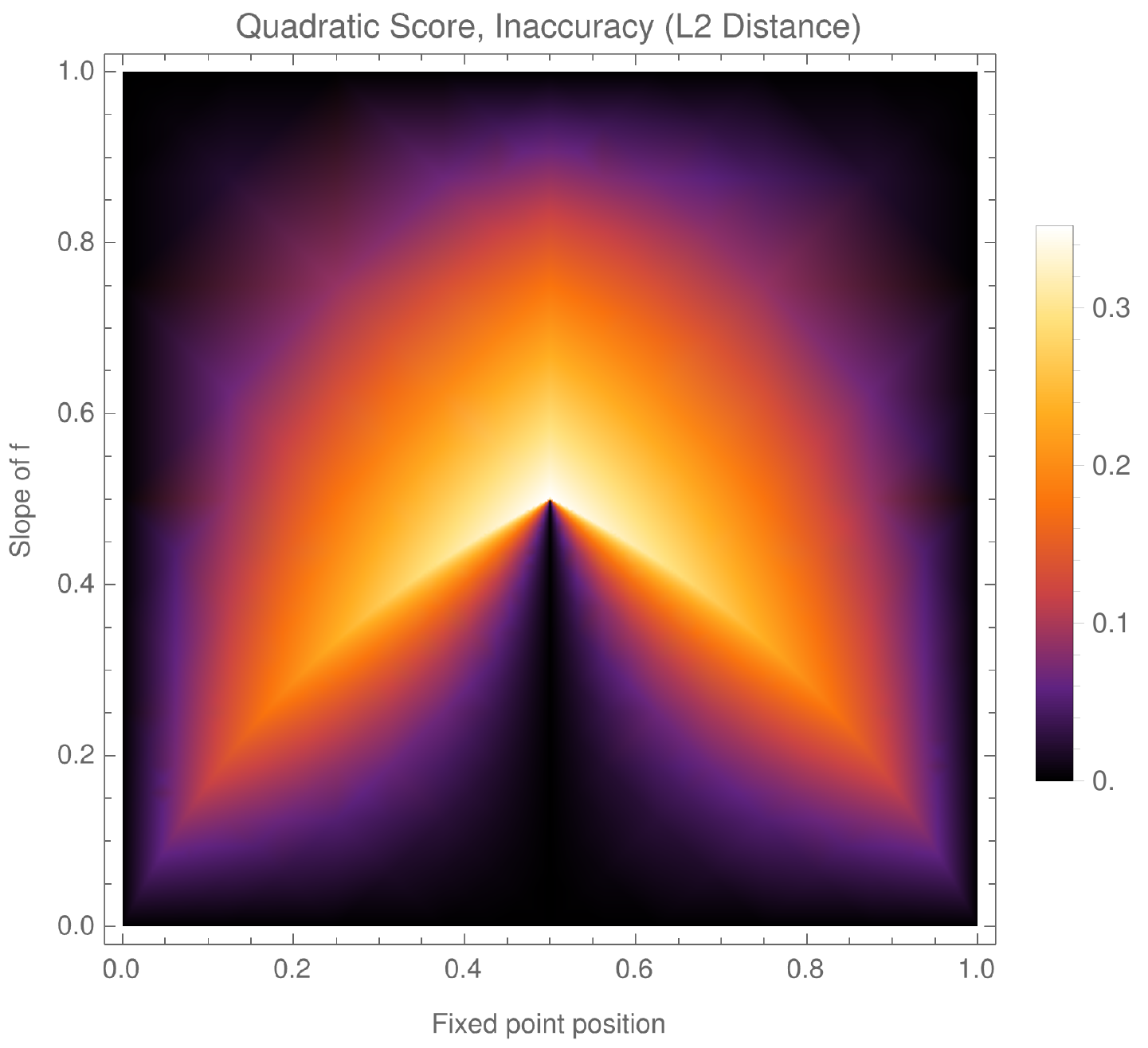}

\end{subfigure}%
\begin{subfigure}[t]{.45\textwidth}
    \centering
    \includegraphics[width=\linewidth]{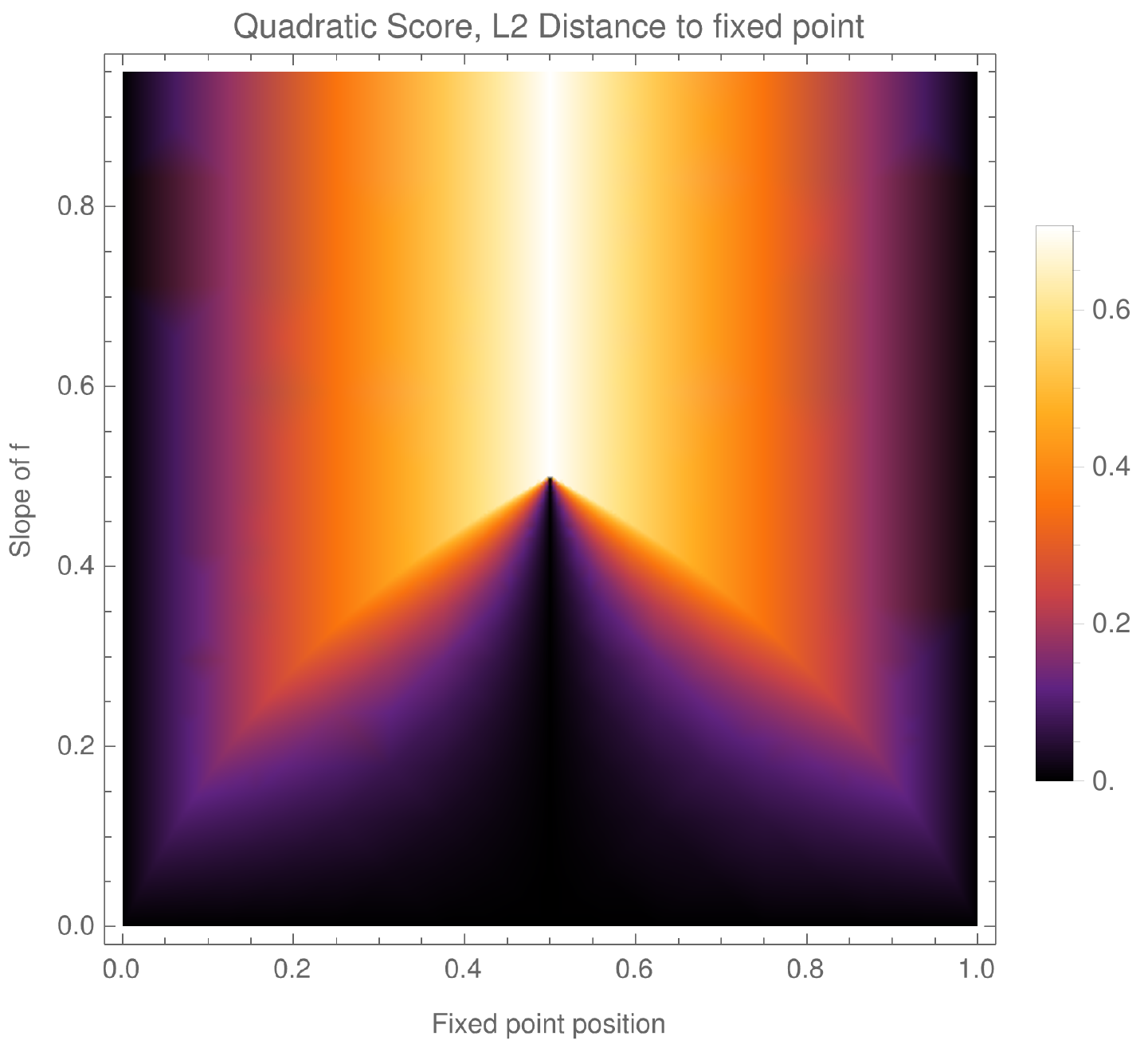}
\end{subfigure}
\caption{Heatmap of L2 distance of optimal prediction $\p$ to true probability distribution $f(\p)$ (left) and to the fixed point $\p^*$ (right), depending on fixed point position $p_1^*$ and $\alpha$ (slope of \(f\)), for the quadratic scoring rule.
}
\label{fig:density-plot-brier-l2-inaccuracy}
\label{fig:density-plot-brier-l2-disttofp}
\end{figure*}

\section{Bounds on the deviation from fixed points}
\label{bounds-on-deviation}

\co{LOW PRIORITY: Can we also get lower bounds with the same ideas? (Consider the derivative in the direction that moves $f$ in the direction of $g(p)$)?) (How would that relate to \Cref{thm:need-exponential-new}?) I guess they'd be less interesting, because they're restricted to differentiable scoring rules and probably they can be ``hacked'' (made very small without actually guaranteeing much).}

In the previous section, we have shown that performatively optimal predictions are generally not fixed points, i.e., they inaccurately represent the expert's beliefs. But \textit{how} inaccurate should we expect predictions to be, and what properties of $S$ and $f$ determine this inaccuracy?
Assuming differentiability of \(f\) and \(\Score\), this section provides upper bounds for the inaccuracy of optimal predictions \(\p\) (i.e., $\Vert \p-f(\p) \Vert$) and their distance from fixed points \(\p^*\) (i.e., \(\Vert \p-\p^*\Vert \)). Note that, while the latter has a direct analogue in the performative prediction literature \citep[][Theorem 4.3]{perdomo2020performative}, evaluating the inaccuracy of predictions only makes sense in our context where parameters are probability distributions.


For our bounds we will use the following notation. We use \(\Vert A\Vert_{\mathrm{op}}=\max_{\Tv\in \TPset}\frac{\Vert A\Tv\Vert}{\Vert \Tv\Vert}\) for the operator norm of $A$ on the tangent space. It is equal to $A$'s largest singular value when seen as an automorphism on the tangent space. We use $A\vert_{\TPset}\succeq \gamma$ to denote that \(\Tv^\top (A- \gamma\Id)\Tv\geq0\) for all \(\Tv\in\TPset\). If \(A\) is symmetric, this is equivalent to saying that the smallest eigenvalue of $A$ on the tangent space is at least $\gamma$. 
Further, note that if $g$ is a subderivative of $G$ and $\Vert g(\p)\Vert <L_G$ for all $\p\in\Pset$, then $L_G$ is a Lipschitz constant of $G$. Similarly, if $\Vert Df(\p)\Vert_{\mathrm{op}} \leq L_f$ for all $\p\in\Pset$, then $L_f$ is a Lipschitz constant of $f$.

\begin{restatable}{theorem}{inaccuracybound} \label{theorem:Caspar-approx-fix-point}
    Let \(S\) be a strictly proper scoring rule, and let \(G,g\) as in the Gneiting and Raftery characterization (\Cref{theorem:gneiting-raftery}). 
    Let \(\p\in\Pset\) and assume \(f,G,g\) are differentiable at \(\p\). Assume \(Dg(\p)|_{\TPset}\succeq\gamma_{\p}\) for some \(\gamma_\p>0\).
    Then whenever $\p$ is a performatively optimal report,
    \begin{equation*}
    \Vert \p - f(\p) \Vert \leq\frac{ \Vert  Df(\p)\Vert_{\mathrm{op}}\Vert g(\p)\Vert}{\gamma_{\p}}.\end{equation*}
    In particular, if $f$ has Lipschitz constant $L_f$, \(G\) has Lipschitz constant \(L_G\), and \(G\) is \(\gamma\)-strongly convex, then we have $\Vert \p - f(\p) \Vert \leq \frac{L_f L_G}{\gamma}$.
\end{restatable}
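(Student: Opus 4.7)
The plan is to write the performative expected score in Gneiting--Raftery form, use the first-order optimality condition at $\p$ in a single well-chosen tangent direction, and then combine strong convexity with the Cauchy--Schwarz inequality.

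First I would set $h(\p) \defeq \Score(\p,f(\p)) = G(\p) + g(\p)^\top(f(\p)-\p)$. Differentiating and using $\nabla G(\p)=g(\p)$, the two $g(\p)$ terms cancel and I obtain, for any $\Tv\in\TPset$,
\begin{equation*}
\partial_{\Tv} h(\p) \;=\; \Tv^\top Dg(\p)\bigl(f(\p)-\p\bigr) \;+\; g(\p)^\top Df(\p)\,\Tv,
\end{equation*}
using that $Dg(\p)$ (the Hessian of $G$) is symmetric. Since $f(\p),\p\in\Pset$, the vector $\vec{u}\defeq f(\p)-\p$ lies in $\TPset$, so moving from $\p$ in direction $\vec{u}$ is feasible on the simplex. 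Performative optimality then forces $\partial_{\vec{u}} h(\p)\leq 0$ (and equality in the interior case; the inequality suffices either way).

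Next I would plug $\Tv=\vec{u}$ into the displayed expression to get
\begin{equation*}
\vec{u}^\top Dg(\p)\,\vec{u} \;\leq\; -g(\p)^\top Df(\p)\,\vec{u}.
\end{equation*}
Applying the hypothesis $Dg(\p)|_{\TPset}\succeq\gamma_\p$ to the left-hand side gives $\vec{u}^\top Dg(\p)\,\vec{u}\geq\gamma_\p\Vert\vec{u}\Vert^2$. For the right-hand side I use Cauchy--Schwarz together with the operator-norm bound $\Vert Df(\p)\vec{u}\Vert\leq \Vert Df(\p)\Vert_{\mathrm{op}}\Vert\vec{u}\Vert$, which is valid because $\vec{u}\in\TPset$. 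Chaining these inequalities yields
\begin{equation*}
\gamma_\p\Vert\vec{u}\Vert^2 \;\leq\; \Vert g(\p)\Vert\,\Vert Df(\p)\Vert_{\mathrm{op}}\,\Vert\vec{u}\Vert,
\end{equation*}
and dividing by $\Vert\vec{u}\Vert$ (the case $\vec{u}=0$ being trivial) gives the main bound. The specialized bound then follows immediately from the fact, noted in the preamble to the theorem, that $\Vert g(\p)\Vert\leq L_G$ and $\Vert Df(\p)\Vert_{\mathrm{op}}\leq L_f$, together with $\gamma$-strong convexity giving $\gamma_\p\geq\gamma$.

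The main obstacle I foresee is being careful about optimality on the simplex: we only have the first-order condition in feasible directions, not an unconstrained gradient-zero condition. The key observation that resolves this cleanly is that $f(\p)-\p$ is itself a feasible direction at $\p$ (since $f(\p)\in\Pset$), so a single one-sided directional derivative at $\p$ already captures everything we need, and we never have to project gradients or worry about boundary behavior separately.
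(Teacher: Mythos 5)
Your proposal is correct and follows essentially the same route as the paper's own proof: differentiate the Gneiting--Raftery form of $\Score(\p,f(\p))$, observe that the $g(\p)$ terms cancel, take the directional derivative in the feasible direction $f(\p)-\p\in\TPset$, impose the first-order optimality condition $\leq 0$, and then chain the strong-convexity lower bound with Cauchy--Schwarz and the operator-norm estimate before dividing by $\Vert f(\p)-\p\Vert$. The paper factors the gradient computation into a separate lemma and you inline it, but the key idea---that $f(\p)-\p$ is itself a feasible direction, so a single one-sided directional derivative suffices without any gradient projection---is exactly what the paper does.
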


\co{LOW PRIORITY: Maybe say something about whether/when the bound is tight. I think the main bound is tight in the two-outcome case but not otherwise.}
%
%
%

In the case where \(f\) has Lipschitz constant \(L_f<1\), we can use the above results to derive a bound on how far the optimal report is from the (by Banach's fixed point theorem unique) fixed point.

\begin{restatable}{theorem}{bounddisttofp}\label{thm:distance-to-fp}
    Same assumptions as \Cref{theorem:Caspar-approx-fix-point}. Assume further that $f$ has Lipschitz constant $L_f<1$. Let $\p^*$ be the unique fixed point of $f$. Then for the performatively optimal report $\p$,
    \begin{equation*}
        \Vert \p-\p^*\Vert \leq \frac{\Vert g(\p)\Vert\Vert Df(\p) \Vert_{\mathrm{op}}}{(1-L_f)\gamma_\p} \leq \frac{L_fL_G}{(1-L_f)\gamma_\p}.
    \end{equation*}
\end{restatable}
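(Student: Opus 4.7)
The plan is to combine the inaccuracy bound from Theorem~\ref{theorem:Caspar-approx-fix-point} with a standard contraction-mapping argument. Since $f$ is a contraction with constant $L_f<1$ and admits a unique fixed point $\p^*$, the distance $\Vert \p - \p^*\Vert$ should be controllable by the \enquote{residual} $\Vert \p - f(\p)\Vert$, for which we already have a bound.

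First I would write the triangle inequality
\begin{equation*}
\Vert \p - \p^* \Vert \;\leq\; \Vert \p - f(\p) \Vert + \Vert f(\p) - \p^* \Vert,
\end{equation*}
and then use that $\p^* = f(\p^*)$ together with the Lipschitz property of $f$ to bound the second term by $L_f \Vert \p - \p^*\Vert$. Rearranging gives
\begin{equation*}
(1 - L_f)\,\Vert \p - \p^*\Vert \;\leq\; \Vert \p - f(\p)\Vert,
\end{equation*}
which is valid precisely because $L_f<1$.

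Next I would invoke Theorem~\ref{theorem:Caspar-approx-fix-point} to substitute the right-hand side by $\Vert g(\p)\Vert\,\Vert Df(\p)\Vert_{\mathrm{op}} / \gamma_\p$, dividing by $(1-L_f)$ to obtain the first claimed inequality. For the second, cruder bound, I would use that $\Vert Df(\p)\Vert_{\mathrm{op}} \leq L_f$ everywhere (since $L_f$ is a Lipschitz constant of $f$) and $\Vert g(\p)\Vert \leq L_G$ (since $L_G$ is a Lipschitz constant of $G$, and $g$ is a subgradient of $G$), exactly as noted in the preamble to Theorem~\ref{theorem:Caspar-approx-fix-point}.

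There is essentially no obstacle here: the argument is a one-line application of the triangle inequality plus the contraction property, followed by a substitution. The only thing to be a bit careful about is that the Lipschitz constants $L_f$ and $L_G$ really do dominate $\Vert Df(\p)\Vert_{\mathrm{op}}$ and $\Vert g(\p)\Vert$ respectively, which the paper has already pointed out just before stating Theorem~\ref{theorem:Caspar-approx-fix-point}.
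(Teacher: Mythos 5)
Your proposal is correct and follows essentially the same route as the paper's own proof: the triangle inequality $\Vert \p-\p^*\Vert \leq \Vert \p-f(\p)\Vert + \Vert f(\p)-f(\p^*)\Vert$, the Lipschitz contraction to absorb the second term, rearrangement using $L_f<1$, and then substituting the inaccuracy bound from Theorem~\ref{theorem:Caspar-approx-fix-point}. The final replacement of $\Vert g(\p)\Vert$ by $L_G$ is also the intended step, so nothing is missing.
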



Note that the assumption that $L_f<1$ ensures that $f$’s fixed point is unique by Banach’s fixed point theorem. Without $L_f<1$, no trivial bound holds, as we show in \Cref{prop:Lf1-no-non-trivial-bound} in \Cref{appendix:Lf1-no-non-trivial-bound}.

\co{[LOW PRIORITY:] I think it would be good to show in the appendix that even for the binary case we don't get any non-trivial guarantees for $L_f=1$, i.e., worst case distance to fixed point might be arbitrarily close to $1$.}

This bound is analogous to a bound in \citep[][Theorem~4.3]{perdomo2020performative}. Our bound differs in that we use Euclidean distance instead of Wasserstein distance to measure the sensitivity of \(f\) to the choice of report. Moreover, assuming a \(L_\ell\)-Lipschitz and \(\gamma\)-strictly convex loss function \(\ell\), their bound depends on the ratio \(\frac{L_\ell}{\gamma}\). We instead bound this distance against the ratio \(\frac{\Vert g(\p)\Vert}{\gamma_\p}\), 
which will allow us to minimize the bound in the two-outcome case by using exponential functions (\Cref{theorem:two-outcomes-arbitrarily-good-bounds}). This would not be possible when assuming \(\gamma\)-strict convexity, since there exist no functions that globally make the ratio \(\frac{L_\ell}{\gamma}\) arbitrarily small. \cite{perdomo2020performative} show that their bound can be made small by regularizing the loss function, but this would be undesirable in our setting, since regularized scoring rules would be improper and thus cease to incentivize honest reports even for constant \(f\).

\begin{example}[Bound for the quadratic scoring rule]
\label{example:bounds-for-brier}
    Consider the quadratic scoring rule $S(\p,i)=2p_i-\Vert\p\Vert_2$. Note that we can represent this in Gneiting and Raftery's characterization with \(G(\p)=\Vert \p\Vert^2\) and $g(\p)=2\p-\frac{2}{n}\mathbf{1}$. Thus, $Dg(\p)=2I$, where $I$ is the identity matrix. Hence \(Dg(\p)\succ 2\). Further, $\Vert g(\p)\Vert _2=2\Vert \p-\frac{1}{n}\mathbf{1}\Vert $.
    Thus, for $f$ with Lipschitz constant $L_f$, \Cref{theorem:Caspar-approx-fix-point} implies that for the optimal report $\p$ we have that $\Vert f(\p)-\p\Vert \leq L_f\Vert \p-\frac{1}{n}\mathbf{1}\Vert \leq L_f \sqrt{(n-1)/n}$. If $L_f<1$, then by \Cref{thm:distance-to-fp} we further have $\Vert \p -\p^*\Vert \leq \frac{L_f}{1-L_f}\Vert \p-\frac{1}{n}\mathbf{1}\Vert\leq \frac{L_f}{1-L_f}\sqrt{(n-1)/n}$. 
\end{example}

\section{Approximate fixed-point prediction with the right proper scoring rules?}
\label{sec:approximate-fixed-point-prediction}

The above results show that depending on the scoring rule we can obtain bounds on the accuracy of performatively optimal predictions. Can we make these bounds arbitrarily small by choosing an appropriate scoring rule, e.g., one that makes $\Vert g(\p) \Vert / \gamma_{\p}$ very small at each point? In this section, we show that the answer is yes in the two-outcome case and no in the general case.


\begin{restatable}{theorem}{exponentialthm}\label{theorem:two-outcomes-arbitrarily-good-bounds}
Consider the case of two outcomes, i.e., let $\mathcal{N}=\{1,2 \}$. Let $L_f\in \mathbb{R}$ and $\epsilon>0$. Then there exists a scoring rule $S$ s.t.\ under any $f$ with Lipschitz constant $L_f$, any optimal report $\p$ satisfies $\Vert \p-f(\p)\Vert \leq \epsilon$. If $L_f<1$, then there also exists a scoring rule that additionally ensures that under any $f$ with Lipschitz constant $L_f$, any optimal report satisfies $\Vert \p-\p^*\Vert \leq \epsilon$, where $\p^*$ is the (unique) fixed point of $f$.
\end{restatable}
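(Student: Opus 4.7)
The plan is to apply the bounds of \Cref{theorem:Caspar-approx-fix-point} and \Cref{thm:distance-to-fp} directly and explicitly construct a convex generator $G$ (and hence, via the Gneiting--Raftery representation, a scoring rule $S$) whose induced ratio $\Vert g(\p)\Vert/\gamma_\p$ is uniformly small. In the two-outcome case the tangent space $\mathcal{T}$ is one-dimensional, which turns this into a problem about finding a strictly convex univariate function $\tilde G\colon [0,1]\to \mathbb{R}$ with $|\tilde G'|/\tilde G''$ uniformly small on $[0,1]$. The exponential function is tailor-made for this because its derivative and second derivative differ only by the rate constant.

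First I would reduce to one dimension. For $\p=(p,1-p)$ identify any $f\colon\Pset\to\Pset$ with $\tilde f\colon [0,1]\to[0,1]$ via $f(\p)=(\tilde f(p),1-\tilde f(p))$, and any $G\colon\Pset\to\mathbb{R}$ with $\tilde G(p):=G(p,1-p)$. Using the unit tangent $\vec v=(1,-1)/\sqrt 2$ I would verify, by direct differentiation and the tangent-space normalization convention from the Background, that $\Vert g(\p)\Vert = |\tilde G'(p)|/\sqrt 2$, that the eigenvalue $\gamma_\p$ of $Dg(\p)$ on $\mathcal{T}$ equals $\tilde G''(p)/2$, and that $\Vert Df(\p)\Vert_{\op}=|\tilde f'(p)|$. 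Since $\Vert \p-\p'\Vert=\sqrt 2\,|p-p'|$ on the two-outcome simplex, $f$ being $L_f$-Lipschitz is equivalent to $\tilde f$ being $L_f$-Lipschitz, so $|\tilde f'(p)|\leq L_f$. Substituting into \Cref{theorem:Caspar-approx-fix-point} yields, at any performatively optimal $\p$,
\begin{equation*}
\Vert \p - f(\p)\Vert \;\leq\; L_f\sqrt 2\,\frac{|\tilde G'(p)|}{\tilde G''(p)},
\end{equation*}
and, when $L_f<1$, \Cref{thm:distance-to-fp} gives $\Vert \p-\p^*\Vert \leq L_f\sqrt 2\,|\tilde G'(p)|/((1-L_f)\tilde G''(p))$.

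The key construction is to take $\tilde G(p):=\tfrac{1}{k^2}\bigl(e^{kp}+e^{k(1-p)}\bigr)$ for a parameter $k>0$. This is strictly convex since $\tilde G''(p)=e^{kp}+e^{k(1-p)}>0$ on $[0,1]$. A direct computation gives $\tilde G'(p)=\tfrac{1}{k}(e^{kp}-e^{k(1-p)})$ and hence
\begin{equation*}
\frac{|\tilde G'(p)|}{\tilde G''(p)} \;=\; \frac{1}{k}\,\bigl|\tanh\!\bigl(k(p-\tfrac12)\bigr)\bigr| \;\leq\; \frac{1}{k}.
\end{equation*}
Choosing $k\geq L_f\sqrt 2/\epsilon$ yields $\Vert \p-f(\p)\Vert\leq\epsilon$ for every optimal $\p$, and choosing $k\geq L_f\sqrt 2/((1-L_f)\epsilon)$ yields the analogous bound $\Vert \p-\p^*\Vert\leq\epsilon$ when $L_f<1$. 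The associated scoring rule is read off from the Gneiting--Raftery representation as $S(\p,i)=G(\p)+g(\p)^\top(e_i-\p)$.

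The main obstacle is the choice of $\tilde G$: the uniform bound $|\tilde G'|/\tilde G''\leq 1/k$ cannot be achieved by polynomials (for $\tilde G(p)=p^2$, the ratio equals $p$, which is not uniformly small across $[0,1]$, and rescaling does not help because the ratio is scale-invariant). Exponentials are the natural choice because $e^{kp}$ satisfies $\tilde G''=k\tilde G'$ exactly, and the symmetric combination $e^{kp}+e^{k(1-p)}$ extends this from a bound at one endpoint to a uniform bound on the whole interval while preserving strict convexity. A secondary technicality is handling $f$ that is merely Lipschitz rather than differentiable; this can be patched either by noting $\p\mapsto S(\p,f(\p))$ still admits a first-order optimality analysis via subdifferentials, or by a standard smooth-approximation argument, but the substance of the argument is the one-dimensional ratio bound above.
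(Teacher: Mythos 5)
Your proposal is correct and takes essentially the same approach as the paper: reduce to one dimension, apply Theorems \ref{theorem:Caspar-approx-fix-point} and \ref{thm:distance-to-fp}, and construct an exponential convex generator whose ratio $\Vert g(\p)\Vert/\gamma_\p$ is uniformly $O(1/k)$. The only difference is cosmetic—the paper uses the one-sided generator $G(\p)=\tfrac{2}{K}e^{Kp_1}$ (giving the ratio $\sqrt{2}/K$ exactly), while you use the symmetric $e^{kp}+e^{k(1-p)}$ (giving $\sqrt{2}|\tanh(k(p-\tfrac12))|/k\le\sqrt{2}/k$)—and both yield the same final bound.
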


Note that if there are multiple fixed points, then $S$ still induces preferences between---approximately---predicting these fixed points. In particular, because $S(\p,\p)$ is convex, the performatively optimal fixed point will either be the one that maximizes or the one that minimizes $p_1$ among the fixed points. This may be undesirable as the expert still has a strong incentive other than (though compatible with) honest prediction. We discuss this in more detail in \Cref{preferences-between-fps}.

Can arbitrarily good bounds be achieved with \textit{practical} proper scoring rules? Our proof of \Cref{theorem:two-outcomes-arbitrarily-good-bounds} uses exponential scoring rules with $g(\p)=(e^{L_fp_1/(\sqrt{2}\epsilon)},-e^{L_fp_1/(\sqrt{2}\epsilon})^\top$. For high $K$, this scoring rule seems impractical, because the stakes vary greatly over the interval. For example, $S((\nicefrac{2}{3}+\epsilon,\nicefrac{1}{3}-\epsilon),(\nicefrac{2}{3},\nicefrac{1}{3}))/S((\nicefrac{1}{2}+\epsilon,\nicefrac{1}{2}-\epsilon),(\nicefrac{1}{2},\nicefrac{1}{2}))=e^{L_f/(6\sqrt{2}\epsilon)}$. Hence, as we increase $L_f/\epsilon$, it becomes exponentially more important for the expert to predict accurately near $\nicefrac{2}{3}$ than to predict accurately near $\nicefrac{1}{2}$. In particular, an AI model trained with this scoring rule may be much worse at predicting probabilities near $\nicefrac{1}{2}$ than near $\nicefrac{2}{3}$. Similarly, it is unrealistic to reward a human expert with, say, millions of dollars near $\nicefrac{2}{3}$ and with just a few cents near $\nicefrac{1}{2}$. Unfortunately, it turns out that all possible scoring rules that achieve bound $\epsilon$ under Lipschitz constant $L_f$ have this undesirable property, though the exact bound turns out somewhat complicated.


\co{to do: give this in some kind of big O or similar notation, instead of giving the actual term}
\begin{restatable}{theorem}{needexponential}
\label{thm:need-exponential-new}
Suppose $S$ is a proper scoring rule s.t.\ for some $\epsilon, L_f > 0$ we have that whenever $f$ is $L_f$-Lipschitz, the optimal report $\p$ satisfies $\Vert f(\p) - \p\Vert < \epsilon$. Let $3\epsilon \leq p_l \leq p_h \leq 1-4\epsilon$ and $\delta=\epsilon / (L_f+1)$. Then the ratio of the supremum and infimum over $p_1\in [p_l,p_h]$ of $S((p_1+4\delta,1-p_1-4\delta),(p_1,1-p_1)) - S((p_1,1-p_1),(p_1,1-p_1))$ is at least
\begin{equation*}
    \frac{L_f}{2L_f+6}\left(3\frac{L_f+1}{L_f+3}\right)^{(L_f+1)(p_h-p_l)/(8\epsilon) -5/2}.
\end{equation*}
In particular, for fixed positive $L_f$, this term is exponential in $1/\epsilon$ and for fixed positive $\epsilon$ it is exponential in $L_f$.
\end{restatable}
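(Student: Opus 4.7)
My strategy is to show that the accuracy hypothesis forces the scoring rule's ``stakes'' quantity
\[
    R(p_1) := \lvert S((p_1+4\delta, 1-p_1-4\delta),(p_1, 1-p_1)) - S((p_1, 1-p_1),(p_1, 1-p_1))\rvert \geq 0
\]
to vary by an exponentially large multiplicative factor across $[p_l, p_h]$; the theorem's ratio then is just $\sup R / \inf R$ over that interval, after one takes absolute values (the unsigned quantity inside $R$ is $\leq 0$ by propriety). Invoking the Gneiting--Raftery characterization (\Cref{theorem:gneiting-raftery}) in the binary case gives $\tilde S(p,q) := S((p,1-p),(q,1-q)) = G(p) + G'(p)(q-p)$ for a convex scalar $G$, whence $R(p_1) = G(p_1 + 4\delta) - G(p_1) - 4\delta\, G'(p_1 + 4\delta)$, which by a Taylor-type expansion is of order $(4\delta)^2\, G''(p_1)$. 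So the claim reduces to controlling how much $G''$ (or its discrete analogue) must grow across $[p_l, p_h]$.

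The core step is a local obstruction: for each target $p_0 \in [p_l, p_h]$ I would construct an $L_f$-Lipschitz affine $f(p) = c + L_f p$, with intercept $c$ chosen so that $p_0$ is the performatively optimal report under $f$. The boundary conditions $3\epsilon \leq p_l$ and $p_h \leq 1-4\epsilon$ give just enough slack for the image of $f$ to stay inside $[0,1]$. Combining the accuracy guarantee $\lVert \p - f(\p)\rVert < \epsilon$ at the optimum with the first-order condition $L_f G'(p_0) + G''(p_0)(f(p_0) - p_0) = 0$ (or its subgradient analogue when $G$ is non-smooth) yields a local growth condition on $G''/\lvert G'\rvert$. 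Sharpening this to a discrete two-point comparison between grid points spaced $w := 8\epsilon/(L_f+1) = 2\cdot 4\delta$ apart---by comparing the expert's expected scores at the two candidate reports $p_0$ and $p_0 + w$ under a carefully tailored adversarial $f$---produces the multiplicative step inequality
\[
    R(p_0 + w) \geq \frac{3(L_f+1)}{L_f+3}\, R(p_0).
\]

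Placing a uniform grid $q_i := p_l + i w$ for $i = 0, \ldots, k$ with $k \geq (L_f+1)(p_h - p_l)/(8\epsilon) - 1$ and telescoping this step inequality yields
\[
    \frac{\sup_{p_1} R(p_1)}{\inf_{p_1} R(p_1)} \geq \frac{R(q_k)}{R(q_0)} \geq \left(\frac{3(L_f+1)}{L_f+3}\right)^{k}.
\]
The corrections in the theorem's stated bound---the $-5/2$ in the exponent and the prefactor $L_f/(2L_f + 6)$---absorb (i) the buffer $4\delta$ needed at the right endpoint so that $R$ is well-defined, (ii) slack from converting the pointwise growth on $G''/\lvert G'\rvert$ into an $R$-ratio, and (iii) a constant loss from the subgradient-based argument in the non-smooth case. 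The two asymptotic claims then follow because $3(L_f+1)/(L_f+3) > 1$ for all $L_f > 0$: for fixed $L_f$ the exponent grows like $1/\epsilon$, while for fixed $\epsilon$ the exponent grows like $L_f$ and the base ranges over $(1, 3)$, so both factors contribute to exponential dependence on $L_f$.

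The main obstacle will be the local step, specifically nailing down the precise constant $3(L_f+1)/(L_f+3)$. The naive smooth route---integrating the pointwise inequality $G''/\lvert G'\rvert \gtrsim L_f/\epsilon$ obtained from the first-order condition---gives exponential growth at an even faster rate but requires differentiability of $G$. The stated constant instead arises from a robust discrete comparison in which the adversarial $f$ is chosen so that the expert is \emph{nearly indifferent} between reporting $p_0$ and reporting $p_0 + w$, allowing the hypothesis to be applied at both candidates and the ratio to be extracted from the resulting pair of subgradient inequalities for $G$. A secondary technical point is checking that $f$ remains inside $[0,1]$ when $L_f$ is large, which is exactly what the boundary buffers $3\epsilon$ at the left and $4\epsilon$ at the right are calibrated to guarantee.
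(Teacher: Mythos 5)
Your high-level strategy---establish a local multiplicative step inequality via an adversarial $f$ at each grid point and then telescope geometrically---is indeed the paper's strategy. But the specific construction and the derivation of the step inequality both have gaps that break the argument.

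\paragraph{The construction of $f$ does not work.} You propose the affine map $f(p) = c + L_f p$ with intercept chosen so that the target $p_0$ is optimal. But an affine map with slope $L_f$ sends $[0,1]$ to an interval of length $L_f$, so it can only be a valid function $\Delta(\{1,2\}) \to \Delta(\{1,2\})$ when $L_f \leq 1$. The theorem is stated for arbitrary $L_f > 0$, and the ``exponential in $L_f$'' claim is only nontrivial when $L_f$ is allowed to grow. Moreover, even for $L_f < 1$, your affine $f$ satisfies $\lvert f(p) - p\rvert = \lvert (1-L_f)(p-p_0) \rvert$, so the accuracy hypothesis only constrains the optimal report to lie in an interval of width $2\epsilon/(1-L_f)$ around $p_0$, not the much narrower $2\delta = 2\epsilon/(L_f+1)$ that the theorem's statement and final constants require. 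The paper's $f$ (Lemma \ref{exp_hops_somewhere}) is piecewise linear, constant at $1$, then steeply decreasing with slope $-L_f$ across a narrow band through the fixed point $p_0$, then constant at $0$. This gives $\lvert f(p)-p\rvert = (L_f+1)\lvert p-p_0\rvert$ near $p_0$, hence the correct $\delta$, and it maps $[0,1]$ into $[0,1]$ for every $L_f > 0$.

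\paragraph{The step inequality is asserted, not derived.} The claimed inequality $R(p_0 + w) \geq \tfrac{3(L_f+1)}{L_f+3}\,R(p_0)$ is the crux, and your sketch relies on a first-order condition $L_f G'(p_0) + G''(p_0)(f(p_0)-p_0) = 0$, which presupposes twice-differentiable $G$; the theorem makes no such assumption, and the crucial examples near the exponential-growth regime are precisely those where $G$ has kinks. The paper instead extracts the step purely from the subgradient inequality and the optimality comparison $S(p^*,f(p^*)) \geq S(p^*-2\delta, f(p^*-2\delta))$. Your phrase ``or its subgradient analogue'' is exactly the nontrivial content that needs to be worked out; as written it is a gap. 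Related, you would also need a version of the paper's symmetry reduction (Lemma \ref{lemma:symmetry-for-bounds}) to assume WLOG that $g$ is positive on the relevant half of $[p_l,p_h]$: the telescoping step requires $g(p)$, hence $R(p)$, to actually be nonnegative and increasing in the direction you compound, and nothing in your argument arranges this.

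\paragraph{What to keep.} The observation that the quantity in the theorem is the ``stakes'' term and equals $G(p_1+4\delta)-G(p_1)-4\delta\,G'(p_1+4\delta)$ is correct and useful; so is the bookkeeping showing the exponent $(L_f+1)(p_h-p_l)/(8\epsilon)$ comes from a grid of step $8\delta$. These match the paper's Lemmas \ref{lemma:bounding-S-ratios-bounding-g-ratios} and \ref{compounding_hops} in spirit. But to make the proof go through you need (i) the correct piecewise-linear $f$, (ii) a subgradient-only derivation of the local step, and (iii) the sign normalization. These are not small technicalities---they are the content of the theorem.
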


Intuitively, the assumption on $S$ is that it ensures small accuracy bounds of $\epsilon$ for functions with Lipschitz constant $L_f$. Now note that $|S((p_1+4\delta,1-p_1-4\delta),(p_1,1-p_1)) - S((p_1,1-p_1),(p_1,1-p_1))|$ is the cost to the expert of misreporting by $4\delta$ when the true distribution is $(p_1,1-p_1)$. If this term is large, then the expert cares a lot about not misreporting by $4\delta$, and if the term is small, the expert does not mind misreporting much. Our result shows that the value of this term is much larger for some $p_1$ than it is for others, i.e., that for some probabilities $p_1$ the expert cares a lot more about accurately reporting $p_1$ than it does for other values of $p_1$. Our theorem puts a lower bound on the ratio between the lowest and largest possible values of that term. In particular, this does not hinge on probabilities $p_1$ near $0$ or $1$ and holds even if we restrict attention to probabilities between, say, $1/4$ and $3/4$.

\Cref{theorem:two-outcomes-arbitrarily-good-bounds} shows that in the binary prediction case, given a Lipschitz constant $L_f$ for the environment, we can achieve arbitrarily good bounds $\epsilon$ on the inaccuracy of the performatively optimal report. Unfortunately, this ceases to be possible in the many-outcome case. In that case, if all we know about $f$ is that it has Lipschitz constant $L_f$, there is some error $\epsilon$, linear in $L_f$ as $L_f\rightarrow 0$, that we must allow regardless of what strictly proper scoring rule we use.

\begin{restatable}{theorem}{impossibility}\label{thm:no_higher_dim_bound}
For any Lipschitz constant $L_f$, for $\epsilon>0$ sufficiently small, there is no proper scoring rule $S$ for the three-outcome case that achieves the following property: Whenever $f$ is $L_f$-Lipschitz, there is some performatively optimal report $\p$ with $\Vert f(\p)-\p\Vert  \leq \epsilon$. In particular, there exists some function $\epsilon(L_f)$ with $\epsilon(L_f) \sim c L_f$ as $L_f \rightarrow 0$ for some fixed constant $c$, s.t.\ the above property cannot be achieved with $\epsilon = \epsilon(L_f)$.  Thus, the best achievable bound is in $\Omega(L_f)$ as $L_f \rightarrow 0$, i.e. scales at least linearly with $L_f$ in the limit.
\end{restatable}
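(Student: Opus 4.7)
The plan is to reduce \Cref{thm:no_higher_dim_bound} to a purely geometric lower bound on the scoring rule, and then establish that lower bound using the two-dimensional structure of the tangent space.

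First, I will push the first-order analysis of \Cref{theorem:Caspar-approx-fix-point} one step further. Differentiating $\Score(\p,f(\p))$ via Gneiting--Raftery and setting the $\TPset$-gradient to zero shows that any interior performatively optimal $\p$ for a differentiable $f$ satisfies
\[
  \p-f(\p) \;=\; Dg(\p)^{-1}\,Df(\p)^{\top}\,g(\p).
\]
I will then probe the scoring rule with the two-parameter family of affine maps $f_{\p_0,M}(\p)=\p_0+L_f\,M(\p-\p_0)$, where $\p_0\in\mathrm{int}\,\Pset$ and $M$ is a linear endomorphism of $\TPset$ with $\|M\|_{\op}\le 1$. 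Each $f_{\p_0,M}$ is $L_f$-Lipschitz, has $\p_0$ as its unique fixed point, and for $L_f$ small enough maps $\Pset$ into itself with its performative optimum $\p$ in the interior close to $\p_0$, yielding
\[
  \|\p-f_{\p_0,M}(\p)\| \;=\; L_f\,\bigl\|Dg(\p_0)^{-1}\,M^{\top}g(\p_0)\bigr\| \;+\; O(L_f^{2}).
\]

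Suppose, for contradiction, that $\Score$ achieves $\|\p-f(\p)\|<\epsilon$ for every $L_f$-Lipschitz $f$. Then the previous display must be below $\epsilon$ uniformly in $M$, and the supremum over $\|M\|_{\op}\le 1$ is attained by the rotation that sends $g(\p_0)/\|g(\p_0)\|$ to the unit eigenvector of $Dg(\p_0)$ with the smallest eigenvalue $\gamma_{\p_0}$; this supremum equals $L_f\,\|g(\p_0)\|/\gamma_{\p_0}$. So for every $\p_0\in\mathrm{int}\,\Pset$ one must have $\|g(\p_0)\|/\gamma_{\p_0} \le \epsilon/L_f + O(L_f)$, and the whole impossibility reduces to the claim that $\sup_{\p_0\in\mathrm{int}\,\Pset}\|g(\p_0)\|/\gamma_{\p_0}$ is bounded \emph{below} by a universal constant $c>0$ for every strictly convex $G$ on the 2-simplex.

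Establishing this geometric lower bound is the main obstacle. The quantity $\|g\|/\gamma_\p$ is the worst-case Newton step of $G$ (over orthogonal rotations of the gradient), and in the one-dimensional case it can be driven to zero everywhere by the exponential $G$ used in \Cref{theorem:two-outcomes-arbitrarily-good-bounds}; the proof must show that in two dimensions the adversary's freedom to rotate $M$ blocks this escape. My plan for the bound is a Poincar\'e--Hopf-style winding argument on $g=\nabla G$: since $g$ is smooth and has positive Jacobian on $\mathrm{int}\,\Pset$, the unit vector $g/\|g\|$ has winding number $+1$ around an interior minimum of $G$, so its direction rotates by $2\pi$ around any enclosing loop; meanwhile, the smallest-eigenvalue eigenspace of $Dg$ is a smooth line field away from the codimension-one locus where the two eigenvalues of $Dg$ coincide, so it cannot rotate by $2\pi$ without such a crossing. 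A quantitative version of this obstruction --- combined with the fact that near a crossing point $Dg$ is nearly isotropic and $\|g\|/\gamma_\p$ reduces to $\|\p-\p_*\|$, which is bounded below by a fixed fraction of the simplex diameter --- produces an explicit universal $c$. Combined with the reduction above, this gives $\epsilon \ge c L_f - O(L_f^{2})$, so the best achievable $\epsilon$ is $\Omega(L_f)$, confirming the theorem with $\epsilon(L_f):=(c/2)L_f$. The points that will need care are the eigenvalue degeneracy locus, the case where $G$'s minimum lies on $\partial\Pset$ or is not attained (which actually makes the bound easier since $g$ stays bounded away from $0$ on the interior), and boundary performative optima, which for the probes $f_{\p_0,M}$ with $\p_0$ well inside $\Pset$ can be ruled out by a compactness argument once $L_f$ is small enough.
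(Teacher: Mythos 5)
Your proposal takes a genuinely different route from the paper. The paper works directly with a piecewise-linear probe $f$: Lemma~\ref{lemma:isoline_section} locates a section of an isoline of $G$ along which the direction of the subgradient $g$ varies by a controlled angle, and the proof then exhibits a specific report $\q$ lying at the far end of this section whose score provably beats that of \emph{any} accurate report. This argument is second-order-free: it runs entirely with the Gneiting--Raftery subgradient inequality and the geometry of level sets of a convex function, so it applies to arbitrary proper scoring rules. Your reduction, by contrast, linearizes the first-order condition $Dg(\p)^{\top}(\p - f(\p)) = Df(\p)^{\top}g(\p)$ and probes with affine $f_{\p_0,M}$, distilling the problem to the claim that $\sup_{\p}\Vert g(\p)\Vert / \gamma_{\p}$ is bounded below by a universal constant. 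That is a clean and illuminating reformulation, and the claim is in fact true: the tangential second directional derivative of $G$ along an isoline equals $\kappa \Vert g \Vert$ (with $\kappa$ the isoline's curvature), so $\Vert g\Vert / \gamma_{\p} \ge 1/\kappa$, and any closed isoline enclosing a disk of radius $r$ must have curvature $\le 1/r$ somewhere by the total-curvature-equals-$2\pi$ argument. This is essentially the geometric content the paper extracts from Lemma~\ref{lemma:isoline_section}, repackaged.

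However, there are real gaps as the sketch stands. First, and most importantly, your first-order reduction presupposes that $G$ is twice continuously differentiable (you invert $Dg$, you need uniform control of the $O(L_f^2)$ remainder over $\p_0$ and $M$, hence bounded third derivatives on a compact interior set), whereas the theorem asserts impossibility for \emph{every} proper scoring rule, including non-smooth ones. The paper's proof deliberately avoids derivatives of $G$ for exactly this reason; your version needs either a mollification argument showing that a non-smooth $S$ achieving the bound would yield a smooth one achieving nearly the same bound, or an explicit restriction to smooth $S$ with an argument that this is without loss of generality. Neither is immediate. Second, your winding-number argument for the key geometric lemma has a factual error: for symmetric $2\times 2$ matrices, the locus where the two eigenvalues coincide has codimension two (it is the line of scalar multiples of $\Id$ in the three-dimensional space of symmetric matrices), not codimension one, so the degeneracy set of $Dg$ is generically a collection of isolated umbilic points, not a curve. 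Around such points the minimum-eigenvector line field carries half-integer index (Hopf), so the index-counting obstruction is more delicate than your sketch suggests, and the quantitative claim that near an umbilic point $\Vert g\Vert / \gamma_{\p}$ ``reduces to $\Vert \p - \p_*\Vert$'' is only true for quadratic $G$ and is unsubstantiated in general. I would recommend replacing the winding-number route entirely with the isoline-curvature bound above, which gives the universal constant directly and mirrors what the paper actually does.
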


\co{LOW PRIORITY: I think it would be good to have an argument in the appendix that the true bound is nothing super simple. I think scoring rules that reward only for guessing the right outcome probably give non-trivial bounds for high $L_f$, for instance...}

\co{LOW PRIORITY: Maybe it'd be good to show that something similar holds for the distance to fixed points as well.}


\begin{figure*}[t]
\centering
\begin{subfigure}[t]{.45\textwidth}
    \centering
    \includegraphics[width=\textwidth]{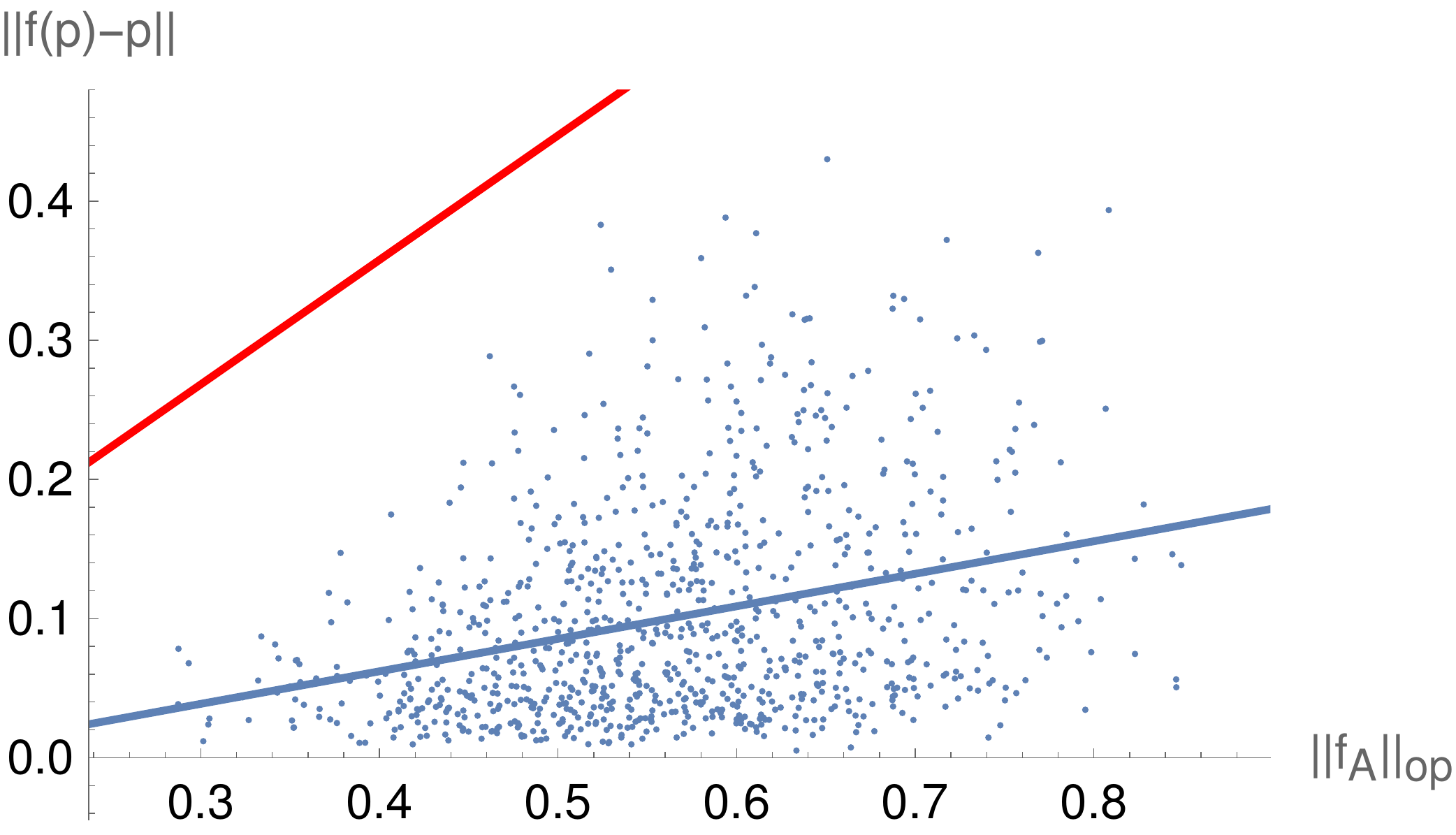}
\end{subfigure}%
\begin{subfigure}[t]{.45\textwidth}
    \centering
    \includegraphics[width=\textwidth]{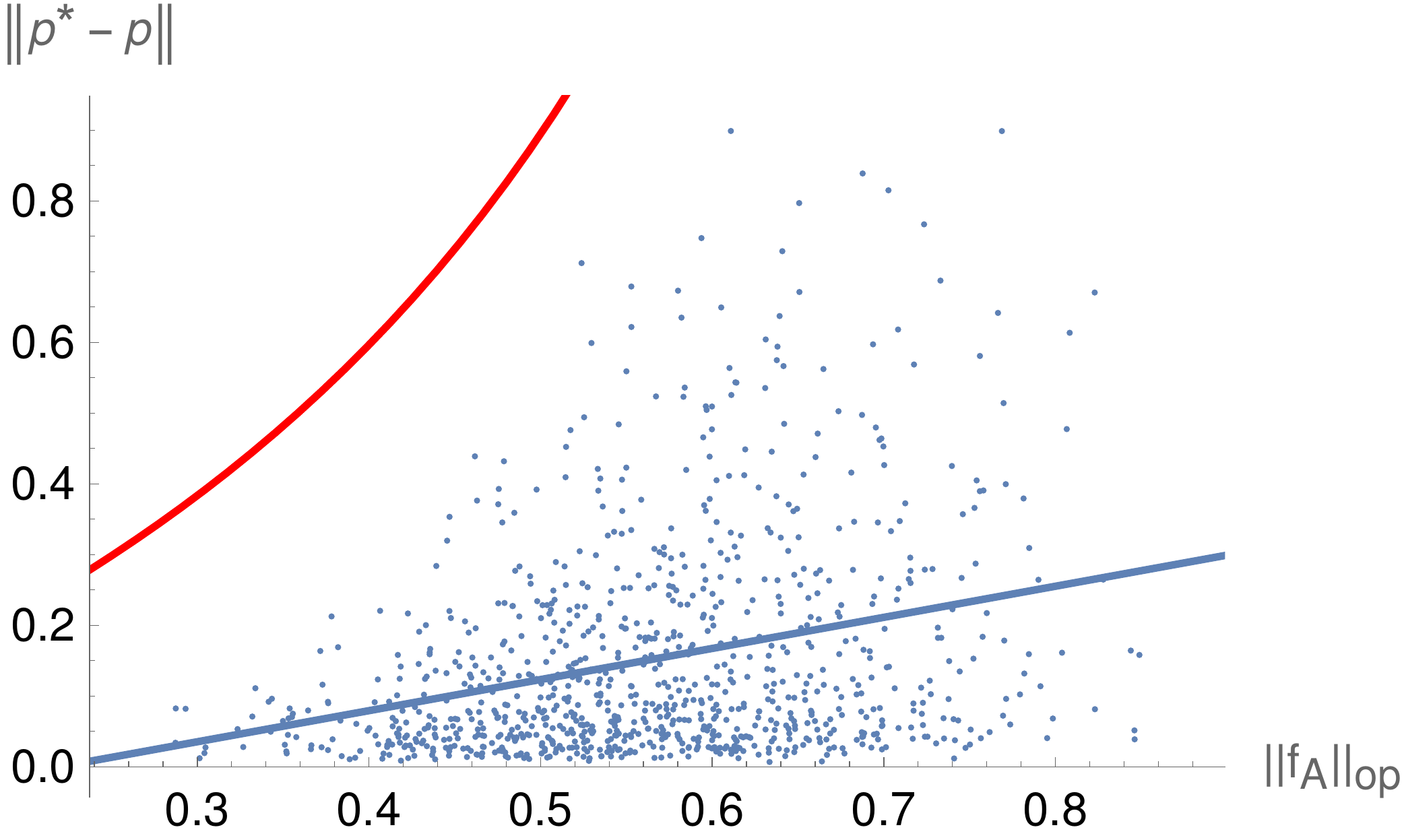}
\end{subfigure}
\caption{Scatter plots showing the L2 inaccuracy (left) and the distance to a fixed point (right) of the performatively optimal reports against the operator norm of $A$ in our experiments. In both plots, each point corresponds to a run of the experiments. The blue lines are found by linear regression on the points. The red lines are the bound given in \Cref{example:bounds-for-brier} as a function of the Lipschitz constant $L_f$.
}
\label{fig:scatterplot-opnormA-l2-inaccuracy-brier}
\label{fig:scatterplot-opnormA-l2-disttofp-brier}

\end{figure*}
\section{Numerical simulations}
\label{numerical-simulations}

\co{LOW PRIORITY: Currently we don't have anything for the log scoring rule anywhere.}

In this section, we provide some numerical simulations for the Brier score, to see how inaccurate performatively optimal predictions might be in practice. Throughout, we consider only affine-linear functions \(f\). This in particular means that all functions $f$ have operator norms between \(0\) and \(1\) and aside from degenerate cases a unique fixed point. The Mathematica notebook for our experiments (including some interactive widgets) is available at \url{https://github.com/johannestreutlein/scoring-rules-performative}. Although our experiments are set in toy models with linear $f$ and small sets of outcomes, they provide an initial estimate of the degree to which predictions can be off, depending on how much influence the expert can exert using their prediction. 

\subsection{Binary prediction}

\co{Make sure to mention somewhere how the L2 norm and the distance between the $p_1$s relate to each other, and justify the use of the L2 norm.}

\textbf{Experimental setup. } We begin with the binary prediction case, i.e., $\mathcal{N}=\{1,2\}$. 
We consider \(f\) to be affine linear with slope \(\alpha\) and fixed point \(\p^*\in \Delta(\mathcal{N})\), thus yielding the functional form $f(\p):=\p^*+\alpha(\p-\p^*)$ for all $\p\in \Delta(\{1,2\})$. Note that for all $\alpha\in [0,1]$ and all $\p^*\in \Pset$, a function thus defined is indeed a function $\Pset\rightarrow \Pset$. For $\alpha<0$, whether $f$ is a function $\Pset\rightarrow \Pset$ depends on $\p^*$. We restrict attention to $\alpha\in [0,1]$ for simplicity. 

\textbf{Graphing inaccuracy and distance to fixed points. }
In \Cref{fig:density-plot-brier-l2-inaccuracy} (left), we plot the inaccuracy $\Vert \p - f(\p) \Vert$ of the performatively optimal report \(\p\) against $\alpha,p^*_1$.
In \Cref{fig:density-plot-brier-l2-disttofp} (right), we plot the L2 distance \(\Vert \p^*-\p\Vert\) of the performatively optimal report \(\p\) to the fixed point \(\p^*\). For that plot we limit $\alpha$ to the range $[0,0.95]$, because of instability at $\alpha\approx 1$. Note that relatively high inaccuracies can be found at various qualitatively different points in the graphs, even when the slope of \(f\) is small, i.e., when the oracle has little influence on the environment.

\textbf{Assessing our bounds. }
To evaluate our bounds, we maximize distances across possible choices of fixed points \(\p^*\in \Delta(\{1,2\})\), and plot the maximal inaccuracy of the optimal prediction as well as the maximal distance from a fixed point in \Cref{fig:max-l2-distance-brier-two-outcome}. We compare to both theoretical bounds from \Cref{example:bounds-for-brier}, i.e., $\Vert  p - f(p) \Vert _2\leq \alpha/\sqrt{2}$ and $\Vert  p - p^* \Vert _2 \leq \alpha/((1-\alpha)\sqrt{2})$.

For both quadratic and log scoring rule (results in \Cref{appendix:experimental-results}), our theoretical bounds are tight for slopes \(\alpha\leq 0.5\). For higher slopes, inaccuracy goes down, as the function \(f(p)\) becomes closer to the identity function, and optimal predictions are bounded in \([0,1]\).

\subsection{Higher-dimensional prediction}


\textbf{Experimental setup. } Next, we turn to higher-dimensional predictions. We consider a model with five possible outcomes and linear $f\colon p\mapsto Ap$ for $A\in\mathbb{R}^{n\times n}$. $f$ is an automorphism on the simplex if and only if all of its columns are in the simplex. We hence randomly generate the matrix $A$ by sampling each column uniformly from the simplex. Note that $A$ is the Jacobian of $f$ at every point.

%

For each $f_A$ thus created, we first find the performatively optimal report $\p$ and the fixed point $\p^*$. We then record the following quantities: the operator norm of $f_A$; 
the distance of the fixed point distribution to the uniform distribution $ \Vert \p^*-\frac{1}{n}\boldsymbol{1}\Vert$; 
 the distance of the optimal report to the uniform distribution $\Vert \p - \frac{1}{n}\boldsymbol{1}\Vert $;
 the distance of the performatively optimal report to the fixed point $\Vert \p^*-\p\Vert$; 
the inaccuracy of the performatively optimal report $\Vert f(\p)-\p\Vert $. 
%
We are interested in how the second two items depend on the first two. We are also interested in how tight our bounds (from \Cref{example:bounds-for-brier}) are.

We collected 1000 random functions \(f_A\), but aborted 52 runs because they didn't terminate within 120 seconds, leaving us with 948 data points.


\co{[LOW PRIORITY:] It would be cool to have a density scatter plot somewhere.}

\textbf{Inaccuracy. }
\Cref{fig:scatterplot-opnormA-l2-inaccuracy-brier} (left) plots the L2 inaccuracy (i.e., the distances $\Vert \p-f(\p) \Vert$). The blue line shows the best linear fit to the data points, which is given by $-0.0314 + 0.234 x$, whereas our bound is $2L_f/\sqrt{5}\approx 0.8944 L_f$. The average L2 inaccuracy is $0.100$ with a standard deviation of $0.0770$. The quartiles are $0.0419, 0.0759, 0.138$. The correlation between the operator norm of $f_A$ and $\Vert \p - f(\p)\Vert $ is $0.312$.


\textbf{Distance to fixed points. }
\Cref{fig:scatterplot-opnormA-l2-disttofp-brier} (right) plots the L2 distance to the fixed point against the operator norm of $f_A$. The linear best fit (blue line) is given by  $-0.0966 + 0.440 x$, whereas our bound is $2L_f/(\sqrt{2}(1-L_f))\approx 0.8944 L_f / (1-L_f).$ The average L2 distance to the fixed point is $0.152$ with a standard deviation of $0.154$ and quartiles $0.0442, 0.0915, 0.210$. The correlation between the operator norm of $f_A$ and $\Vert\p^* - \p\Vert $ is $0.294$.


\textbf{The role of the location of the fixed point. } The graphs for the binary prediction case show that the location of the fixed point matters a lot for the accuracy of optimal reports (though the direction of the effect depends on the slope of $f$). A similar effect can be observed in the many outcome case. In fact, the effect of the location of the fixed point is actually stronger (though less reliable) than the effect of the operator norm of $f_A$. We provide more detail in \Cref{appendix:experiments-many-outcomes-effect-of-fp-loc}.


\textbf{Loose bounds, tight bounds. } \Cref{fig:scatterplot-opnormA-l2-inaccuracy-brier,fig:scatterplot-opnormA-l2-disttofp-brier} show that (in contrast to the binary prediction case), our bounds in terms of the operator norm of $f$ are typically quite loose. For example, the average slack of the inaccuracy bound is $0.404$ with a standard deviation of $0.0998$ and quartiles $0.337, 0.400, 0.471$. Recall from \Cref{example:bounds-for-brier} that in addition to bounds in terms of $L_f$ alone we have bounds in terms of $L_f$ and $\Vert \p-\frac{1}{n}\mathbf{1}\Vert $. These bounds are much tighter with an average slack of $0.0644$ with a standard deviation of $0.0597$ and quartiles $0.0274, 0.0487, 0.0830$. 

\co{
LOW PRIORITY: It might also be interesting to plot things against the location of $\p^*$ or so. Unfortunately, we don't have enormous amounts of space, so maybe we will only be able to do that in the appendix.

LOW PRIORITY: Could give analogous numbers for the bounds on distance to fixed point.}
%

\textbf{Discussion. } Based on our simulations, misprediction in the five outcome case seems similarly problematic as in the binary prediction case. In contrast to the binary case, the bounds in terms of $L_f$ are quite loose. The bounds in terms of $\Vert \p-\frac{1}{n}\mathbf{1}\Vert $ are much tighter. Note that because these bounds depend on the performatively optimal report, they can only be derived a posteriori once a report has been submitted. As in the two-outcome case, both the location of the fixed point and the operator norm/slope of $f$ matter a lot for accuracy and distance to fixed point of the performatively optimal report.

\section{Fixed points via alternative notions of optimality}
\label{stop-gradients}


Here, we focus on alternative settings that lead to accurate predictions and do not induce preferences over fixed points. The idea behind all of them is that, instead of optimizing \(\p\) and \(f(\p)\) jointly, we keep \(\q:=f(\p)\) fixed while choosing a prediction \(\p\) to maximize \(S(\p,\q)\). Repeating this procedure leads to honest predictions, where the choice of fixed point depends on contingent facts such as initialization, instead of being chosen to maximize $S(\p,\p)$. An AI model using this procedure could be safer, because its predictions are honest, and because it does not optimize its choice of fixed point for any goal. In this section we give a summary of a more detailed treatment with formal results in \Cref{appendix:alternative-notions-rationality}.

\textbf{Performative stability. }
Alternatives to performative optimality have been discussed in the performative prediction literature. Translated into our setting, a prediction \(\p^*\) is called \emph{performatively stable} if $\p^*\in \argmax_{\p}\Score(\p,f(\p^*))$.
This implies \(\p^*=f(\p^*)\) whenever \(\Score\) is strictly proper, so performative stability is equivalent to being a fixed point. 


\textbf{Repeated risk minimization and gradient descent. }
\citet{perdomo2020performative} consider learning algorithms that converge to performatively stable points, including repeated risk minimization and repeated gradient descent. In repeated risk minimization, we repeatedly update predictions via \(\p_{t+1}:=\argmax_\p S(\p,f(\p_t))\). Repeated gradient descent instead updates predictions via gradient descent on this objective. There also exist stochastic gradient descent versions of these algorithms \citep{mendler2020stochastic}. 
All of these schemes lead to stable points under appropriate conditions. We include a convergence proof for repeated gradient descent in our setting in \Cref{rrm-and-rgd}. 

\textbf{No-regret learning and prediction markets. } We also provide results for no-regret learning (\Cref{appendix:no-regret}) and prediction markets (\Cref{appendix:prediction-markets}). We introduce a no-regret learning setting and show that policies have sublinear regret if and only if they have sublinear prediction error. This differs from the setting considered by
\citet{pmlr-v162-jagadeesan22a}, in which no-regret policies converge to performatively optimal predictions.
Next, we provide a prediction market model and show that, if the weight of each trader in the market is small, equilibrium predictions by the market are close to fixed points. This is analogous to a result by \citet{hardt2022performative} bounding the distance of a market equilibrium from performatively stable points.

\section{Related work}
\label{related-work}

\textbf{Performative prediction. } In performative prediction, the goal is to find a model parameter \(\theta\in\mathbb{R}^d\) that minimizes an expected loss \(\E[\ell(Z;\theta)]\) where \(Z\) is a stochastic sample, usually a pair of input and target, \(Z=(X,Y)\). Unlike in the vanilla supervised learning setting, \(Z\sim\mathcal{D}(\theta)\) is sampled from a distribution \(\mathcal{D}(\theta)\) that itself depends on the chosen model parameter. Performatively optimal parameters are defined via \(\theta_{\mathrm{PO}}\in \argmin_\theta \E_{Z\sim \mathcal{D}(\theta)}[\ell(Z; \theta)]\), and the definition of performatively stable parameters is \(\theta_{\mathrm{PS}}\in \argmin_\theta \E_{Z\sim \mathcal{D}(\theta_{\mathrm{PS}})}\ell(Z;\theta)\). In general, performatively stable and optimal parameters can differ \cite[][Ex.~2.2]{perdomo2020performative}.

Our setting could be seen as a special case in which \(\theta\) is a single distribution \(\p\), data points are discrete outcomes \(y\), and the distribution \(\mathcal{D}(\theta)\) is given by \(f(\p)\). Unlike in the general performative prediction setting, we can determine the accuracy of a prediction \(\p\) as the distance from the distribution \(f(\p)\) (see \Cref{theorem:Caspar-approx-fix-point}), we can characterize predictions as honest if they are fixed points, and loss functions can be characterized as proper if they incentivize honest reports. As mentioned in \Cref{stop-gradients}, performatively stable points are fixed points and are thus a more desirable solution concept in our setting. 
There are some performative prediction settings in which performative optima can also be seen as manipulative and undesirable, such as in recommendation algorithms \citep{hardt2022performative}. However, as far as we are aware, we are the first to link performative stability to honesty in prediction.




\textbf{Scoring rules. } While the literature on scoring rules generally assumes that predictions are not performative, a few authors in this literature have studied agents manipulating the world \textit{after} making a prediction \cite{shi2009prediction,oka2014predicting}. To our knowledge, the cases discussed do not involve agents influencing the world directly through their predictions. \citet{chan2022scoring} introduce performative probabilistic predictions using scoring rules. However, they focus on particular functional forms of \(f\) and binary predictions and do not provide a more general analysis. Another related setting in which it has been shown that no proper scoring rules exist is that of second-order prediction, in which experts report distributions over first-order distributions to express epistemic uncertainty \citep{bengs2023second}.


\textbf{AI oracles. }
Issues with performativity have been mentioned in the literature on AI predictors or oracles \cite{armstrong2017good}.
Most prior work has focused on alleviating performativity altogether, e.g., by making the oracle predict \emph{counterfactual worlds} it cannot influence. We are not aware of any prior work on specifically the question of whether AI oracles would be incentivized to output fixed points at all.


\textbf{Decision scoring rules and decision markets. }
The literature on decision scoring rules and decision markets considers a setting in which experts make predictions about what would happen if a decision maker were to pursue one course of action or another. The decision maker then chooses based on these predictions, making the predictions performative. As shown by \citet{Othman2010}, the expert may thus be incentivized to mispredict when subject to a proper scoring rule. However, this literature typically takes the perspective of the decision maker and thus assumes some knowledge of $f$. For example, \citet{Othman2010} and \citet{oesterheld2020decision} show that the scoring rule $S$ must be chosen to align in some sense with the decision maker's utility function (and thus $f$). \citet{Chen2014} propose that the decision maker could randomize to set good incentives, which in our setting would entail manipulating $f$.

\textbf{Epistemic decision theory. }
A related topic in philosophy is \emph{epistemic decision theory}. In particular, \citet{greaves2013epistemic} introduces several cases in which outcomes depend on the agent's credences and compares the verdicts of different epistemic decision theories (such as an evidential and a causal version). While some of Greaves' examples involve agents knowably adopting incorrect beliefs, they require joint beliefs over several propositions, and Greaves only considers individual examples. We instead consider only a single binary prediction and prove results for arbitrary scoring rules and relationships between predictions and beliefs.

\textbf{Honest and truthful AI. }
Another related topic is honest and truthful AI \citep{evans2021truthful}. In our setting, an AI that reports an inaccurate prediction to achieve a higher score would be dishonest. \citet{evans2021truthful} discuss issues around training AIs to be truthful and honest, such as difficulties in judging truth. However, they do not explore performativity or proper scoring rules. We simplify our analysis by assuming that a ground truth exists and can be judged objectively. \citet{burns2022discovering} discuss extracting latent knowledge from AIs without relying on incentivizing honest reporting, but also do not address performativity.


\section{Conclusion and future work}
\label{conclusion}
If predictions cannot influence which outcome occurs, then strictly proper scoring rules incentivize experts (humans or AI systems) to report honest predictions. This fails if predictions are performative. We showed that, in general, 
strictly proper scoring rules do not incentivize accurate predictions in a performative prediction setting. We analyzed this inaccuracy quantitatively and gave upper bounds on inaccuracy. We showed that in the case of binary prediction, there exist scoring rules that incentivize arbitrarily accurate predictions. In contrast, for more than two outcomes, it is not possible to achieve arbitrarily strong bounds on accuracy. Our numerical simulations in a toy setting confirm that our bounds are tight in some situations and that inaccurate performative predictions are common. Finally, we showed that by using other types of objectives, such as minimizing regret, we can build AI models that predict fixed points.

We hope that future work will shed further light on practical and safe uses of AI systems as predictors, i.e., oracle AIs. First, some of our bounds could probably be improved or generalized (to non-differentiable $f,G$). Second, it would be valuable to have more specific models of $f$. Precise models of \(f\) may allow for stronger results \citep[cf.][]{Othman2010,oesterheld2020decision}. Third, we take a simplistic view of safety: we take it that incentives to predict honestly are good and that other incentives are problematic. We hope that future work will augment our analysis with more fine-grained models of safety. For example, a common safety concern is power-seeking behavior \citep{omohundro2008basic,turner2021optimal}. One could similarly ask to what extent performative oracle AI will spend compute to improve its ability to influence the world (cf.\ discussions of information acquisition, e.g.\ \citealp{Osband1989}; \citealp{neyman2021binary};
\citealp{li2022optimization}; \citealp{OesConAcquisition}).
Lastly, we are interested in theoretical and experimental evaluations of the practicality of different safe oracle AI designs and training setups.
\begin{acknowledgements}

CO acknowledges funding from the Cooperative AI Foundation, Polaris Ventures (formerly Center for Emerging Risk Research) and Jaan Tallinn's donor-advised fund at Founders Pledge.
JT and RH carried out most of this work as part of the SERI MATS program under the mentorship of Evan Hubinger (JT, RH) and Leo Gao (RH). JT is grateful for support by an Open Phil AI Fellowship and an FLI PhD Fellowship.
We sincerely thank four anonymous reviewers whose insightful comments helped us improve our paper. We are also indebted to Meena Jagadeesan, Erik Jenner, Adam Jermyn, and Marius Hobbhahn for their valuable discussions and feedback, and to Alexander Pan and Bastian Stern for pointing us to the relevant related literature.  
\end{acknowledgements}

\clearpage

\bibliography{refs}

\onecolumn

\appendix

\section{Proofs}
\label{appendix:proofs}

\subsection{Preliminaries}
We begin by proving a lemma characterizing the gradient \(\nabla_\p(S(\p,f(\p)))\), which we will use throughout.
\begin{lemma}\label{lemma:derivative-S}
    Assume \(G,g,f\) are differentiable. Then
    \[\nabla_\p(S(\p,f(\p)))=  Dg(\p)^\top (f(\p)-\p) +  Df(\p)^\top g(\p).\]
    If \(S\) is strictly proper and \(\p\in\interior{\Pset}\) an optimal report, then
    \[(\p-f(\p))^\top Dg(\p)= g(\p)^\top Df(\p).\]
\end{lemma}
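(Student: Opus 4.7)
The plan is to start from the Gneiting--Raftery characterization (\Cref{theorem:gneiting-raftery}), namely $S(\p,\q)=G(\p)+g(\p)^\top(\q-\p)$, and specialize to $\q=f(\p)$. This gives the scalar objective
\[
\varphi(\p) \defeq S(\p,f(\p)) = G(\p) + g(\p)^\top(f(\p)-\p).
\]
For the first claim, I will simply compute $\nabla_\p \varphi(\p)$ by the product and chain rules. The term $\nabla G(\p)$ equals $g(\p)$ by differentiability (this is the point at which the subgradient is genuinely the gradient). For the bilinear term $g(\p)^\top (f(\p)-\p) = \sum_i g_i(\p)(f_i(\p)-p_i)$, differentiation w.r.t.\ $\p$ yields, in matrix form,
\[
Dg(\p)^\top(f(\p)-\p) \;+\; Df(\p)^\top g(\p) \;-\; g(\p),
\]
where the $-g(\p)$ comes from differentiating the $-\p$ factor (with $\nabla_\p \p = \Id$). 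Summing with $\nabla G(\p)=g(\p)$ cancels the stray $-g(\p)$, giving the stated identity.

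For the second claim, since $\p\in\interior{\Pset}$ is optimal, the directional derivative of $\varphi$ vanishes along every $\Tv\in\TPset$, i.e.\ $\nabla_\p \varphi(\p)^\top \Tv = 0$ for all $\Tv\in\TPset$. Now the two vectors appearing in the gradient both lie in $\TPset$: $Df(\p)^\top g(\p)\in\TPset$ because $f$ maps into the simplex (so the columns of $Df(\p)$ sum to zero and hence $\mathbf{1}^\top Df(\p)^\top g(\p)=0$), and $Dg(\p)^\top(f(\p)-\p)$ is well defined on $\TPset$ by the convention introduced after \Cref{theorem:gneiting-raftery}. Therefore the gradient actually lies in $\TPset$ and the first-order condition forces the stronger equality
\[
Dg(\p)^\top(f(\p)-\p) + Df(\p)^\top g(\p) = 0,
\]
which transposed becomes the claimed $(\p-f(\p))^\top Dg(\p) = g(\p)^\top Df(\p)$.

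The only delicate point is the usual ambiguity in the matrix representation of $Dg(\p)$, since $g$ is only defined on $\Pset$. This is not actually an obstacle once one notes that we only ever apply $Dg(\p)$ (or its transpose) to tangent vectors such as $f(\p)-\p$, and the normalization $g(\p)\in\TPset$ guarantees the resulting images lie in $\TPset$; consequently pairing against an arbitrary tangent direction gives a well-defined and representation-independent quantity, which is all that the first-order condition requires. The rest is a bookkeeping exercise in the product rule.
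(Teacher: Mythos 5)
Your proof is correct and follows the paper's own argument essentially step for step: apply the product rule to $G(\p)+g(\p)^\top(f(\p)-\p)$, observe the $g(\p)$ terms cancel, and then invoke the first-order condition in the tangent space at an interior optimum. One small slip in your side remark: to justify $Df(\p)^\top g(\p)\in\TPset$ you say the columns of $Df(\p)$ sum to zero, ``hence $\mathbf{1}^\top Df(\p)^\top g(\p)=0$.'' But $\mathbf{1}^\top Df(\p)=0$ (which is what ``columns sum to zero'' gives, from differentiating $\mathbf{1}^\top f(\p)=1$) is equivalent to $Df(\p)^\top\mathbf{1}=0$, whereas $\mathbf{1}^\top Df(\p)^\top g(\p)=g(\p)^\top Df(\p)\mathbf{1}$ instead involves $Df(\p)\mathbf{1}$, which is not determined by $f$ alone since $\mathbf{1}\notin\TPset$ and $f$ is only defined on $\Pset$. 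This does not break the argument — the gradient as an element of $\TPset$ is intrinsically well defined and the first-order condition only pairs it against tangent directions — and the paper's own proof likewise simply asserts $\nabla_\p(S(\p,f(\p)))\in\TPset$ without spelling out a normalization for $Df$.
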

\begin{proof}
    We have
    \begin{align}\nabla_\p(S(\p,f(\p)))
    &= \nabla_\p\left(G(\p) + g(\p)^\top (f(\p)-\p)\right)
    \\&=g(\p) + Dg(\p)^\top(f(\p)-\p) + Df(\p)^\top g(\p) - \Id g(\p)
    \\&=Dg(p)^\top (f(\p)-\p) + Df(\p)^\top g(\p).\end{align}
Next, if \(\p\) is an optimal report and an interior point, it must be \(\nabla_\p(S(\p,f(\p)))^\top \vec{v}=0\) for any \(\vec{v}\in \TPset \). Since \(\nabla_\p(S(\p,f(\p)))\in\TPset\), it follows that \(\nabla_\p(S(\p,f(\p)))=0\). Hence, using the above, it follows that
\begin{align}
    &\phantom{\Rightarrow} 0=\nabla_\p(S(\p,f(\p)))=Dg(p)^\top (f(\p)-\p) + Df(\p)^\top g(\p)
    \\
    &\Rightarrow Dg(\p)^\top (\p-f(\p)) = Df(\p)^\top g(\p).
\end{align}
\end{proof}

\subsection{Proof of Proposition~\ref{prop:non-fixed-point-optimal}}

\propone*

\begin{proof}

\begin{figure}
\centering
\includegraphics[width=0.5\textwidth]{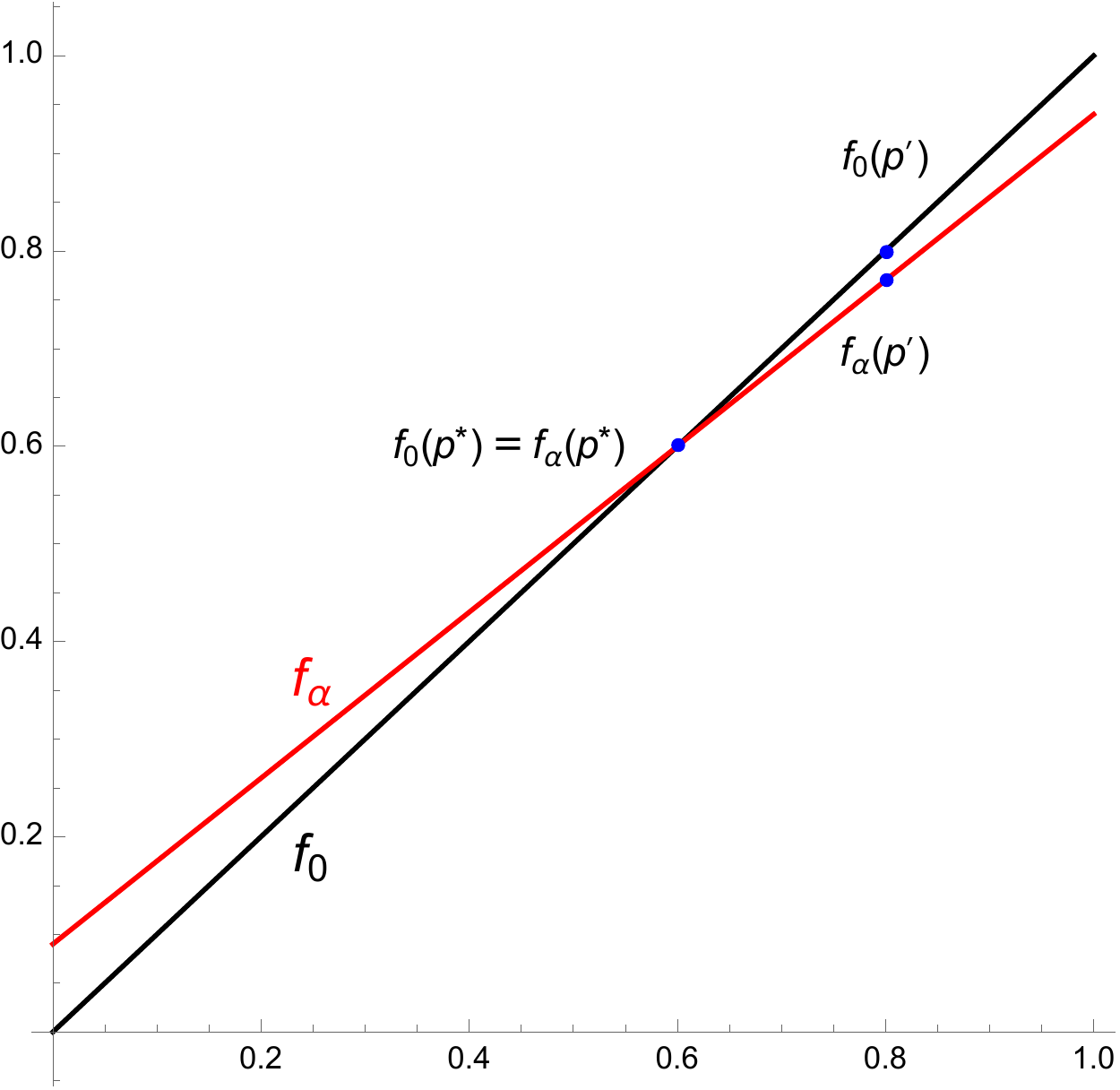}
\caption{Illustration of the setup for our proof. We plot \(f_0\) in black and \(f_\alpha\) for \(\alpha=0.15\) in red, projected onto a single dimension.}
\label{fig:illustration-fp-proof}
\end{figure}

To begin, let \(\p^*\in \interior{\Pset}\) arbitrary and define
\(f_{\alpha}(\p):=(1-\alpha)\p + \alpha \p^*\)
 for 
\(\alpha\in [0,1]\) and \(\p\in\Pset\). Note that since \(\Pset\) is convex, \(f_\alpha(\p)\in\Pset\). Let \(G\) be as in the Gneiting and Raftery characterization of \(S\) (\Cref{theorem:gneiting-raftery}).

To provide an intuition of how our proof will work, consider a binary prediction setting with \(f\) as given in \Cref{fig:illustration-fp-proof}. For any \(\alpha>0\), \(f_\alpha\) has a unique fixed point at \(\p^*\), while \(f_0\) is the identity function, so all points are fixed points of \(f_0\). By strict convexity of \(G\), there exists a point \(\p'\) which receives a strictly higher score than \(\p^*\) if it is a fixed point, so \(\Score(\p',f_0(\p'))>\Score(\p^*,f_0(\p^*))\). \(\p'\) is not a fixed point of \(f_\alpha\) for \(\alpha>0\). However, we will show that \(\Score(\p',f_\alpha(\p'))\) must be continuous in \(\alpha\), which means that we can choose a small enough \(\alpha>0\) to make sure that \(\p'\) remains preferable over \(\p^*\), i.e., \(\Score(\p',f_\alpha(\p'))>\Score(\p^*,f_\alpha(\p^*))\), despite it not being a fixed point.

To formalize the proof, begin by noting that
\[\Vert f_{\alpha}(\p)-f_{\alpha}(\p')\Vert=\Vert (1-\alpha )(\p-\p')\Vert=(1-\alpha)\Vert \p-\p'\Vert\]
for any \(\p,\p'\in \Pset\), so \(f_\alpha\) has Lipschitz constant 
\(L:=(1-\alpha)<1\), and as mentioned, \(\p^*\) is the unique fixed point of \(f_\alpha\). 

Now consider the case 
\(\alpha=0\). As mentioned, every point is a fixed point of \(f_0\). Then by strict convexity of \(G,\) since \(\p^*\) is an interior point, there exists another interior point \(\p'\in \interior{\Pset}\) and \(\epsilon>0\) such that \(G(\p')\geq G(\p^*)+ \epsilon.\)
It follows that
\begin{equation}\label{eq:1}\Score(\p',f_{0}(\p'))=\Score(\p',\p')\geq\Score(\p^*,\p^*)+\epsilon=\Score(\p^*,f_{0}(\p^*))+\epsilon.\end{equation}
So for 
\(\alpha=0\), the model prefers to predict 
\(\p'\)
 over 
\(\p^*\)
 and gets at least 
\(\epsilon\)
 additional expected score. Lastly, note that since \(\p'\) is an interior point as well, it follows that \(G(\p')<\infty\).

Now we show that the model still prefers to predict 
\(\p'\), even for some small 
\(\alpha>0\). 
To that end, note that
\[\Score(\p',f_{\alpha}(\p'))=\E_{y\sim f_{\alpha}(\p')}[S(\p',y)]\]
is linear in \(f_{\alpha}(\p')\), and \(f_{\alpha}(\p')\)
 is affine-linear in 
\(\alpha\)
by construction. This means that 
\(\Score(\p,f_{\alpha}(\p))\)
 is continuous in 
\(\alpha\). So there must exist some small 
\(\alpha>0\)
 such that
\begin{align}\Score(\p',f_{\alpha}(\p'))&\geq\Score(\p',f_{0}(\p'))-\frac{\epsilon}{2}=\Score(\p',\p')-\frac{\epsilon}{2}
\\&\underset{\text{(\ref{eq:1})}}{\geq} \Score(\p^*, \p^*)+\frac{\epsilon}{2}> \Score(\p^*,\p^*)
\\&=\Score(\p^*,f(\p^*)).\end{align}
Choosing 
\(\alpha\)
 in this way, we can define 
\(f:=f_{\alpha}\), and have thus provided a function that satisfies the statement that we wanted to prove.
\end{proof}

\subsection{Proof of Theorem~\ref{prop:fixed-points-optimal-reports-are-rare}}
\label{appendix-proof-of-theorem-8}

We begin with two lemmas. In the following, we always assume a strictly proper scoring rule \(S\) and accompanying functions \(G,g\) as in the Gneiting and Raftery characterization (\Cref{theorem:gneiting-raftery}). Moreover, we let \(\Pi_{n-1}\colon\mathbb{R}^n\rightarrow\mathbb{R}^{n-1}\) be the projection onto \(\mathbb{R}^n\), defined via \(\Pi_{n-1} \x = (x_i)_{1\leq i\leq n-1}\) for \(\x\in\mathbb{R}^n\). We will not go into issues of measurability in our proofs.

First, we show that if \(\p^*\in \interior{\Pset}\) is a fixed point of \(f\), then either \(g(\p^*)=0\) or \(Df(\p)|_{\TPset}\), i.e., the map
\[Df(\p)\colon \TPset\rightarrow\TPset, \vec{v}\mapsto Df(\p)\vec{v},\] is singular.
\begin{lemma}\label{lemma:criticalpointatfixedpoint}
    Let \(G,g\), and \(f\) be differentiable. Let $\p\in \interior{\Pset}$ be a fixed point of $f$ and a performatively optimal prediction. Then $Df(\p)|_\TPset$ is singular or $g(\p)=0$.
\end{lemma}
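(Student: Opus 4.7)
The plan is to directly apply the first-order optimality condition already proved in \Cref{lemma:derivative-S}. That lemma says that for an interior performatively optimal report $\p$, we have
\[
(\p - f(\p))^\top Dg(\p) = g(\p)^\top Df(\p).
\]
Since $\p$ is by hypothesis a fixed point of $f$, the left-hand side vanishes, so the optimality condition collapses to $g(\p)^\top Df(\p) = 0$, equivalently $Df(\p)^\top g(\p) = 0$.

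From here the argument splits into two cases. If $g(\p) = 0$ we are done. Otherwise $g(\p) \neq 0$; recall that throughout the paper we have normalized subgradients to lie in $\TPset$, so $g(\p)$ is a nonzero element of $\TPset$. The relation $Df(\p)^\top g(\p) = 0$ then exhibits a nonzero vector in the kernel of $Df(\p)^\top$ restricted to $\TPset$. I would briefly justify that this is well-defined on $\TPset$: because $f$ maps $\Pset$ to $\Pset$, the Jacobian $Df(\p)$ sends $\TPset$ into $\TPset$, so both $Df(\p)|_{\TPset}$ and its adjoint $Df(\p)^\top|_{\TPset}$ are endomorphisms of the finite-dimensional space $\TPset$. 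A linear endomorphism and its adjoint on a finite-dimensional inner product space have the same rank, so $Df(\p)^\top|_{\TPset}$ being singular forces $Df(\p)|_{\TPset}$ to be singular, completing the proof.

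The main obstacle, such as it is, is just bookkeeping about which operators are to be viewed as acting on $\TPset$ versus on $\mathbb{R}^n$: the matrix representation of $Df(\p)$ is not unique, so one must be careful that the claim ``$Df(\p)|_{\TPset}$ is singular'' is interpreted in the invariant sense (nonzero kernel inside $\TPset$) rather than as a statement about any particular $n \times n$ representative. Once that identification is set up cleanly, the argument is a one-line consequence of \Cref{lemma:derivative-S}.
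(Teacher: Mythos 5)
Your proof is correct and follows essentially the same route as the paper's: apply \Cref{lemma:derivative-S} at a fixed point to get $Df(\p)^\top g(\p) = 0$, then conclude either $g(\p)=0$ or $Df(\p)|_\TPset$ is singular. The only difference is presentational -- you invoke equality of rank between an endomorphism of $\TPset$ and its adjoint, while the paper states the singularity conclusion more tersely -- but the underlying argument is identical.
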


\begin{proof}
Note that \(f(\p)\in \Pset\) for all \(\p\in\Pset\), so \(\partial_{\vec{v}} f(\p)=Df(\p)\vec{v}\in \TPset\) for all \(\vec{v}\in \TPset\). Hence, \(Df(\p)\) defines an automorphism \(Df(\p)|_{\TPset}\).

It follows from \Cref{lemma:derivative-S} that
$Dg(\p)^\top(\p-f(\p)) =Df(\p)^\top g(\p)$. Since $f(\p)-\p=0$, it must be $Df(\p)^\top g(\p)=0$, so either \(g(\p)=0\), or \(Df(\p)^\top\) (and thus also \(Df(\p)\)) is singular when restricted to \(\TPset\).
\end{proof}

Next, we show that the fixed points of \(f\) are almost surely not at points \(\p\) such that \(g(\p)=0\), under our assumptions on the distribution over \(f\).

\begin{lemma}\label{lemma:gneq0}
    Let $\mathcal{F}\defeq\{F(\p)\}_{\p\in \interior{\Pset}}$ be a stochastic process with values in \(\Pset\) and assume that for each \(\p\in \interior{\Pset}\), the random vector \(\Pi_{n-1} F(\p)\) has a density \(h_{\Pi_{n-1} F(\p)}\).
    Then almost surely if \(F(\p)=\p\) for some \(\p\in \interior{\Pset}\) then \(g(\p)\neq 0\). That is,
    \[\mathbb{P}(\exists \p\colon F(\p)=\p\land g(\p)=0)=0.\]
\end{lemma}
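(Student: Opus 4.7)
The plan is to reduce the event in question to a single-point event for the random vector $F(\p_0)$ at a deterministic point $\p_0$, and then use the density assumption to conclude that this event has probability zero.

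First I would argue that there is at most one point in $\interior{\Pset}$ where $g$ vanishes on the tangent space. This uses strict convexity of $G$: if $g(\p_1) = g(\p_2) = 0$ for $\p_1 \neq \p_2$ in $\interior{\Pset}$, then the subgradient inequality $G(\p_2) \geq G(\p_1) + g(\p_1)^\top(\p_2 - \p_1) = G(\p_1)$ and the symmetric one yield $G(\p_1) = G(\p_2)$, contradicting the strict version of the subgradient inequality when $\p_1 \neq \p_2$. (Here we use that both points are in the interior, and that $g(\p_i) \in \TPset$ means the subgradient inequality is sensitive to differences $\p_2 - \p_1 \in \TPset$.) So either no such point exists, in which case the statement is trivial, or there exists a unique deterministic $\p_0 \in \interior{\Pset}$ with $g(\p_0) = 0$.

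Second, I would rewrite the event of interest as
\begin{equation*}
\{\omega : \exists \p \in \interior{\Pset} \text{ with } F(\p)(\omega) = \p \text{ and } g(\p) = 0\} = \{\omega : F(\p_0)(\omega) = \p_0\}.
\end{equation*}
Since $F(\p_0)$ and $\p_0$ both lie in $\Pset$, the equality $F(\p_0) = \p_0$ is equivalent to $\Pi_{n-1} F(\p_0) = \Pi_{n-1} \p_0$, because the final coordinate is determined by the others on $\Pset$.

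Third, I would invoke the hypothesis that $\Pi_{n-1} F(\p_0)$ admits a density $h_{\Pi_{n-1} F(\p_0)}$ on $\mathbb{R}^{n-1}$. A random vector with a density assigns zero probability to any single point, so
\begin{equation*}
\mathbb{P}(\Pi_{n-1} F(\p_0) = \Pi_{n-1} \p_0) = 0,
\end{equation*}
which gives the claim. The main (minor) obstacle is the first step: verifying that $g$ vanishes on $\interior{\Pset}$ at most once, which relies on strict convexity of $G$ together with the tangent-space normalization $g(\p) \in \TPset$ so that the subgradient inequality sees differences within $\interior{\Pset}$. Everything else is essentially bookkeeping.
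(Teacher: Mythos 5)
Your proposal is correct and follows essentially the same argument as the paper: use strict convexity of $G$ to conclude there is at most one $\p^*$ with $g(\p^*)=0$, reduce the event to $\{F(\p^*)=\p^*\}$, and observe that the density assumption for $\Pi_{n-1}F(\p^*)$ makes this a measure-zero singleton. The only difference is that you spell out the uniqueness step via the two-sided subgradient inequality, which the paper asserts without detail; this is a correct and welcome addition, not a departure in approach.
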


\begin{proof}
    First note that if $S$ is strictly proper, then $G$ is strictly convex and so there exists at most one \(\p\in\Pset\) with \(g(\p)=0\). If there is no such point, then we are done. Otherwise, let that point be $\p^*$. Since we assume that \(\Pi_{n-1} F(\p^*)\) has a density function \(h_{\Pi_{n-1} F(\p^*)}\), it follows that
    \[\mathbb{P}(F(\p^*)=\p^*)= \mathbb{P}(\Pi_{n-1} F(\p^*)=\Pi_{n-1} \p^*)=\int_{\{\Pi_{n-1}\p^*\}} h_{\Pi_{n-1} F(\p^*)}(\x)d\x=0.\]
\end{proof}

Lastly, we require a result about random fields. 
The following is adapted from Proposition~6.11 in \citet{azais2009level}.
\begin{proposition}[\cite{azais2009level}, Proposition~6.11]\label{prop:random-field}
    Let \(\mathcal{Y}=\{Y(\x)\}_{\x\in W}\) be a random field with values in \(\mathbb{R}^{d}\) and \(W\) an open subset of \(\mathbb{R}^{d'}\). Let \(\vec{u}\in\mathbb{R}^{d}\) and \(I\subseteq W\). Assume that
    \begin{itemize}
\item the sample paths \(\x\rightsquigarrow Y(\x)\) are continuously differentiable
\item for each \(\x\in W\), \(Y(\x)\) has a density \(h_{Y(\x)}\) and there exists a constant \(C\) such that \(h_{Y(\x)}(\y)\leq C\) for all \(\x\in I\) and \(\y\in\mathbb{R}^{d}\).
\item The Hausdorff dimension of \(I\) is strictly smaller than \(d\).
\end{itemize}
Then, almost surely, there is no point \(\x\in I\) such that \(Y(\x)=\vec{u}\).
\end{proposition}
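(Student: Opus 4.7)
The plan is a first-moment/covering argument. Since $Y$ is $C^1$, it is locally Lipschitz; on a small ball $B(\x_0, r)$, the image $Y(B(\x_0, r))$ lies inside $B(Y(\x_0), Mr)$ whenever $M$ bounds $\|DY\|_{\mathrm{op}}$ there. The uniform density bound then yields $\Prob(Y(\x_0)\in B(\vec u, Mr))\leq C\omega_d (Mr)^d$, where $\omega_d$ is the volume of the unit $d$-ball. Covering $I$ by many such balls and summing should show the probability of any hit vanishes, provided $\dim_H(I)<d$ lets us pick a cover with total $d$-th-power radii arbitrarily small.

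First I would localize. Exhaust $W$ by open sets $W_n$ with $\overline{W_n}$ compact and $\overline{W_n}\subset W$. Since $DY$ is continuous on the compact $\overline{W_n}$, the random variable $M_n\defeq\sup_{\x\in\overline{W_n}}\|DY(\x)\|_{\mathrm{op}}$ is almost surely finite, so $\Prob(M_n\leq M)\to 1$ as $M\to\infty$. By countable subadditivity over $n,M\in\mathbb{N}$, it suffices to show, for every $n$ and every integer $M$, that
\[\Prob\bigl(\{\exists\,\x\in I\cap\overline{W_n} : Y(\x)=\vec u\}\cap\{M_n\leq M\}\bigr)=0.\]
Next, fix any $s\in(\dim_H I,d)$; by definition of Hausdorff dimension, $\mathcal{H}^s(I\cap\overline{W_n})=0$, so for any $\eta,r_0>0$ there is a countable cover $\{B(\x_k,r_k)\}_k$ of $I\cap\overline{W_n}$ with $r_k\leq r_0$ and $\sum_k r_k^s\leq\eta$. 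On the event $\{M_n\leq M\}$, the mean-value inequality forces $\|Y(\x_k)-\vec u\|\leq M r_k$ whenever $Y(\x)=\vec u$ for some $\x\in B(\x_k,r_k)$, so the density bound gives
\[\Prob\bigl(\exists\,\x\in B(\x_k,r_k) : Y(\x)=\vec u,\ M_n\leq M\bigr)\leq \Prob\bigl(Y(\x_k)\in B(\vec u, Mr_k)\bigr)\leq C\omega_d(Mr_k)^d.\]

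Summing over $k$ and using $r_k^d\leq r_0^{d-s}r_k^s$ (valid since $r_k\leq r_0$ and $d>s$),
\[\Prob\bigl(\exists\,\x\in I\cap\overline{W_n} : Y(\x)=\vec u,\ M_n\leq M\bigr)\leq C\omega_d M^d\sum_k r_k^d\leq C\omega_d M^d\,r_0^{d-s}\,\eta.\]
Letting $\eta\to 0$ (equivalently $r_0\to 0$) drives this to zero, and unioning over $n,M$ concludes the proof. The main obstacle is the interplay between the $s$-dimensional Hausdorff content control on $I$ and the $d$-dimensional volume estimate from the density bound: the trick is to interpolate via some $s$ strictly between $\dim_H I$ and $d$, which produces the factor $r_0^{d-s}$ that vanishes with the mesh. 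The $C^1$ hypothesis is used twice---once to turn $\|DY\|\leq M$ into a Lipschitz estimate on each small ball, and once to ensure $M_n<\infty$ almost surely so that the stratification $\{M_n\leq M\}$ exhausts the probability space.
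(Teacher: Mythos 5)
The paper cites this as (an adaptation of) Proposition~6.11 of Aza\"is and Wschebor (2009) and does not reprove it, so there is no in-paper proof to compare against. Your argument is the standard Bulinskaya-type covering/first-moment estimate, which is indeed how such results are proved, and it is essentially sound; here are two small gaps to close.

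First, the uniform bound $h_{Y(\x)}(\y)\le C$ is hypothesized only for $\x\in I$, yet you apply it at the cover centres $\x_k$, which need not lie in $I$. Since each $B(\x_k,r_k)$ meets $I$ (they cover $I\cap\overline{W_n}$), pick $\tilde{\x}_k\in B(\x_k,r_k)\cap I$; on the event $\{M_n\le M\}\cap\{\exists\,\x\in B(\x_k,r_k):Y(\x)=\vec u\}$ the mean-value inequality gives $\|Y(\tilde{\x}_k)-\vec u\|\le 2Mr_k$, and the density bound applied at $\tilde{\x}_k$ yields $C\omega_d(2Mr_k)^d$, costing only a harmless factor $2^d$. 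Second, to run the mean-value estimate on the whole of $B(\x_k,r_k)$ you need the ball to lie inside the set on which $M_n\le M$ controls $\|DY\|_{\op}$, and these balls can protrude out of $\overline{W_n}$; define $M_n$ instead as $\sup_{\x\in\overline{W_{n+1}}}\|DY(\x)\|_{\op}$ and insist $r_0<\mathrm{dist}(\overline{W_n},\partial W_{n+1})$, so every ball stays inside $\overline{W_{n+1}}\subset W$. With those patches the proof is correct. One presentational nit: rather than ``letting $\eta\to 0$ (equivalently $r_0\to 0$)'', just note that $\mathcal{H}^s_\delta(I\cap\overline{W_n})=0$ for every $\delta>0$, so you may fix $\eta=1$ and send $r_0\to 0$; the factor $r_0^{d-s}$ already drives the bound to zero. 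Measurability of the hit-event is left implicit, but the paper explicitly waives measurability concerns, so this is consistent with its conventions.
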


Now we can turn to the proof of the main result.

\fprare*

\begin{proof}

We want to show that almost surely there does not exist \(\p\in \interior{\Pset}\) such that \(F(\p)=\p\) and \(\p\) is performatively optimal. I.e., we want to show that
\[\mathbb{P}(\exists \p\colon F(\p)=\p\land \p\in \argmax \Score(\p,F(\p)))=0.\]

First, let \(\p^*\in \interior{\Pset}\) be a performatively optimal report. By \Cref{lemma:criticalpointatfixedpoint}, either \(g(\p^*)=0\) or \(DF(\p^*)|_{\TPset}\) is singular. Moreover, by assumption, \((\Pi_{n-1} F(\p),\Pi_{n-1} DF(\p)\vec{v})\) has a density function for any \(\vec{v}\in \TPset\cap S^{n-1}\), and thus also \(\Pi_{n-1}F(\p)\) has one. Hence, by \Cref{lemma:gneq0}, it follows that if \(F(\p)=\p\) for some \(\p\in \interior{\Pset}\), then almost surely \(g(\p)=0\). 

Second, we need to show that also almost surely \(DF(\p)|_{\TPset}\) is invertible at any fixed point of \(F\). To that end, define the random field \(\mathcal{Y}:=\{Y(\p,\vec{v})\}_{(\p,\vec{v}) \in W}\) where \(W:=\interior{\Pset}\times \TPset\) and
\[Y(\p,\vec{v}):=(\Pi_{n-1} F(\p)-\Pi_{n-1} \p,\Pi_{n-1}DF(\p)\vec{v}),\]
with values in \(\mathbb{R}^{n-1}\times\mathbb{R}^{n-1}\).

Note that since \(F\) is in \(\mathcal{C}^2\), \(DF\) is continuously differentiable, and thus also \(Y\). Moreover
\[h_{Y(\p,\vec{v})}(\x,\y)=h_{\Pi_{n-1} F(\p),\Pi_{n-1} DF(\p)\vec{v}}(\x+\Pi_{n-1}\p,\y)\leq C\] for \(\x,\y\in\mathbb{R}^{n-1}\) by assumption. Finally, define \(\vec{u}:=(0,0)\in\mathbb{R}^{n-1}\times\mathbb{R}^{n-1}\) and \(I:=\Pset\times (\TPset \cap S^{n-1})\), where \(\TPset \cap S^{n-1}=\{\vec{v}\in \TPset\mid \Vert \vec{v}\Vert=1\}\). Note that the Hausdorff dimension of \(I\) is \(n-1+n-2=2n-3\), while \(Y\)'s values are \(2n-2\)-dimensional.

This shows all conditions of \Cref{prop:random-field}, so we can apply it to \(Y\) to conclude that almost surely there exists no \(\p,\vec{v}\in I\) such that \(Y(\p,\vec{v})=(0,0)\). This means that almost surely there exists no point \(\p\in \Pset\) such that \(F(\p)=\p\) and such that \(DF(\p)|_{\TPset}\) is singular, since if such a point existed, then also there would be a vector \(\vec{v}\in \TPset \cap S^{n-1}\) such that \(DF(\p)\vec{v}=0\) and thus \(\Pi_{n-1}DF(\p)\vec{v}=0\), implying that
\[Y(\p,\vec{v})=(\Pi_{n-1}F(\p)-\Pi_{n-1}\p,\Pi_{n-1}DF(\p)\vec{v})=0.\]

Summarizing our argument, it follows that
\[\mathbb{P}(\exists \p\colon {F(\p)=\p}\land {\p\in \argmax \Score(\p,F(\p))})
\]
\[\leq \mathbb{P}(\exists \p \colon {F(\p)=\p}\land {g(\p)=0})+\mathbb{P}(\exists \p \colon {F(\p)=\p}\land {DF(\p)\text{ is singular}})=0.\]
This concludes the proof.
\end{proof}

We conclude by providing an example of a stochastic process that satisfies our conditions, for the binary prediction case.
\begin{example}
\label{ex:gaussian-process}
Consider a Gaussian process \(\{X(p)\}_{p\in (0,1)}\) with values in \(\mathbb{R}\), with infinitely differentiable kernel and mean functions. We can make it into a process \(F(\p)\) by defining \(F(\p)= (X(p_1), 1-X(p_1))\) for \(\p\in \Delta([2])\). Note that the paths of \(F\) are infinitely differentiable and the values of \(\Pi_1 F(\p)=X(p_1)\) and its directional derivatives \(\Pi_1 DF(\p)\vec{v}=X'(p_1)v_1\) are jointly Gaussian and thus have a bounded density 
\cite[see][Ch.~9.4]{Rasmussen2006}. To deal with the restriction that \(X(p)\in [0,1]\) for \(p\in (0,1)\), we could condition on the event \(E:=\{\forall p\colon  F(p)\in [0,1]\}\), for instance. Then paths are still twice differentiable, and we claim that \(h_{X(p)|E}\), defined as the density of \(X\) at point \(p\), conditional on \(E\), is still bounded. To see that, note that if \(\mathbb{P}(E)>0\), then we are done, since then 
\[h_{X(p)|E}(x)=\frac{\mathbbm{1}_E(x)}{\mathbb{P}(E)}h_{X(p)}(x).\] We leave it as an exercise to the reader to prove that \(\mathbb{P}(E)>0\).
\end{example}




%


\subsection{Proof of Theorem~\ref{theorem:Caspar-approx-fix-point}}

\inaccuracybound*

Recall that we assume that $g(\p)$ is normalized to be orthogonal to $\mathbf{1}$. Note that this is also the choice that minimizes \(\Vert g(\p)\Vert\) and makes sure that \(\Vert g(\p)\Vert = \Vert g(\p)^\top|_{\TPset} \Vert_{\mathrm{op}}\), where $g(\p)^\top|_{\TPset}$ denotes the function $\TPset\rightarrow\mathbb R\colon \mathbf v \mapsto g(\p)^\top \mathbf v$. 
This is due to the Cauchy--Schwarz inequality 
and the Pythagorean theorem, since \(\Vert g(\p)+\alpha \mathbf{1}\Vert^2=\Vert g(\p)\Vert^2 + |\alpha|^2 \Vert \mathbf{1}\Vert^2\) for any \(\alpha\in\mathbb{R}\) when \(g(\p)\in\TPset\). Moreover, by Cauchy--Schwarz, we have \(\Vert g(\p)^\top \vec{v}\Vert \leq \Vert g(\p)\Vert\Vert \vec{v}\Vert\) for any \(\vec{v}\in\TPset\) and if \(g(\p)\in\TPset\) then \(\Vert g(\p)^\top|_{\TPset}\Vert_{\mathrm{op}}\geq\Vert g(\p)^\top g(\p)\Vert/\Vert g(\p)\Vert = \Vert g(\p)\Vert\).

\begin{proof}
Assume \(\p\) is a performatively optimal report and that \(Dg(\p)|_{\TPset}\succeq\gamma_p\).
Note that this is equivalent to all eigenvalues of the function \(Dg(\p)|_{\TPset}\) being at least \(\gamma_p\), assuming \(Dg(\p)|_{\TPset}\) is symmetric. Moreover, \(Dg(\p)\) must be symmetric if \(G\) is twice differentiable (note that continuous differentiability is not needed since we assume differentiability in general, not just existence of the coordinate partial derivatives). This can be used to calculate our bound in practice.

Consider \(\nabla_\p(S(\p,f(\p)))^\top (f(\p)-\p),\) the directional derivative of \(\varphi\colon \p\mapsto S(\p,f(\p))\) in the direction $(f(\p)-\p)$. Note that this derivative must be at most zero: The line from $\p$ to $f(\p)$ lies entirely within the probability simplex, and so if the derivative were positive, $S(\p,f(\p))$ could be increased by moving in the direction of $f(\p)$ from $\p$. By \Cref{lemma:derivative-S}, we know that
\[\nabla_\p(\Score(\p,f(\p)))=Dg(\p)^\top (f(\p)-\p) +  Df(\p)^\top g(\p).\]
It follows that
\begin{align}&0\geq \nabla_\p(S(\p,f(\p)))^\top (f(\p)-\p)=
(f(\p)-\p)^\top Dg(\p) (f(\p)-\p)+g(\p)^\top Df(\p) (f(\p)-\p)\\
\Rightarrow&
-g(\p)^\top Df(\p) (f(\p)-\p)
\geq (f(\p)-\p)^\top  (Dg(\p)) (f(\p)-\p).
\end{align}
Using that \(Dg(\p)|_{\TPset}\succ \gamma_\p\) and thus \((f(\p)-\p)^\top  (Dg(\p)) (f(\p)-\p) \geq \gamma_\p\Vert f(\p)-\p\Vert^2\)
, it follows that
\begin{eqnarray*}
    && \gamma_\p\Vert f(\p)-\p\Vert^2\\
    &\leq & (f(\p)-\p)^\top  Dg(\p) (f(\p)-\p) \\
    &\leq& - g (\p)^\top Df(\p)(f(\p)-\p)\\
    &\leq& \vert g(\p)^\top Df(\p)(f(\p)-\p)\vert\\
    &\underset{\text{Cauchy-Schwarz}}{\leq}& \Vert g(\p)\Vert \Vert Df(\p)(f(\p)-\p)\Vert\\
    &\leq& \Vert g(\p)\Vert \Vert Df(\p)\Vert_{\mathrm{op}}\Vert f(\p)-\p\Vert
\end{eqnarray*}
Dividing by $\gamma_\p\Vert f(\p)-\p\Vert $, we get that $\Vert f(\p)-\p\Vert\leq \Vert Df(\p)\Vert_{\mathrm{op}} \Vert g(\p)\Vert/\gamma_\p$.

For the ``in particular'' part, note that if \(f\) is Lipschitz continuous with constant \(L_f\), then \(\Vert Df(\p)\Vert_{\op}\leq L_f\) for all \(\p\). Moreover, if \(G\) is Lipschitz continuous with constant \(L_G\), we have
\[L_G\geq \Vert DG(\p)\Vert_{\mathrm{op}}=\Vert g(\p)^\top \Vert_{\op}=\Vert g(\p)\Vert\]
for all \(\p\in\Pset\). Here, in the last step, we have used that for the Euclidean norm
\[\Vert g(\p)^\top\Vert_{\op}=\max_{\vec{v}\in \TPset}\frac{g(\p)^\top \vec{v}}{\Vert\vec{v}\Vert}=\frac{g(\p)^\top g(\p)}{\Vert g(\p)\Vert}=\Vert g(\p)\Vert. \]
Lastly, \(G\) being \(\gamma\)-strongly convex implies that \(D^2G(\p)\succeq \gamma\) for all \(\p\in\Pset\), and thus also \(Dg(\p)=D^2G(\p)^\top\succeq \gamma\).

Putting everything together, we get
\[    \Vert f(\p)-\p\Vert
\leq
\frac{\Vert g(\p)\Vert \Vert Df(\p)\Vert_{\mathrm{op}}}{\gamma_p}
\leq 
\frac{L_GL_f}{\gamma}\]
for all performatively optimal reports \(\p\).
\end{proof}

\subsection{Proof of Theorem~\ref{thm:distance-to-fp}}

\bounddisttofp*

\begin{proof}
For \textit{any} \(\p\in \Delta(\mathcal{N})\), we have
\begin{eqnarray*}
\Vert \p-\p^*\Vert
&\underset{\text{triangle ineq.}}{\leq} & \Vert \p -f(\p)\Vert + \Vert f(\p)-\p^*\Vert \\
&\underset{p^*\text{ fixpoint}}{=}& \Vert \p -f(\p)\Vert + \Vert f(\p)-f(\p^*)\Vert\\
&\leq& \Vert\p-f(\p)\Vert + L_f\Vert \p-\p^*\Vert 
\end{eqnarray*}
Solving for $\Vert \p-\p^*\Vert $ yields
\begin{equation*}
\Vert \p-\p^*\Vert\leq \frac{\Vert \p-f(\p)\Vert}{1-L_f}.
\end{equation*}
Hence, if \(\p\in \Pset\) is an optimal prediction, it follows by \Cref{theorem:Caspar-approx-fix-point} that
\begin{eqnarray*}
\Vert \p-\p^*\Vert_2&\leq& \frac{\Vert \p-f(\p)\Vert}{1-L_f}\\
&\leq& \frac{\Vert Df(\p)\Vert_{\mathrm{op}} \Vert g(\p)\Vert}{\gamma_{\p}(1-L_f)}\\
&\leq& \frac{L_f \Vert g(\p)\Vert}{\gamma_{\p}(1-L_f)},
\end{eqnarray*}
which concludes the proof.
\end{proof}

\subsection{There is no non-trivial bound on the distance to the fixed point as \(L_f\rightarrow 1\)}
\label{appendix:Lf1-no-non-trivial-bound}

We here show why Theorem~\ref{thm:distance-to-fp} requires that we have some bound $L_f<1$ on the function $f$. Specifically, we show that if $f$ can have Lipschitz constants arbitrarily close to $1$, then even in the two-outcome case, only the trivial bound on the difference to the fixed point holds. (The trivial bound is $\lVert \p^*-\p \rVert \leq \sqrt{2}$, because any two points in $\Delta(\{1,2\})$ are at most $\lVert (0,1) - (1,0) \rVert=\sqrt{2}$ apart.) We prove that this holds even for the binary case.

\begin{proposition}
\label{prop:Lf1-no-non-trivial-bound}
    Consider the case of two outcomes, i.e., let $\mathcal{N}=\{1,2 \}$. Let $S$ be any strictly proper scoring rule. Then there exist functions $f$ with Lipschitz constants smaller than $1$ such that $\lVert \p^*-\p \rVert$ is arbitrarily close to $\sqrt{2}$, where $\p^*$ is the fixed point of $f$ and $\p$ is the optimal prediction for $S,f$.
\end{proposition}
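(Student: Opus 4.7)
The plan is to construct, for any $\delta > 0$, a function $f$ with Lipschitz constant $L_f < 1$ whose unique fixed point $\p^*$ and performatively optimal report $\p$ sit near opposite corners of $\Delta(\{1,2\})$, so that $\Vert \p^* - \p \Vert > \sqrt{2} - \delta$ (recall $\sqrt{2}$ is the diameter of the simplex). Parameterizing distributions by $p := p_1 \in [0,1]$, I would use a \emph{V-shaped} map $f(p) := p^* + L_f \vert p - p^* \vert$ with $p^*$ close to $0$ and $L_f$ close to $1$. Direct inspection shows that $f$ is $L_f$-Lipschitz, maps $[0,1]$ into $[0,1]$, and has $\p^* = (p^*, 1-p^*)$ as its unique fixed point, since $f(p) > p^*$ for $p \neq p^*$ and the contraction bound rules out further fixed points on either side of $p^*$. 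I prefer the V-shape to an affine $f$ because it makes $f(0)$ large (namely $p^*(1+L_f)$ instead of $p^*(1-L_f)$), which for rules like the quadratic strongly disincentivizes predicting the ``near'' corner $(0,1)$ and leaves $\p^*$ and the opposite corner $(1,0)$ as the only serious candidates for the optimum.

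The main analytic step uses the Gneiting--Raftery form $\tilde S(p) := S((p,1-p), f(\p)) = G(\p) + g(\p)^\top (f(\p)-\p)$ and compares $\tilde S$ at a point $p$ near $1$ against $\tilde S(p^*) = G(\p^*)$. For V-shaped $f$, $\Vert f(\p)-\p \Vert = \sqrt{2}(1-L_f)\vert p-p^* \vert$, so $\tilde S(p) \to G(\p)$ pointwise as $L_f \to 1$. Since $G$ is strictly convex and hence maximized at the corners, the asymptotic comparison reduces to showing that $G(\p) - G(\p^*)$ strictly dominates the gradient penalty of order $(1-L_f)\Vert g(\p) \Vert$ for $\p$ near the far corner.

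The main obstacle is scoring rules whose subgradient blows up at the corners (e.g., the logarithmic rule), so that $\tilde S(1) = -\infty$ whenever $f(1) < 1$. There I would evaluate $\tilde S$ at $\p = (1-\epsilon, \epsilon)$ for small $\epsilon > 0$ and coordinate the rates at which $\epsilon$, $p^*$, and $1-L_f$ shrink. For the quadratic, $\epsilon = 0$ and $p^* = (1-L_f) + o(1-L_f)$ suffice and give distance $\sqrt{2}(L_f - o(1)) \to \sqrt{2}$. For the logarithmic rule, the choice $p^* = \sqrt{1-L_f}$, $\epsilon = 1-L_f$ works: $\tilde S(p^*)$ vanishes like $-\sqrt{1-L_f}\,\vert \log(1-L_f) \vert$ whereas $\tilde S(1-\epsilon)$ vanishes faster, like $-(1-L_f)\,\vert \log(1-L_f) \vert$, so the desired inequality holds, and the distance $\sqrt{2}(1 - \sqrt{1-L_f} - (1-L_f))$ tends to $\sqrt{2}$. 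In general, because $G$ and $g$ are finite on the interior for any strictly proper $S$, one can always pick $p^*$ to shrink more slowly than $\epsilon$, which in turn shrinks more slowly than $1-L_f$, ensuring the $G$-gap dominates the gradient penalty.
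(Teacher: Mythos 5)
Your construction is in the same spirit as the paper's proof — a piecewise-linear $f$ with slope approaching $1$, fixed point near one corner, optimal report near the other — but two steps do not go through. First, you fix $\p^*$ near $(0,1)$ in all cases and invoke ``$G$ is maximized at the corners'' to argue the expert moves toward $(1,0)$. Strict convexity only forces the maximum of $G$ onto the boundary, not onto \emph{both} corners: for an asymmetric rule such as $G(\p)=e^{-cp_1}$ with large $c$, one has $G((0,1))=1\gg e^{-c}=G((1,0))$, the expert has no incentive to leave the vicinity of $\p^*$, and $\Vert\p-\p^*\Vert$ stays small rather than approaching $\sqrt{2}$. The paper handles exactly this with a dichotomy (by strict convexity of $x\mapsto S(x,x)$, either $S(\delta,\delta)$ or $S(1-\delta,1-\delta)$ dominates the mid-range honest scores) and places the fixed point at the dominated end. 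Your V-shape also cannot simply be mirrored to put $p^*$ near $1$, since then $f(0)=p^*(1+L_f)>1$ escapes the simplex.

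Second, and more fundamentally, you only verify $\tilde S(1-\epsilon)>\tilde S(p^*)$. That is a comparison at two points; the proposition needs the \emph{global} maximizer of $p\mapsto S(p,f(p))$ to lie near the far corner, so intermediate optima (which would leave $\Vert\p-\p^*\Vert$ far below $\sqrt{2}$) must be excluded too. The paper does this via propriety — $S(p,f(p))\le S(f(p),f(p))=G(f(p))$ for every report $p$ — combined with the specific shape of its $f$: a ramp followed by a constant plateau at the fixed point, so that $f$ maps all of $[2\delta,1]$ into the mid-range interval $[2\delta,1-2\delta]$ where $G$ is dominated by $G(\delta)$. The plateau is structural, not cosmetic. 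With your V-shape the image of $f$ stretches up to $f(1)=L_f+p^*(1-L_f)\approx 1$, so for reports $p$ close to $1$ the propriety bound gives $S(p,f(p))\le G(f(p))\approx G((1,0))$, which excludes nothing, and ruling out intermediate optima would then require a separate, rule-specific argument that the proposal does not supply.
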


\begin{figure}
    \centering
    \includegraphics[width = 0.5\linewidth]{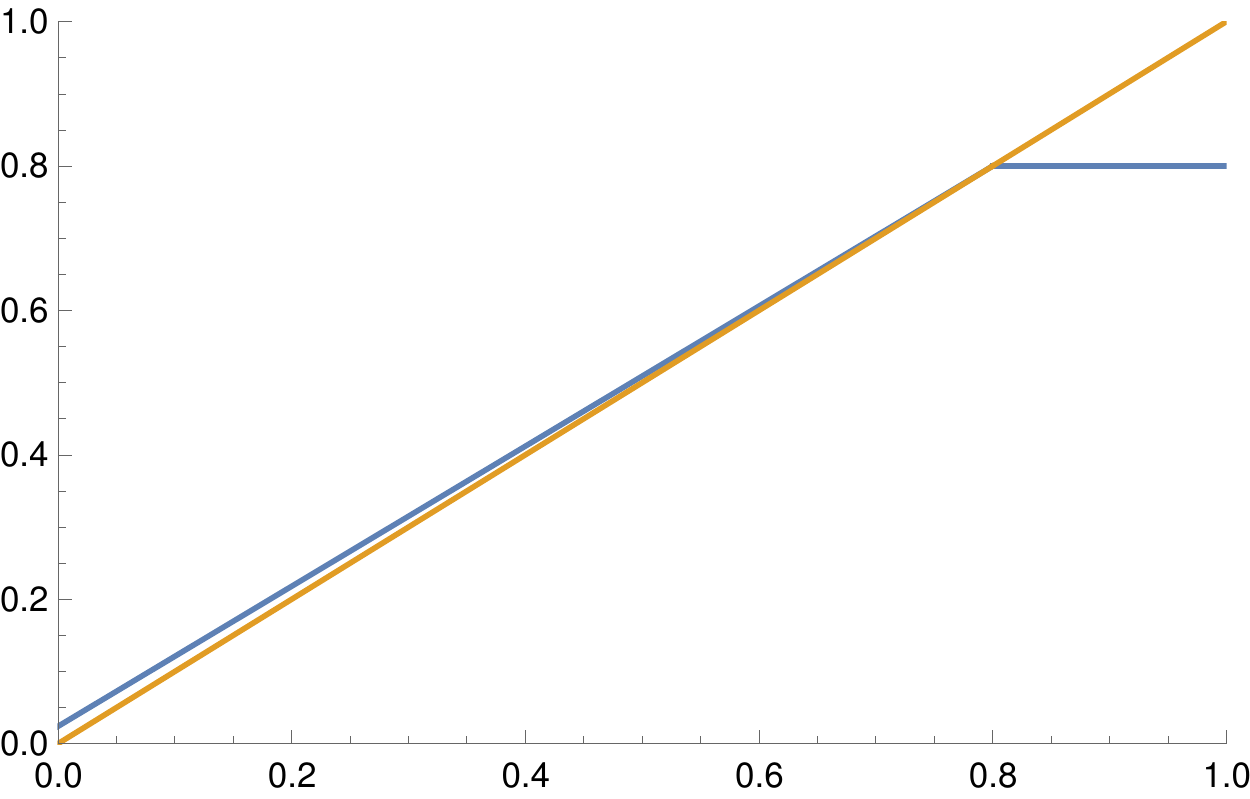}
    \caption{The blue line is the function used in the proof of \Cref{prop:Lf1-no-non-trivial-bound}. The orange line is the identity function.}
    \label{fig:funct-proof-of-prop:Lf1-no-non-trivial-bound}
\end{figure}

We here give some intuition for why the result holds. Recall that, roughly speaking, scoring rules generally induce a preference for extreme honest predictions over non-extreme honest predictions (see \Cref{preferences-between-fps}). In particular, in the binary case any scoring rule must either incentivize near-0 honest predictions or near-1 honest predictions (or both) over honest relatively close-to-uniform predictions. Consider the case where $S$ incentivizes predictions close to $0$ over more uniform predictions and take the function $f$ in \Cref{fig:funct-proof-of-prop:Lf1-no-non-trivial-bound}. The unique fixed point is at $0.8$. But if a prediction close to $0$ is made, the prediction is \textit{approximately} honest while more extreme than $0.8$. It turns out that a slight dishonesty (discrepancy between the report $\p$ and the true distribution $f(\p)$) can be outweighed by the fact that the prediction is more extreme. Predicting near $0$ may therefore be a better report than a of prediction $0.8$.

Note that in this example, the distance of the optimal report to fixed point ($\lVert \p - \p^* \rVert$) and the inaccuracy of the optimal report ($\lVert \p - f(\p) \rVert$) come apart: The optimal report might be far from the fixed point but still very accurate.

\begin{proof}
For notational convenience, we consider functions $f:[0,1]\rightarrow [0,1]$ on a single probability and similarly scoring rules $S:[0,1]\times [0,1]\rightarrow \mathbb{R}$.

Let $\zeta$ be a small positive real number and let $\delta$ be s.t.\ $0<\delta<\zeta/2$. By the strict convexity of the function $x\mapsto S(x,x)$, one of the following must be the case:
\begin{enumerate}[nolistsep]
    \item $S(\delta,\delta)>\max_{x\in [2\delta,1-2\delta]} S(x,x)$; or
    \item $S(1-\delta,1-\delta)>S(x,x)$ for all $x$ between $2\delta$ and $1-2\delta$.
\end{enumerate}

Consider the first case: Now for small positive $\epsilon$ consider the function $f_\epsilon$ that starts at some small positive value and increases linearly at rate $1-\epsilon$ from $0$ to $1-\zeta$ and is fixed at value $1-\zeta$ from $1-\zeta$ to $1$. \Cref{fig:funct-proof-of-prop:Lf1-no-non-trivial-bound} illustrates this function for $\zeta = 0.2, \epsilon = 0.03, \delta = 0.1$. Formally, $f_\epsilon(p)=1-\zeta$ for $p\geq \zeta$ and otherwise $f_\epsilon(p)=1-\zeta - (1-\zeta - p)(1-\epsilon)$. Note that $f_\epsilon$'s fixed point is $1-\zeta$ and $f_\epsilon$'s Lipschitz constant is $1-\epsilon$.
We will show that for small enough $\epsilon$ the optimal report for $f_\epsilon$ and $S$ is then close to $0$ and thus almost $1$ away from the fixed point, which means the distance is close to $\sqrt{2}$ in the simplex. 
Note that by continuity of $f_\epsilon$ and linearity (and thus continuity) of $S(p,q)$ in $q$, we have that
$S(\delta,f_\epsilon(\delta)) \rightarrow S(\delta,\delta)$ as $\epsilon\rightarrow 0$. Thus, for small enough $\epsilon$, we have for all $x$ between $2\delta$ and $1-2\delta$ that $S(\delta,f_\epsilon(\delta))>S(x,x)$. It follows that the optimal report $p$ cannot have $f_\epsilon(p)\in [2\delta,1-2\delta]\supset [\zeta,1-\zeta]$, because then we would have that $S(\delta,f_\epsilon(\delta))>S(f_\epsilon(p),f_\epsilon(p))>S(p,f_\epsilon(p))$, i.e., $\delta$ would be a better report. By construction of $f_\epsilon$, this means that the optimal report cannot be in $[2\delta,1]$. 
Thus, the distance of the optimal report to the fixed point is at least $1-\zeta-2\delta$. By choosing $\delta$ and $\zeta$ to be small, we can make this arbitrarily close to $1$.

The second case can be considered analogously, by considering a function $f$ that is constant at value $\zeta$ from $0$ to $\zeta$ and then increases linearly at rate $1-\epsilon$.
%
%
%
%
%
\end{proof}

\subsection{Proof of Theorem~\ref{theorem:two-outcomes-arbitrarily-good-bounds}}

\exponentialthm*

\begin{proof}
Consider the exponential scoring rule defined by $G(\p)=\frac{2}{K}e^{Kp_1}$ and $g(\p)=(e^{Kp_1},-e^{Kp_1})^\top$ s.t.\ $Dg(\p)=\begin{pmatrix} Ke^{Kp_1} & -Ke^{Kp_1}\\ 0& 0\end{pmatrix}$ and $\Vert g(\mathbf{p})\Vert =\sqrt{2}e^{Kp_1}$. The only eigenvalue of $Dg(\p)\vert_{\TPset}$ is $Ke^{Kp_1}$. Thus, $Dg(\p)\succeq Ke^{Kp_1}$. Therefore, by \Cref{theorem:Caspar-approx-fix-point}, the optimal report $\p$ satisfies $\Vert \p -f(\p)\Vert \leq \sqrt{2}L_f/K$. Thus, by choosing $K=L_f/(\sqrt{2}\epsilon)$, we obtain the desired bound. If $L_f<1$, then by \Cref{thm:distance-to-fp} we further have that $\Vert \p -\p^*\Vert  \leq \frac{L_f}{(1-L_f)}\frac{\sqrt{2}}{K}$, so that we can achieve the desired bound by setting $K=(1-L_f)/(\sqrt{2}\epsilon L_f)$.
\end{proof}

\subsection{Proof of Theorem \ref{thm:need-exponential-new}}


Throughout this section we use the following simplifying notation. Let $S$ be a proper scoring rule for the two-outcome case with $G,g$ as per \Cref{theorem:gneiting-raftery}. Then for a single probability $p\in [0,1]$ we define $G(p)=G(p,1-p)$ and $g(p)=(1,-1)g(p,1-p)$. And $S(p,q)=S((p,1-p),(q,1-q))$. Then we have that $S(p,q)=g(p)(q-p)+G(p)$, where $g$ as a function on $[0,1]$ is a subgradient of $G$ as a function on $[0,1]$. Conversely, note that any function of this form induces a proper scoring rule on $\Delta(\{1,2\})$.

First, we prove a result that reduces the claim about $S$ to a claim about $g$.

\begin{lemma}\label{lemma:bounding-S-ratios-bounding-g-ratios}
    Let $[p_1,p_2]$ be any interval and $S$ be a proper scoring rule defined via $g$ as usual. Then
    \begin{equation*}
        \frac{\sup_{p\in [p_1,p_2]} S(p,p)-S(p+x,p)}{\inf_{p\in [p_1,p_2]} S(p,p)-S(p+x,p)} \geq \frac{1}{4} \frac{\sup_{p\in [p_1,p_2]} g(p+x)-g(p)}{\inf_{p\in [p_1,p_2]} g(p+x)-g(p)}.
    \end{equation*}
\end{lemma}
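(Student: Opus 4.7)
The plan is to reduce both sides of the claimed inequality to integrals involving $g$ and then compare them via a case analysis. First I would use the Gneiting--Raftery representation $S(p,q) = G(p) + g(p)(q-p)$ to write
\[
H(p) := S(p,p) - S(p+x,p) = G(p) - G(p+x) + x\, g(p+x) = \int_p^{p+x} \bigl(g(p+x) - g(t)\bigr)\, dt,
\]
where the second equality uses that a subgradient $g$ of the convex function $G$ satisfies $G(p+x) - G(p) = \int_p^{p+x} g(t)\, dt$. Writing $h(p) := g(p+x) - g(p)$, the claim becomes $\sup H / \inf H \geq \tfrac14\, \sup h / \inf h$ on $[p_1,p_2]$.

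Next I would establish two pointwise bounds. Because $G$ is convex, $g$ is non-decreasing, so for $t \in [p, p+x]$ we have $g(p+x) - g(t) \leq h(p)$; integrating gives the upper bound $H(p) \leq x\, h(p)$, and hence $\inf H \leq x \inf h$. For the matching lower bound on $\sup H$, I would let $M := \sup_{[p_1,p_2]} h$, pick $p^* \in [p_1,p_2]$ with $h(p^*)$ arbitrarily close to $M$, and split the total increment $h(p^*) = g(p^*+x) - g(p^*)$ into the growth on $[p^*, p^*+x/2]$ and on $[p^*+x/2, p^*+x]$. If at least half of the increment lies in the second half, i.e. $g(p^*+x) - g(p^*+x/2) \geq h(p^*)/2$, monotonicity of $g$ lets me bound the integrand defining $H(p^*)$ from below by $g(p^*+x) - g(p^*+x/2)$ on the first half of the domain of integration, giving $H(p^*) \geq (x/2)\cdot h(p^*)/2 = x\, h(p^*)/4$. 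Otherwise the increment is concentrated in the first half, and I would shift to $p^{**} := p^* - x/2$: the new interval $[p^{**}, p^{**}+x]$ now has the big growth of $g$ sitting in its second half, and the same monotonicity argument applied to $H(p^{**})$ yields $H(p^{**}) \geq (x/2)\bigl(g(p^*+x/2) - g(p^*)\bigr) \geq x\, h(p^*)/4$. Moreover, $h(p^{**}) \geq g(p^*+x/2) - g(p^*) \geq h(p^*)/2$, so $p^{**}$ is legitimate as a near-maximizer as well.

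Combining $\sup H \geq xM/4$ with $\inf H \leq xm$ (where $m := \inf h$) would then give $\sup H / \inf H \geq (xM/4)/(xm) = \tfrac14\, M/m$, which is the claim. The main obstacle will be the boundary subcase of the shift step: when $p^* < p_1 + x/2$, the shifted point $p^{**}$ falls outside $[p_1,p_2]$ and the argument above is unavailable. I expect to handle this either by selecting the near-supremum $p^*$ carefully away from the left endpoint whenever possible, or by noting that the ``bad'' case where $p^{**}$ is needed is precisely when most of $g$'s growth is concentrated near $p^*$ --- but then the integrand $g(p+x) - g(t)$ is close to $h(p^*)$ on a long initial subinterval of $[p, p+x]$ for any $p$ slightly to the left of $p^*$ inside $[p_1,p_2]$, so a direct estimate (rather than the loose halving argument) still delivers $H \geq x\, h(p^*)/4$ and thereby closes the last gap.
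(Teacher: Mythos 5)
Your plan follows essentially the same skeleton as the paper's argument: both start from the sandwich $\tfrac{x}{2}\bigl(g(p+x)-g(p+x/2)\bigr) \leq S(p,p)-S(p+x,p) \leq x\bigl(g(p+x)-g(p)\bigr)$ and then try to bound $\sup_{[p_1,p_2]}\bigl(g(p+x)-g(p+x/2)\bigr)$ from below by $\tfrac12\sup_{[p_1,p_2]}\bigl(g(p+x)-g(p)\bigr)$ via a halving-and-shifting argument. The boundary subcase you flag --- when the near-maximizer $p^*$ of $h(p):=g(p+x)-g(p)$ lies in $[p_1,p_1+x/2)$ so that $p^{**}=p^*-x/2\notin[p_1,p_2]$ --- is a real gap, and neither of the two repairs you sketch closes it.

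To see why, take $[p_1,p_2]=[0,\tfrac12]$, $x=\tfrac1{10}$, and let $g(t)=t+M\mathbbm{1}[t>a]$ with $a$ small (say $a=10^{-3}$) and $M$ large; this is the subgradient of a strictly convex $G(p)=\tfrac{p^2}{2}+M\max(p-a,0)$ and hence defines a strictly proper $S$. Then $h$ equals $x+M$ on $[0,a]$ and $x$ on $(a,\tfrac12]$, so $\sup h/\inf h=1+M/x$, with the supremum attained only in the sliver at the left endpoint. A short computation gives $S(p,p)-S(p+x,p)=\tfrac{x^2}{2}+M\max(a-p,0)$, whose sup/inf ratio on $[0,\tfrac12]$ is $1+2Ma/x^2$, which tends to $1$ as $a\to 0$ while $1+M/x$ stays fixed; the claimed inequality therefore fails. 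Your first repair (pick $p^*$ away from the left endpoint) is impossible here, since $h$ is near its supremum only on $[p_1,a]$; your second repair (estimate $H(p)$ directly for $p$ slightly to the left of $p^*$) has no room to move when $p^*\approx p_1$, and in this example the desired bound $H(p)\geq xh(p^*)/4$ fails for every $p\in[p_1,p_2]$ because the integrand $g(p+x)-g(t)$ is near $h(p^*)$ only on an interval of length $a$, not a ``long'' one. The paper's own proof elides the shift step entirely and tacitly asserts the very inequality $\sup_{[p_1,p_2]}\bigl(g(p+x)-g(p+x/2)\bigr)\geq\tfrac12\sup_{[p_1,p_2]}\bigl(g(p+x)-g(p)\bigr)$ that this example violates, so you have correctly located the soft spot. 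A clean fix is to restrict the right-hand supremum to $[p_1+x/2,p_2]$ (so the shift never leaves the domain of the supremum); the downstream combination with \Cref{lemma:g-ratios} then goes through on a slightly shrunk interval, but the lemma cannot be proved as stated.
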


\begin{proof}
For our proof, we will use the following bounds:
    \begin{eqnarray*}
        S(p,p)-S(p+x,p) &=& x g(p+x) - \int_p^{p+x} g(t)dt\\
        &\geq & xg(p+x)- xg(p+x)/2 - xg(p+x/2)/2\\
        &=& x(g(p+x)-g(p+x/2))/2\\
        S(p,p) - S(p+x,p) &=& xg(p+x)-\int_p^{p+x} g(t)dt\\
        &\leq & xg(p+x) - xg(p)\\
        &=& x(g(p+x) - g(p)).
    \end{eqnarray*}
Using these bounds, we can prove the lemma as follows:
    \begin{eqnarray*}
        \frac{\sup_p S(p,p)-S(p+x,p)}{\inf_p S(p,p)-S(p+x,p)}
        &\geq& \frac{\sup_p x(g(p+x)-g(p+x/2))/2}{\inf_p x(g(p+x) - g(p))}\\
        &=& \frac{1}{2}\frac{\sup_p g(p+x)-g(p+x/2)}{\inf_p g(p+x) - g(p)}\\
        &\geq & \frac{1}{4}\frac{\sup_p g(p+x)-g(p)}{\inf_p g(p+x) - g(p)}.
    \end{eqnarray*}
\end{proof}

\begin{lemma}\label{compounding_hops}
    Let $y\geq 0$, $h > 0$.
    Let $g\geq 0$ be strictly increasing on $[a,b]$ s.t.\ $g(x+h)-g(x)\geq y g(x)$ for all $x\in [a,b]$. Then
    \begin{equation*}
        \frac{\sup_{x\in [a,b]} g(x+h)-g(x)}{\inf_{x\in [a,b]} g(x+h)-g(x)} \geq y(1+y)^{\lfloor (b-a)/h \rfloor-1}.
    \end{equation*}
\end{lemma}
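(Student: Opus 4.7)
The plan is to anchor both $M := \sup_{x\in[a,b]}(g(x+h)-g(x))$ and $m := \inf_{x\in[a,b]}(g(x+h)-g(x))$ to the pair of values $g_0 := g(a)$ and $g_1 := g(a+h)$, and then compare them. Set $K := \lfloor (b-a)/h \rfloor$; I will focus on the case $K \geq 1$, the case $K=0$ being essentially vacuous since then the right-hand side $y(1+y)^{-1}$ is at most $1 \leq M/m$.

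For the infimum, taking $x = a \in [a,b]$ in the defining family gives immediately $m \leq g(a+h) - g(a) = g_1 - g_0$, and the hypothesis $g \geq 0$ yields $g_1 - g_0 \leq g_1$. For the supremum, I would iterate the multiplicative hypothesis. Applying $g(x+h) \geq (1+y)g(x)$ successively at $x = a+h,\, a+2h,\, \ldots,\, a+(K-1)h$---each of which sits in $[a,b]$ by the choice of $K$---yields by induction $g(a+Kh) \geq (1+y)^{K-1} g_1$. One further application of the hypothesis at $x = a+Kh$ (still in $[a,b]$) then gives $g(a+(K+1)h) - g(a+Kh) \geq y\, g(a+Kh) \geq y(1+y)^{K-1} g_1$, so $M \geq y(1+y)^{K-1} g_1$.

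Combining the two estimates,
\[ \frac{M}{m} \;\geq\; \frac{y(1+y)^{K-1} g_1}{g_1 - g_0} \;\geq\; y(1+y)^{K-1}, \]
where the last inequality uses $g_1 \geq g_1 - g_0 > 0$: strict positivity of $g_1 - g_0$ comes from $g$ being strictly increasing, and $g_1 \geq g_1 - g_0$ from $g_0 \geq 0$. The only real subtlety is index bookkeeping: the definition $K = \lfloor (b-a)/h \rfloor$ is exactly what permits the recursion to be applied $K$ times starting at $x = a+h$, and this is why the exponent in the final bound is $K-1$ rather than $K$---one factor of $(1+y)$ has been absorbed into the step from $g_0$ to $g_1$ when we upper-bounded $m$ by $g_1 - g_0$ rather than by something smaller. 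I do not anticipate any genuine obstacle beyond verifying this index bookkeeping.
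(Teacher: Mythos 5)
Your proof is correct and follows essentially the same route as the paper's: bound the infimum above by $g(a+h)-g(a) \leq g(a+h)$ (using $g\geq 0$), bound the supremum below by $g(a+(K+1)h)-g(a+Kh) \geq y\,g(a+Kh)$, and use the iterated multiplicative hypothesis to relate $g(a+Kh)$ to $(1+y)^{K-1}g(a+h)$. The only difference is cosmetic notation ($g_0,g_1$), so no further comment is needed.
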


\begin{proof}
    Let $N = \lfloor \frac{b-a}{h} \rfloor$.
    
    Note that $g(x+h) \geq (1+y)g(x)$ for $x \in [a,b]$. Thus, iterating, we get that $g(a+Nh)\geq (1+y)^{N-1}g(a+h)$.

    As a consequence, since $a+Nh \leq b$, we have:
    \begin{align*}
        g(a+(N+1)h)-g(a+Nh) &\geq y g(a+Nh)\\
        &\geq y(1+y)^{N-1}g(a+h)\\
        &\geq y(1+y)^{N-1}(g(a+h)-g(a))
    \end{align*}
    
    And so:
    \begin{align*}
         \frac{\sup_{x\in [a,b]} g(x+h)-g(x)}{\inf_{x\in [a,b]} g(x+h)-g(x)} \geq \frac{g(a+Nh + h)-g(a+Nh)}{g(a+h)-g(a)}\geq y(1+y)^{N-1}
    \end{align*}
\end{proof}

\begin{lemma}\label{exp_hops_somewhere}
Let $S$ defined via $g$ as usual be a proper scoring rule. 
Let $\epsilon>0,L>0$. Assume that $S$ has the following property: For every $f$ with Lipschitz constant $L$, we have $|p^*-f(p^*)|\leq \epsilon$ for the optimal report(s) $p^*$.
Let $\delta = \frac{\epsilon}{L+1}$.
Then for every $2\delta$ interval contained in $[0,1-3\epsilon+2\delta]$
, there is a $p$ in that interval such that ($p+2\delta\leq 1$ and)
\begin{equation*}
    g(p+2\delta)-g(p) \geq \frac{2L}{L+3}g(p).
\end{equation*}
\end{lemma}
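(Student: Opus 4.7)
The plan is to argue by contrapositive. Suppose some interval $[a, a+2\delta] \subseteq [0, 1-3\epsilon+2\delta]$ satisfies $g(p+2\delta) - g(p) < y\, g(p)$ strictly for all $p \in [a, a+2\delta]$, where $y := \frac{2L}{L+3}$. I will exhibit an $L$-Lipschitz function $f$ whose performatively optimal report $p^*$ satisfies $|p^* - f(p^*)| > \epsilon$, contradicting the hypothesis on $S$. Set $c := a + 3\delta$ and define $f(p) := \mathrm{clip}_{[0,1]}\bigl((1+L)c - Lp\bigr)$; the conditions $a \leq 1 - 3\epsilon$ and $\delta \leq \epsilon$ imply $c \leq 1$ and $f(a) = a + 3\epsilon \leq 1$, so $f$ coincides with the linear formula on $[a,c]$, is $L$-Lipschitz, and takes values in $[0,1]$.

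Write $h(p) := S(p, f(p))$. A short computation shows $h$ is decreasing on $(c, 1]$, so any optimal $p^* \leq c$; moreover, for $p^* < a+2\delta$ one verifies $|p^* - f(p^*)| > \epsilon$ directly (either because $f$ is clipped to $1$, giving $1 - p^* > 3\epsilon$, or because $|p^* - f(p^*)| = (1+L)(c-p^*) > (1+L)\delta = \epsilon$ on the unclipped part). Thus it suffices to rule out $p^* = a + 2\delta + v$ with $v \in [0, \delta]$.

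The crux is to show $h(a + k\delta) > h(p^*)$ for a suitable $k \in [0,2]$, contradicting optimality. Applying the strict growth bound at $p = a+v \in [a,a+2\delta]$ together with monotonicity of $g$ gives $g(a+2\delta+v) < (1+y)\,g(a+k\delta)$ whenever $v \leq k\delta$. Combining this with $G(a+2\delta+v) - G(a+k\delta) \leq g(a+2\delta+v)\bigl[(2-k)\delta + v\bigr]$ and substituting $(1+y) = \frac{3(1+L)}{L+3}$, the algebra (using the identity $(3-k)(L+3) - 3(3+L-k) = -kL$) collapses to
\begin{equation*}
h(a+k\delta) - h(p^*) \;\geq\; \frac{L(1+L)}{L+3}\, g(a+k\delta) \,(3v - k\delta).
\end{equation*}
For $v > 0$, the choice $k := \tfrac{3v}{2\delta} \in (0, 3/2]$ satisfies both $v \leq k\delta$ and $3v - k\delta = 3v/2 > 0$, yielding strict inequality whenever $g(a+k\delta) > 0$ (the case $g \equiv 0$ on $[a, a+2\delta]$ makes the conclusion trivially hold). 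For $v = 0$, taking $k = 0$ and using $g(a+2\delta) < (1+y)\,g(a)$ directly yields $h(a) > h(a+2\delta)$. The main obstacle is the careful algebraic manipulation producing the clean constant $y = \frac{2L}{L+3}$, together with the delicate coordination of $k$ with $v$ needed to simultaneously enforce $v \leq k\delta$ and $3v > k\delta$.
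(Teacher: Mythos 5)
Your proposal is correct and uses essentially the same construction as the paper: your $f$ is the same clipped linear ramp of slope $-L$ with fixed point at $c=a+3\delta$ (the paper's $p_0$), and the core estimate comes from the same pairing of the subgradient inequality with optimality of $p^*$ against a report shifted left by $O(\delta)$ — the paper compares against $p^*-2\delta$, i.e.\ $k=v/\delta$, and your $k=3v/(2\delta)$ is an equivalent admissible choice; the contrapositive framing versus the paper's direct derivation of the growth bound is only a cosmetic difference. One dependence you should make explicit: both the claim that $h$ is decreasing on $(c,1]$ (since $h'(p)=-Lg(p)+g'(p)(f(p)-p)$ on the ramp needs $g(p)>0$ there) and the final strict inequality require $g>0$, which does follow from the contrapositive hypothesis — applying it at $p=a$ gives $0\le g(a+2\delta)-g(a)<y\,g(a)$, forcing $g(a)>0$ and hence $g>0$ on $[a,1]$ by monotonicity — so the ``$g\equiv 0$'' side case you flag is actually vacuous rather than trivial, and the clipped-to-$1$ subcase yields $1-p^*\ge 3\epsilon$ (not necessarily strict), which still exceeds $\epsilon$ as needed.
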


(Note that this result doesn't assume $g>0$. However, note that the consequent of the lemma is vacuous if $g(p)\leq 0$ (since $g$ is monotone increasing.)

\begin{proof}
We shall show, equivalently, that for every interval of width $2\delta$ in $[2\delta, 1-3\epsilon+4\delta]$ \ec{should we add a note here that the top of this interval can be above 1 but that that's okay?}, there is some $p$ (in $[0,1]$) contained in the interval such that:
\begin{equation}\label{g_deriv_bound}
g(p)-g(p-2\delta)\geq\frac{2L}{L+3}g(p-2\delta)
\end{equation}

Given an interval of width $2\delta$ in $[2\delta,1-3\epsilon+4\delta]$
, write the interval as $[p_0-\delta,p_0+\delta]$, where $p_0 \in [3\delta, 1-3L\delta] = [3\delta, 1 - 3\epsilon + 3\delta] \subseteq [0,1]$.

Then, we construct $f$ as follows.

Let
\begin{align}
k_2 &\defeq p_0\left(1+\frac{1}{L}\right)\\
k_1 &\defeq k_2 - \frac{1}{L} = p_0 -\frac{1}{L}(1-p_0).\label{eq:def-k1}
\end{align}
Then consider 
\begin{equation*}
f(p)\defeq
\left\{\begin{array}{cl}
        1 & \text{if } p\leq k_1\\
        0 & \text{if } p\geq k_2\\
        \frac{k_2-p}{k_2-k_1} = L(k_2-p) & \text{if } k_1\leq p\leq k_2
        \end{array}\right.
\end{equation*}
for $p \in [0,1]$.

For $k_1=0.3,k_2=0.4,L=10$, this function looks as follows.\\
\begin{center}
\includegraphics[width=0.6\linewidth]{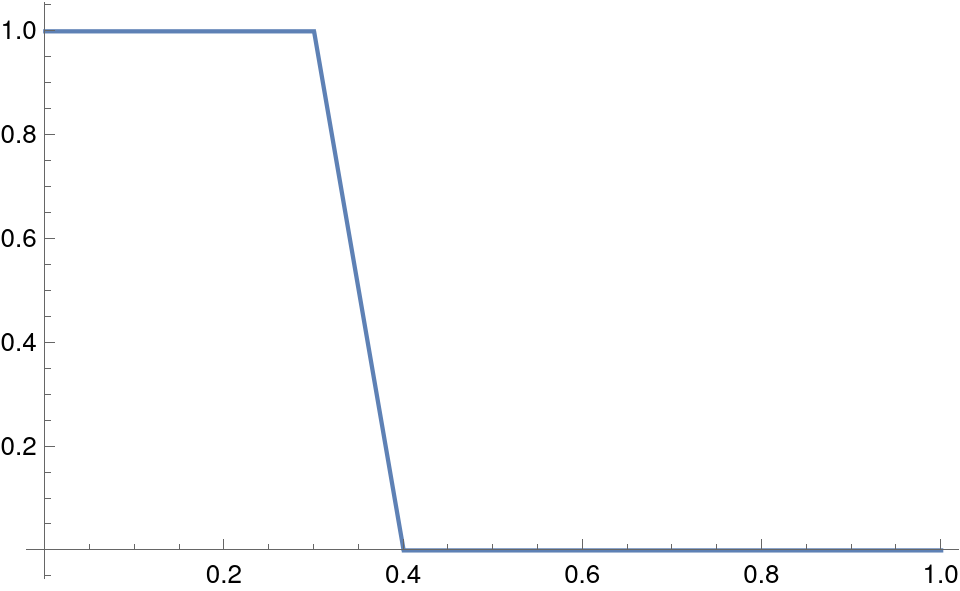}
\end{center}
Note that $f$ then has Lipschitz constant L. 

Moreover, note that $f$ has a unique fixed point, which occurs at $p_0 \in (k_1,k_2)$, since
\begin{equation*}
    f(p_0)=L(k_2-p_0)=L\left(p_0 + \frac{1}{L}p_0 - p_0\right) = p_0.
\end{equation*}
We can see that for any proper scoring rule, the optimal report under $f$ is in $[k_1,k_2]\cap [0,1]$.

Next we will show that for $p\in [k_1,k_2] \cap [0,1]$ to satisfy the bound $|f(p)-p|\leq \epsilon$, we must have that $p\in [p_0-\delta,p_0+\delta]$. To show this, observe that, if $p \in [k_1,k_2] \cap [0,1]$:
\begin{eqnarray*}
    f(p)-p &=& f(p)-f(p_0) - (p-p_0) +f(p_0)-p_0\\
    &=& -L(p-p_0) - (p-p_0) + f(p_0)-p_0\\
    &=& -(L+1)(p-p_0) + f(p_0)-p_0\\
    &\underset{p_0\text{ fixed point}}{=}& -(L+1)(p-p_0),
\end{eqnarray*}
So that, for $k_1\leq p \leq k_2$ we have $|f(p)-p|\leq\epsilon$ if and only if $|p-p_0|<\delta$. Thus, by assumption, we must have that the optimal report $p^*$ satisfies $p^* \in [p_0-\delta,p_0+\delta].$ We will show the claim of the theorem by showing that that $p=p^*$ satisfies equation (\ref{g_deriv_bound}).

First, we will check that $p^*-2\delta$ is in $[0,1]$, and is still on the steep section of the graph, i.e., is in $[k_1,k_2]$.

We have
\begin{equation*}
    p^* - 2\delta \geq p_0 - 3\delta \underset{\text{\Cref{eq:def-k1}}}{=} k_1 + \frac{1}{L}(1-p_0) - 3\delta \underset{p_0\leq 1-3L\delta}{\geq} k_1.
\end{equation*}
Also we chose $p_0$ to satisfy $p_0 - 3\delta \geq 0$, so that overall $p^*-2\delta \geq \max(k_1,0)$. Also, we must have that $\max(k_1,0) \leq p^* \leq \min (k_2,1)$.

Now we just use the optimality of $p^*$ and the definition of $f$ to get the result. $S(p^*,f(p^*)) \geq S(p^*-2\delta, f(p^*-2\delta))$, i.e.:
\begin{align*}
    G(p^*)+g(p^*)(f(p^*)-p^*) &\geq G(p^*-2\delta)+g(p^*-2\delta)(f(p^*-2\delta)-p^*+2\delta) \\
    &= G(p^*-2\delta)+g(p^*-2\delta)(f(p^*)+2L\delta-p^* +2\delta)
\end{align*}
Now, by the fact that $g$ is a subgradient of $G$, we have that $G(p^*)-G(p^*-2\delta)\leq 2\delta g(p^*)$. 
Thus, rearranging:
\begin{align*}
    2\delta g(p^*) +g(p^*)(f(p^*)-p^*) \geq g(p^*-2\delta)(f(p^*)+2L\delta - p^* + 2\delta)
\end{align*}
and so
\begin{align*}
    (f(p^*)-p^* + 2\delta)(g(p^*)-g(p^*-2\delta)) \geq 2\delta L g(p^*-2\delta).
\end{align*}
Thus, since $|f(p^*)-p^*|\leq\epsilon$
\begin{align*}
    g(p^*)-g(p^*-2\delta) &\geq g(p^*-2\delta)\frac{2L\delta}{\epsilon+2\delta}\\
    &= \frac{2L}{L+3}g(p^*-2\delta).
\end{align*}
%
%
%
\end{proof}
\begin{lemma}\label{exp_hops_everywhere}
    Let $g$ be any monotonically increasing nonnegative function with the property that on each interval of length $h$ fully contained in $[a,b]$ there is $p$ in that interval such that $g(p+h)-g(p)\geq yg(p)$. Then for all $p\in [a,b-h]$, $g(p+2h)-g(p)\geq yg(p)$, provided $g$ is defined on $[a,b+h]$.
\end{lemma}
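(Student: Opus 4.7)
The plan is a direct one-shot argument. Given any $p \in [a, b-h]$, the sub-interval $[p, p+h]$ lies inside $[a,b]$ (since $p \geq a$ and $p+h \leq b$), so the hypothesis supplies a point $q \in [p, p+h]$ with $g(q+h) - g(q) \geq y\, g(q)$. I would then transfer this bound from the unknown point $q$ back to the endpoint $p$ using only that $g$ is monotonically increasing and nonnegative.

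Concretely, I would decompose
\begin{equation*}
g(p+2h) - g(p) = \bigl[g(p+2h) - g(q+h)\bigr] + \bigl[g(q+h) - g(q)\bigr] + \bigl[g(q) - g(p)\bigr].
\end{equation*}
Since $p \leq q \leq p+h$ and $g$ is monotone increasing, the two outer brackets are nonnegative, and the middle bracket is at least $y\, g(q)$ by the hypothesis. In the relevant case $y \geq 0$ (e.g.\ $y = 2L/(L+3)$ from Lemma~\ref{exp_hops_somewhere}), monotonicity together with $g \geq 0$ gives $y\, g(q) \geq y\, g(p)$; for $y < 0$ the conclusion is already immediate from monotonicity. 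In either case, dropping the two nonnegative outer terms yields $g(p+2h) - g(p) \geq y\, g(p)$, which is exactly the claim.

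Before starting I would briefly check the domain conditions: $p + h \leq b$ ensures $[p, p+h] \subseteq [a,b]$ so the hypothesis applies, and $p + 2h \leq b + h$ ensures $g(p+2h)$ is well-defined by the assumption that $g$ is defined on $[a, b+h]$. I do not expect any genuine obstacle; the lemma is essentially the observation that a property holding \emph{somewhere} in every length-$h$ window can, via monotonicity, be pushed to a property holding at the \emph{left endpoint} of every length-$2h$ window, at the cost of doubling the step size.
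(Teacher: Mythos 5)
Your proof is correct and follows essentially the same route as the paper's: take the guaranteed point $q = p^*$ in $[p, p+h]$, and use monotonicity of $g$ to transfer the jump from $[q, q+h]$ to the larger interval $[p, p+2h]$, then use monotonicity and nonnegativity again to pass from $y\,g(q)$ to $y\,g(p)$. The paper states the chain in one line ($g(p+2h)-g(p) \geq g(p^*+h)-g(p^*) \geq y g(p^*) \geq y g(p)$) rather than via your explicit three-term telescoping decomposition, and it does not bother with the degenerate $y<0$ case since in context $y = 2L_f/(L_f+3) \geq 0$, but these are cosmetic differences only.
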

\begin{proof}
    For any $p \in [a,b-h]$, we have that the interval $[p,p+h]$ must contain some $p^*$ with $g(p^*+h)-g(p^*) \geq yg(p^*)$. Then, since $p\leq p^*$, and $p^*+h \leq p+2h$, we have, by monotonicity
    \begin{equation*}
        g(p+2h)-g(p) \geq g(p^*+h)-g(p^*)\geq yg(p^*) \geq y g(p).
    \end{equation*}
\end{proof}

\begin{lemma}\label{lemma:symmetry-for-bounds}
    Let $S(p,q)=g(p)(q-p)+G(p)$ be a (strictly) proper scoring rule, and $f: [0,1]\rightarrow[0,1]$ be $L_f$-Lipschitz. Let
    \begin{align*}
        \tilde g(p) &\defeq - g(1-p)\\
        \tilde G(p) &\defeq G(1-p)\\
        \tilde S(p,q) &\defeq \tilde g(p)(q-p)+\tilde G(p)\\
        \tilde f(p) &\defeq 1 - f(1-p)
    \end{align*}
    Then $\tilde S$ is a (strictly) proper scoring rule, $\tilde f$ is $L_f$-Lipschitz and
    \begin{equation*}
        \tilde S(1-p,\tilde f(1-p)) = S(p,f(p)).
    \end{equation*}
\end{lemma}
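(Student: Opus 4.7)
The plan is to verify each of the three claims in sequence by direct substitution, exploiting the reflection $p \mapsto 1-p$ on the binary simplex. The symmetry is essentially bookkeeping, so I do not anticipate a major obstacle; the one part requiring slight care is checking that the subgradient relationship is preserved under reflection.

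First I would show that $\tilde S$ is (strictly) proper by applying the Gneiting--Raftery characterization restricted to the two-outcome notation introduced above. This reduces to showing that $\tilde G$ is (strictly) convex on $[0,1]$ and that $\tilde g$ is a subgradient of $\tilde G$. Convexity of $\tilde G(p)=G(1-p)$ follows since the map $p \mapsto 1-p$ is affine. For the subgradient condition, the inequality $\tilde G(q) \geq \tilde G(p) + \tilde g(p)(q-p)$ becomes, after substituting the definitions, $G(1-q) \geq G(1-p) + g(1-p)((1-q)-(1-p))$, which is exactly the subgradient inequality for $G$ at the points $1-p, 1-q$. Strictness when $p\neq q$ transfers similarly, since $1-p \neq 1-q$.

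Next I would verify the Lipschitz bound for $\tilde f$ by a one-line computation: for any $p_1,p_2\in[0,1]$,
\begin{equation*}
    |\tilde f(p_1) - \tilde f(p_2)| = |f(1-p_1) - f(1-p_2)| \leq L_f|(1-p_1)-(1-p_2)| = L_f|p_1 - p_2|.
\end{equation*}

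Finally, I would establish the identity by unfolding the definitions:
\begin{equation*}
    \tilde S(1-p,\tilde f(1-p)) = \tilde g(1-p)\bigl(\tilde f(1-p) - (1-p)\bigr) + \tilde G(1-p) = -g(p)\bigl((1-f(p)) - (1-p)\bigr) + G(p),
\end{equation*}
which simplifies to $g(p)(f(p)-p) + G(p) = S(p,f(p))$, as required. The main thing to be careful about throughout is just keeping track of which argument has the reflection applied; nothing else is substantive.
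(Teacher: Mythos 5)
Your proof is correct and follows essentially the same approach as the paper's: verify convexity of $\tilde G$ and the subgradient inequality for $\tilde g$ under the reflection $p\mapsto 1-p$, check the Lipschitz bound, and unfold definitions for the final identity. Your write-up is slightly more explicit on the Lipschitz step (which the paper dismisses as "by inspection"), but there is no substantive difference.
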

\begin{proof}
First, we show that $\tilde S$ is a (strictly) proper scoring rule, by verifying that it conforms to the \citeauthor{gneiting2007strictly} characterization. From the form of $\tilde G$, we can see that $\tilde G$ is (strictly) convex iff $G$ is. It remains only to check that $\tilde g$ is a subderivative of $g$. Then, we have
\begin{equation*}
    \tilde G(p)- \tilde G(q) = G(1-p)-G(1-q) \geq g(1-q)(1-p-1+q) = \tilde g (q) (p-q).
\end{equation*}
as required. By inspection, $\tilde f$ is $L_f$-Lipschitz.

Finally,
\begin{align*}
    \tilde S(1-p,\tilde f(1-p)) &= \tilde g(1-p)(\tilde f(1-p)-(1-p))+\tilde G(1-p)\\
    &= (-g(p))(1-f(p) - (1-p))+G(p)\\
    &= g(p)(f(p) - p) +G(p)\\
    &= S(p,f(p)).
\end{align*}
\end{proof}

\begin{lemma}\label{lemma:g-ratios}
Suppose $S$ is a proper scoring rule defined via $g$ s.t.\ for some $\epsilon, L_f > 0$ we have that whenever $f$ is $L_f$-Lipschitz, the optimal report $p^*$ satisfies $|f(p^*) - p^*| < \epsilon$. Let $\delta = \frac{\epsilon}{L_f+1}$. Further consider $p_l,p_h$ s.t.\ $3\epsilon-3\delta  \leq p_l \leq p_h \leq 1-3\epsilon-\delta$. Then we have:
\begin{equation*}
        \frac{\sup_{x\in [p_l,p_h]} |g(x+4\delta)-g(x)|}{\inf_{x\in [p_l,p_h]} |g(x+4\delta)-g(x)|} \geq \frac{2L_f}{L_f+3}\left(3\frac{L_f+1}{L_f+3}\right)^{(L_f+1)(p_h-p_l)/(8\epsilon) -5/2}.
\end{equation*}
\end{lemma}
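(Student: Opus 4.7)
The claim is a purely analytic statement about the subgradient $g$: the one-step increments $g(x+4\delta)-g(x)$ vary, on any sufficiently long subinterval of $[0,1]$, by a factor exponential in the interval's length divided by $\delta$. The plan is to chain together the three prior lemmas \Cref{exp_hops_somewhere}, \Cref{exp_hops_everywhere} and \Cref{compounding_hops}, with a single case split on the sign of $g$ at the midpoint handled via \Cref{lemma:symmetry-for-bounds}.

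Since $g$ is a subgradient of the convex $G$, it is nondecreasing, so $g(x+4\delta)-g(x)\geq 0$ for all $x\in[0,1-4\delta]$ and the absolute values in the conclusion may be dropped. Let $m\defeq(p_l+p_h)/2$, and split on the sign of $g(m)$.

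\emph{Case A: $g(m)\geq 0$.} Then $g\geq 0$ on $[m,p_h]$. Applying \Cref{exp_hops_somewhere} to $S$ (which by hypothesis has the $(L_f,\epsilon)$-inaccuracy property) gives, inside every $2\delta$-interval in $[0,1-3\epsilon+2\delta]$, a point $p$ with $g(p+2\delta)-g(p)\geq y\, g(p)$, where $y\defeq 2L_f/(L_f+3)$. \Cref{exp_hops_everywhere} with $h=2\delta$ upgrades this to the pointwise statement: for every $p\in[0,1-3\epsilon]$, $g(p+4\delta)-g(p)\geq y\, g(p)$. Since $p_h\leq 1-3\epsilon-\delta\leq 1-3\epsilon$ and $g\geq 0$ on $[m,p_h]$, \Cref{compounding_hops} with $h=4\delta$ on $[m,p_h]$ yields
\begin{equation*}
    \frac{\sup_{x\in[m,p_h]}g(x+4\delta)-g(x)}{\inf_{x\in[m,p_h]}g(x+4\delta)-g(x)} \;\geq\; y(1+y)^{\lfloor(p_h-m)/(4\delta)\rfloor-1}.
\end{equation*}
Because $[m,p_h]\subseteq[p_l,p_h]$, this ratio is a lower bound for the ratio in the conclusion. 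Substituting $(p_h-m)/(4\delta)=(p_h-p_l)(L_f+1)/(8\epsilon)$, using $\lfloor z\rfloor-1\geq z-2\geq z-5/2$, and noting $1+y=3(L_f+1)/(L_f+3)$ gives the claimed bound.

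\emph{Case B: $g(m)\leq 0$.} Then $g\leq 0$ on $[p_l,m]$. Invoke \Cref{lemma:symmetry-for-bounds}: the reflected scoring rule $\tilde S$ with $\tilde g(p)=-g(1-p)$ is proper; and since $\tilde f(p)\defeq 1-f(1-p)$ is $L_f$-Lipschitz whenever $f$ is and $\tilde S(1-p,\tilde f(1-p))=S(p,f(p))$, the pair $(\tilde S,\tilde g)$ inherits the $(L_f,\epsilon)$-inaccuracy hypothesis. Since $\tilde g(p)=-g(1-p)\geq 0$ for $p\in[1-m,1-p_l]$, the argument of Case~A applies to $\tilde g$ on an appropriate subinterval of $[1-m,1-p_l]$; changing variables by $x=1-p-4\delta$, so that $\tilde g(p+4\delta)-\tilde g(p)=g(x+4\delta)-g(x)$, translates the conclusion to the sup/inf ratio of $g(x+4\delta)-g(x)$ over a subinterval of $[p_l,p_h]$ of length at least $(p_h-p_l)/2-O(\delta)$, with the same multiplicative constants.

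The main technical obstacle is the interval bookkeeping in Case~B: one must simultaneously ensure that the subinterval of $[1-m,1-p_l]$ used for compounding lies inside $[0,1-3\epsilon]$ (where the $4\delta$-step inequality for $\tilde g$ is available) and that its image under $x=1-p-4\delta$ lies inside $[p_l,p_h]$. These two constraints are precisely what force the asymmetric hypotheses $p_l\geq 3\epsilon-3\delta$ and $p_h\leq 1-3\epsilon-\delta$, and they are the source of the slack that turns the $-2$ coming from $\lfloor\cdot\rfloor-1$ into the slightly weaker $-5/2$ in the exponent. Everything else is algebra: substituting $\delta=\epsilon/(L_f+1)$ and simplifying $1+y=3(L_f+1)/(L_f+3)$.
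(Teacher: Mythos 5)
Your outline uses the same lemma chain (\Cref{exp_hops_somewhere}, \Cref{exp_hops_everywhere}, \Cref{compounding_hops}) and the same reflection trick (\Cref{lemma:symmetry-for-bounds}) as the paper, but your case split at $m=(p_l+p_h)/2$ leaves a quantitative gap in Case B. The change of variable $x=1-p-4\delta$ (which turns $\tilde g(p+4\delta)-\tilde g(p)$ into $g(x+4\delta)-g(x)$) is a reflection centered at $\nicefrac{1}{2}-2\delta$, not at $\nicefrac{1}{2}$, so it does not map $m$ to $1-m$. Concretely, in Case B you have $\tilde g\geq 0$ on $[1-m,1]$, and you need the reflected image $[1-B-4\delta,\,1-A-4\delta]$ of the interval $[A,B]$ fed into \Cref{compounding_hops} to lie inside $[p_l,p_h]$. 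The two binding constraints, $A\geq 1-m$ and $B\leq 1-p_l-4\delta$, give $[A,B]=[1-m,\,1-p_l-4\delta]$ of length $(p_h-p_l)/2-4\delta$. The resulting exponent is $\lfloor(p_h-p_l)/(8\delta)-1\rfloor-1\geq(p_h-p_l)/(8\delta)-3$, which is strictly weaker than the stated $(p_h-p_l)/(8\delta)-\nicefrac{5}{2}$. Your remark that the slack ``turns $-2$ into $-5/2$'' underestimates the loss by $\nicefrac{1}{2}$; that your Case~A yields the stronger $-2$ is irrelevant, since the bound must hold in whichever case one lands.

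This is exactly what the paper's reduction to the shifted quantity $g(x+2\delta)-g(x-2\delta)$ over $[3\epsilon-\delta,\,1-3\epsilon+\delta]$ is for: that form and that constraint set are both invariant under $x\mapsto 1-x$, so the case split at the midpoint is preserved by the reflection. In your notation the fix is to split on the sign of $g(m+2\delta)$ rather than $g(m)$. If $g(m+2\delta)\geq 0$, apply \Cref{compounding_hops} on $[m+2\delta,p_h]$, length $(p_h-p_l)/2-2\delta$. If $g(m+2\delta)<0$, then $\tilde g\geq 0$ on $[1-m-2\delta,1]$, and $[A,B]=[1-m-2\delta,\,1-p_l-4\delta]$ has the same length $(p_h-p_l)/2-2\delta$ with reflected image $[p_l,\,m-2\delta]\subseteq[p_l,p_h]$; both cases now give exponent $\lfloor(p_h-p_l)/(8\delta)-\nicefrac{1}{2}\rfloor-1\geq(p_h-p_l)/(8\delta)-\nicefrac{5}{2}$. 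Separately, a small imprecision: invoking \Cref{exp_hops_everywhere} ``for every $p\in[0,1-3\epsilon]$'' implicitly needs $g$ defined up to $1-3\epsilon+4\delta$, which exceeds $1$ when $L_f<\nicefrac{1}{3}$; you only need the pointwise bound for $p\leq p_h$, where $p_h+4\delta\leq 1$ follows from $p_h\leq 1-3\epsilon-\delta$ and $\delta\leq\epsilon$.
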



\begin{proof}
We will, show, equivalently, that for $3\epsilon -\delta \leq p_l \leq p_h \leq 1 - 3\epsilon + \delta$:

\begin{equation}\label{alt_form}
        \frac{\sup_{x\in [p_l,p_h]} |g(x+2\delta)-g(x-2\delta)|}{\inf_{x\in [p_l,p_h]} |g(x+2\delta)-g(x-2\delta)|} \geq \frac{2L_f}{L_f+3}\left(3\frac{L_f+1}{L_f+3}\right)^{(L_f+1)(p_h-p_l)/(8\epsilon) -5/2}.
\end{equation}

Consider first the case where $g((p_l+p_h)/2)<0$. Then consider $\tilde g$ as specified by \Cref{lemma:symmetry-for-bounds}. Note that from \Cref{lemma:symmetry-for-bounds} it follows that $\tilde S,\tilde g$ satisfy the claim of the Theorem in the form of equation (\ref{alt_form}) for $[p_l',p_h']\defeq[1-p_h,1-p_l]$ if and only if $S,g$ satisfy the claim of the Theorem for $[p_l,p_h]$:

\begin{align*}
        \frac{\sup_{x\in [p_l,p_h]} |g(x+2\delta)-g(x-2\delta)|}{\inf_{x\in [p_l,p_h]} |g(x+2\delta)-g(x-2\delta)|}
        &= \frac{\sup_{x\in [p_l,p_h]} |\tilde g(1-x+2\delta)-\tilde g(1-x-2\delta)|}{\inf_{x\in [p_l,p_h]} |\tilde g(1-x+2\delta)-\tilde g(1-x-2\delta)|}\\
        &= \frac{\sup_{y\in [1-p_h,1-p_l]} |\tilde g(y+2\delta)-\tilde g(y-2\delta)|}{\inf_{y\in [1-p_h,1-p_l]} |\tilde g(y+2\delta)-\tilde g(y-2\delta)|}.\\
\end{align*}

Note further that $\tilde g((p_l'+p_h')/2)>0$. Thus, for our proof we can assume WLOG $g((p_l+p_h)/2)>0$ and thus by monotonicity $g(x)>0$ for $x>(p_l+p_h)/2$.

Note that $p_h\leq 1-3\epsilon +\delta \leq1-3\epsilon+2\delta $. So, by \Cref{exp_hops_somewhere}, we have that in every $2\delta$ interval contained in $[(p_l+p_h)/2,p_h]$ there is a $p$ such that
\begin{equation*}
    g(p+2\delta)-g(p) \geq \frac{2L_f}{L_f+3}g(p).
\end{equation*}
Hence, by \Cref{exp_hops_everywhere}, we have that for all $p\in [(p_l+p_h)/2,p_h-2\delta]$,
\begin{equation*}
    g(p+4\delta)-g(p) \geq \frac{2L_f}{L_f+3}g(p)
\end{equation*}
since $p_h+2\delta \leq 1-3\epsilon + \delta + 2\delta \leq 1$.
Thus, by \Cref{compounding_hops},
\begin{align*}
&\phantom{ = } \frac{\sup_{x\in [p_l,p_h]} |g(x+2\delta)-g(x-2\delta)|}{\inf_{x\in [p_l,p_h]} |g(x+2\delta)-g(x-2\delta)|}\\
&=
\frac{\sup_{x\in [p_l-2\delta,p_h-2\delta]} |g(x+4\delta)-g(x)|}{\inf_{x\in [p_l-2\delta,p_h-2\delta]} |g(x+4\delta)-g(x)|}\\
        &\geq\frac{\sup_{x\in [(p_l+p_h)/2,p_h-2\delta]} g(x+4\delta)-g(x)}{\inf_{x\in [(p_l+p_h)/2,p_h-2\delta]} g(x+4\delta)-g(x)} \geq \frac{2L_f}{L_f+3}\left(1+\frac{2L_f}{L_f+3}\right)^{\lfloor (p_h-p_l)/(8\delta) - 1/2 \rfloor-1}.
\end{align*}
\end{proof}

\co{TODO: Should do some sanity checks. E.g., if $\epsilon=1$, the bound should be trivial.}

\needexponential*

\begin{proof}
    Follows from \Cref{lemma:bounding-S-ratios-bounding-g-ratios,lemma:g-ratios}.
\end{proof}

\subsection{Proof of Theorem\ \ref{thm:no_higher_dim_bound}}

We'll first need a lemma that we can find a section of an isoline of sufficient length that doesn't turn too much:
\begin{lemma}\label{lemma:isoline_section}
Let $\Pset$ be the probability simplex in $\mathbb{R}^3$ (an equilateral triangle with side length $\sqrt{2}$ lying in a plane embedded in $\mathbb{R}^3$). Let $G:\Pset\rightarrow \mathbb{R}$ be strictly convex, with subgradient $g$ (with entries summing to 0). Let $r=\frac{\sqrt{6}}{12}$, and $0<l<r$.  Then we can find a section of an isoline of $G$, $\gamma$, with the following properties:
\begin{enumerate}
    \item $\gamma$ has length $l$.
    \item Let $\p$ be one endpoint of $\gamma$, and $\vec{n}$ be the unit vector parallel to $g(\p)$, i.e. $\vec n = g(\p)/\norm{g(\p)}$. Then, $\p-2r\vec{n} \in \Pset$, and $G(\p-2r\vec{n}) \leq G(\p)$.
    \item For all $\p' \in \gamma$, the angle $\theta$ between $g(\p)$ and $g(\p')$ satisfies $\theta \leq \frac{2l}{r}$.
    \item Each point on $\gamma$ has distance at least $r - l$ from the boundary of the simplex.
\end{enumerate}
Moreover, let $\gamma$ be an isoline section with the above properties. Let $\p$ and $\vec{n}$ be defined as above, and $\vec{t}$ a unit vector perpendicular to $\vec{n}$ (and to $\vec{1}$). Let $\q$ be the other endpoint of $\gamma$. Then $|\vec{t}^\top(\q-\p)| \geq l\left(1-\frac{2l}{r}\right)$. I.e. the length of $\gamma$ in the direction orthogonal to $\vec{n}$ is at least $l\left(1-\frac{2l}{r}\right)$.
\end{lemma}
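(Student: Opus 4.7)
The plan is to realize $\gamma$ as an arc of the level curve of $G$ that passes through a carefully chosen point $\p$ near the centroid $\c$ of $\Pset$. First I would record the geometric setup: the inradius of the equilateral triangle $\Pset$ of side $\sqrt{2}$ is $\sqrt{6}/6=2r$, so $\c$ lies at distance $2r$ from every edge of $\Pset$. Let $C_r$ be the circle of radius $r$ around $\c$ in the plane of $\Pset$, and let $D_r$ be its closed disk. Both $C_r$ and $D_r$ sit inside $\Pset$ at distance at least $r$ from $\partial \Pset$.

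Next I would take $\p$ to be a maximizer of $G$ on $C_r$; existence is by compactness and finiteness of $G$ on the interior. By a Lagrange-multiplier (or subgradient inequality) argument, any subgradient at $\p$ must be radially outward, so $g(\p)=\lambda(\p-\c)$ for some $\lambda>0$ (with $\lambda>0$ since $G$ is strictly convex, hence nonconstant on $C_r$). Therefore $\vec n=(\p-\c)/r$, and property~2 follows immediately: $\p-2r\vec n=2\c-\p$ is the antipode of $\p$ on $C_r$, which lies in $D_r\subset\Pset$, and $G(2\c-\p)\le G(\p)$ by the maximality of $G(\p)$ on $C_r$. I would then define $\gamma$ as the arc of length $l$ on the level curve $\Gamma:=\{G=G(\p)\}$ beginning at $\p$, choosing the traversal direction so as to minimize the total turning (see below). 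Property~1 holds by construction, and for property~4 I would observe that any point of $\gamma$ lies within Euclidean distance $l$ of $\p$, while $\p$ itself is at distance at least $2r-r=r$ from $\partial\Pset$, giving distance at least $r-l$.

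The hard step is property~3, i.e.\ bounding the rotation of $g$ (equivalently of the unit tangent) along $\gamma$ by $2l/r$. I would exploit that $\Gamma$ is the boundary of the convex sublevel set $\{G\le G(\p)\}$, which contains $D_r$ since a convex function on a closed disk is maximized on the bounding circle. In particular $\Gamma$ is a closed convex curve enclosing $D_r$, tangent to $C_r$ at $\p$ (both curves are perpendicular to $\vec n$ at $\p$), and lying outside $C_r$. The total turning of $\Gamma$ is $2\pi$ and its length is at least $2\pi r$, so the average turning rate is at most $1/r$. I would then argue by pigeonhole/symmetry that at least one of the two arcs of length $l$ emanating from $\p$ has turning at most $l/r\le 2l/r$: if both had turning strictly greater than $2l/r$, the tangency at $\p$ combined with convexity of $\Gamma$ relative to $C_r$ forces the enclosed region to dip below area $\pi r^2$, contradicting $D_r\subseteq\{G\le G(\p)\}$. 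I would then pick $\gamma$ to be the arc realizing this smaller turning.

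Finally, the ``moreover'' clause is a short computation from properties~1 and~3. Parametrize $\gamma$ by arc length as $s\mapsto \p(s)$ with unit tangent $\vec t(s)$; at $s=0$ this tangent is perpendicular to $\vec n$, so choosing the sign of $\vec t$ gives $\vec t^\top \vec t(0)=1$. Property~3 (applied to tangents rather than subgradients, using that the subgradient is perpendicular to the tangent along $\Gamma$) yields $|\angle(\vec t(s),\vec t(0))|\le 2l/r$. Then
\[
\vec t^\top(\q-\p)=\int_0^l \vec t^\top \vec t(s)\,ds\ge l\cos(2l/r)\ge l\bigl(1-2(l/r)^2\bigr)\ge l\bigl(1-2l/r\bigr),
\]
using $\cos\theta\ge 1-\theta^2/2$ and $l\le r$. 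The main obstacle remains step~3; my expectation is that the precise factor of $2l/r$ (rather than $l/r$) is exactly the slack the pigeonhole/convex-geometry argument needs in order to avoid any assumption of smoothness on $G$.
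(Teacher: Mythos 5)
Your construction of $\p$ (maximizer of $G$ on the inner circle $C_r$), the antipode argument for property~2, and the boundary-distance argument for property~4 all match the paper's proof. The genuine gap is in your argument for property~3, which is where the heart of the lemma lies, and which you yourself flag as "the hard step."

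Your proposed argument is global: total turning of the closed isoline $\Gamma$ is $2\pi$, its length is $\geq 2\pi r$, so the average turning rate is $\leq 1/r$, and then you try to get a local bound via pigeonhole plus a vague area-based contradiction. This does not work as stated, for two independent reasons. First, an average turning rate of $1/r$ over the whole curve does not control the turning on either length-$l$ arc emanating from $\p$; convex curves can turn sharply on a short stretch and slowly elsewhere, and the proposed "area dips below $\pi r^2$" contradiction is not established — a convex curve can bend steeply just beyond $\p$ and still enclose $D_r$ by bulging out again later. Second, you have assumed $\Gamma$ is a closed curve with total turning $2\pi$. But $G$ is only defined on $\Pset$, and its level set can hit $\partial\Pset$, in which case $\Gamma$ is not a closed curve inside the simplex and your turning/length bookkeeping breaks. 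The paper explicitly distinguishes these two cases for the mere \emph{existence} of a length-$l$ arc, and its property-3 argument is deliberately a \emph{local} one that works in both cases: it uses the fact that the sublevel set contains $D_r$, so every supporting line to $\Gamma$ at a point $\p'$ of $\gamma$ must miss the interior of $D_r$; combined with $\gamma$ staying within distance $l$ of $\p$ below the tangent line at $\p$, this constrains the angle of each supporting line to lie within $\theta$ (with $\tan(\theta/2) = l/r$) of the tangent at $\p$. That local supporting-line argument is what you need to replace your averaging step.

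Your "moreover" computation (integrating $\vec t^\top \vec t(s)$ and using $\cos\theta \geq 1 - \theta^2/2$) is a legitimate alternative to the paper's route (which bounds the path length by the two legs of a right triangle and uses $|\vec n^\top(\p-\q)| \leq l\sin\theta$); both give the stated bound, though you should note that convex curves need only be differentiable a.e., which is enough for the integral to make sense.
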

\begin{proof}
    Note that each isoline $\{\x\in\Pset:G(\x)=y\}$ forms part of the boundary of the set $\{\x\in\Pset:G(\x)\leq y\}$, which by strict convexity of $G$ is a convex set. Call this the \textit{enclosed set} of the isoline. Then, at each point $\p$ on an isoline, there is at least one supporting line to the isoline, i.e., a straight line that touches the isoline but does not contain any of the interior points of the enclosed set. Moreover, $g(\p)$ is always perpendicular to a supporting line to the isoline through $\p$, and points out of the enclosed set. 
    
    We will proceed by finding an isoline tangent to and enclosing a circle at the center of $\Pset$, and then arguing that a section of this isoline has the desired properties.

    First, consider the circle of radius $r$ at the center of $\Pset$. Let $\p$ be a point on the boundary of this circle at which $G$ is maximal, and consider the isoline of $G$ through $\p$.

    \begin{center}
    \begin{tikzpicture}
        \coordinate (A) at (0,0);
        \coordinate (B) at (6,0);
        \coordinate (F) at (3,3.5);
        \coordinate (G) at (1,0);
        \tkzDefTriangle[equilateral](A,B);
        \tkzGetPoint{C};
        \coordinate (H) at ($0.6*(B)+0.4*(C)$);
        \tkzDefTriangleCenter[centroid](A,B,C);
        \tkzGetPoint{O};
        \coordinate (D) at ($(O)+6*(0,0.144)$);
        \tkzDrawPolygons(A,B,C);
        \tkzDrawPoints(O);
        \tkzLabelPoint[above right](O){$O$};
        \tkzDrawCircle(O,D);
        \tkzDefPointBy[rotation= center O angle 180](D);
        \tkzGetPoint{E};
        \draw [dashed] (O) -- (E) node [midway, left] {$r$};
        \tkzDefPointBy[rotation= center O angle 15](D);
        \tkzGetPoint{p};
        \tkzDrawPoints(p);
        \tkzLabelPoint(p){$\p$};
        \tkzDefPointBy[rotation= center O angle 15](F);
        \tkzGetPoint{n};
        \draw [->] (p)--(n)  node [midway, above right] {$g(\p)$};
        \draw (G) to [out = 80, in = 195] (p) to [out = 15, in = 150] (H);
    \end{tikzpicture}
    \end{center}

    Observe that, by construction, the enclosed set of this isoline contains the circle. Moreover, the isoline is tangent to the circle at $\p$. Note further that the tangent line to the circle at $\p$ must be the unique supporting line to the isoline through $\p$, so that $g(\p)$ is perpendicular to this tangent. As a consequence, we know that if $\vec{n} = \frac{g(\p)}{\norm{g(\p)}}$, then $\p - 2r\vec{n}$ is the point on the opposite side of the circle. Thus, we must have $G(\p-2r\vec{n}) \leq G(\p)$, as required.

    Now, we have two cases: either the isoline stays within the interior of the simplex, or it reaches the boundary of the simplex.
    In the former case, the total length of the isoline is at least the circumference of the circle, i.e., $2\pi r > l$. Thus, we may choose a section (with two distinct endpoints) of the isoline, $\gamma$, with length $l$ and endpoint $\p$.

    Now, note that the distance from the centre of the simplex to the (nearest point on the) boundary is $\frac{\sqrt{6}}{6} = 2r$. Hence, the minimum distance from the circle to the boundary of the simplex is at least the $2r - r = r > l$. Thus, in the latter case, the isoline must have a connected section starting at $\p$ of length $l$. Call this section $\gamma$.

    In either case, $\gamma$ has distance at least $r-l$ from the boundary of the simplex.

    Now, we will bound the change in the angle of supporting lines, moving along $\gamma$. WLOG assume $\p$ is the anticlockwise-most point of $\gamma$.

    \begin{center}
    \begin{tikzpicture}
        \coordinate (O) at (0,0);
        \coordinate (P) at (0,7);
        \coordinate (P+) at (1,7);
        \coordinate (P-) at (-1,7);
        \coordinate (A) at (5,6);
        \coordinate (T0) at (P);
        \coordinate (T1) at ($(P)+(6,0)$);
        \tkzDrawArc[R, color = black](O,7)(0,90);
        \tkzDrawPoints(P, O,A);
        \tkzLabelPoint(O){$O$};
        \tkzLabelPoint[left](P){$\p$};
        \draw (P) to [out = 0, in = 150]  node [midway, below right] {$\gamma$} (A) node [above right] {$\q$};
        \draw [dashed] (T0) -- (T1);
        \draw [dashed] (O) -- (P) node [midway, left] {$r$};
        \tkzMarkRightAngle(T1,P,O);
        \tkzDefPointBy[rotation= center O angle -75](P);
        \tkzGetPoint{C};
        \tkzDrawPoints(C);
        \tkzLabelPoint(C){$C$};
        \tkzDefPointBy[rotation= center O angle -75](P+);
        \tkzGetPoint{C+};
        \tkzDefPointBy[rotation= center O angle -75](P-);
        \tkzGetPoint{C-};
        \tkzInterLL(C-,C+)(P,P+);
        \tkzGetPoint{B};
        \tkzDrawLines[add = 0.3 and 0.1](C,B);
        \tkzDrawPoints(B);
        \tkzLabelPoint(B){$B$};
        \draw [dashed] (O) -- (C)  node [midway, below right] {$r$};
        \tkzMarkAngle(C,O,P);
        \tkzLabelAngle[above right](C,O,P){$\theta$};
        \tkzMarkRightAngle(O,C,B);
        \tkzDefMidPoint(P,B);
        \tkzGetPoint{l1};
        \tkzLabelPoint[above](l1){$l$};
        \tkzDefMidPoint(C,B);
        \tkzGetPoint{l2};
        \tkzLabelPoint[right](l2){$l$};
        \draw [->] (P)--($(P)+(0,1)$) node [midway, right] {$g(\p)$};
        
    \end{tikzpicture}
    \end{center}

    Let the center of the circle be $O$. Consider the tangent to the circle at the point $C$, where $C$ is such that $OC$ is at an angle of $\theta$ from the line from $O$ to $\p$. Let the intersection of the tangents through $C$ and $\p$ be $B$. Let $\theta$ be such that the line from $\p$ to $B$ has length $l$. Note that $\theta<\pi/2$, since $l<r$. Let $\q$ be the other (clockwise-most) endpoint of $\gamma$.

    Note then that since the isoline must lie below the line $\p B$, and the length of $\gamma$ is $l$, $\gamma$ never crosses the line $BC$.

    Meanwhile, consider a point $\p'$ on $\gamma$. Note that since the enclosed set of the isoline contains the circle, no supporting line to the isoline through $\p'$ contains interior points of the circle. The angle of such a line must then lie between the angles of the tangents at $\p$ and $C$ (ie, is at most as steep as $BC$, on the diagram), and so makes angle at most $\theta$ with $\p B$. Since $g$ points out of the enclosed set of the isoline, orthogonal to its supporting lines, the difference in angle between $g(\p')$ and $g(\p)$ is then at most $\theta$.

    Therefore, we have that:
    \begin{equation*}
        \frac{\theta}{2} \leq \tan\left(\frac{\theta}{2}\right) = \frac{l}{r}
    \end{equation*}
    since $\tan x \geq x$ for $x \in [0,\pi/2)$. Thus, $\theta \leq \frac{2l}{r}$, as required. We have now established that $\gamma$ has the stated properties.

    We need then only check the final part of the statement, i.e., that $\gamma$ with these properties has sufficient length in the direction orthogonal to $\vec{n}$. Let $\vec{t}$ be as in the statement of the lemma, i.e., parallel to the supporting line through $\p$. We have by convexity that $\gamma$ lies within the triangle defined by supporting lines to the isoline at $\p$ and $\q$ and the straight line from $\p$ to $\q$. Hence, since $\theta<\pi/2$, $\gamma$ lies entirely within the triangle defined by the supporting line at $\p$ parallel to $\vec{t}$, the line through $\q$ parallel to $\vec{n}$, and the line from $\p$ to $\q$, as depicted in the diagram below.
        \begin{center}
    \begin{tikzpicture}
        \coordinate (P) at (0,1);
        \coordinate (Q) at (5,0);
        \coordinate (R) at (5,1);
        \tkzDrawPoints(P,Q);
        \tkzLabelPoint[left](P){$\p$};
        \draw (P) to [out = 0, in = 150]  node [midway, below right] {$\gamma$} (Q) node [above right] {$\q$};
        \draw [dashed] (P) -- (R)--(Q)--(P);
        \tkzMarkRightAngle(P,R,Q);

        \node (compass) at (-2,0) {};
        \draw [->](compass.north)--(-2,1) node [midway, left] {$\vec{n}$};
        \draw [->](compass.east)--(-1,0) node [midway, below] {$\vec{t}$};
        
    \end{tikzpicture}
    \end{center}
    Moreover, the maximum possible length of a convex path, within this triangle, from $\p$ to $\q$ is just the combined length of the two shorter sides, i.e. $|\vec{n}^\top(\p-\q)| + |\vec{t}^\top(\p-\q)|$. Thus, $\left|\vec{t}^\top(\p-\q)\right| \geq l - \left|\vec{n}^\top(\p-\q)\right|$. Then, note that the straight line from $\p$ to $\q$ makes angle  at most $\theta$ with the line parallel to $\vec{t}$, since its angle must lie between the angle of supporting lines at $\p$ and $\q$.
    
    Thus,
    \begin{equation*}
        |\vec{n}^\top(\p-\q)| \leq \sin\theta \norm{\p-\q} \leq l \sin\theta \leq l\theta.
    \end{equation*}

    Hence, we have
    \begin{equation*}
        |\vec{t}^\top(\p-\q)| \geq l - l\theta = l\left(1-\frac{2l}{r}\right)
    \end{equation*}
    and we are done.
\end{proof}
Now, our main result: We can't get arbitrarily good bounds for fixed Lipschitz constant, and the bound one can at best get scales linearly with $L_f$ in the limit $L_f \rightarrow 0$.

%
\impossibility*

\begin{proof}
Let $g$ and $G$ be as in the Gneiting and Raftery characterization of $S$. Let $\lambda = \min(L_f,2)$.

\jt{is it possible to illustrate this somehow?}

We will proceed as follows:
\begin{itemize}
    \item Find an isoline of $G$ on which the angle of $g(\p)$ doesn't change much. On this isoline, we are then able to move along the isoline without $g(\p)$ changing much in the direction of movement, and hence without $g(\p)^\top  \p$ changing much.
    \item Construct a $\lambda$-Lipschitz (and hence $L_f$-Lipschitz) function $f$ with fixed point $\p_0$ such that as we move sideways along the isoline, $f(\p)$ moves upwards, incentivising us to misrepresent in the direction of the isoline.
    \item We will then show that for a point $\q$, with $\norm{f(\q)-\q} \geq \epsilon$, reporting $\q$ gives higher score than any point $\p$ for which $\norm{f(\p)-\p} <\epsilon$ (for $\epsilon$ which we will choose).
\end{itemize}

Let $\theta = \arctan(\lambda/4)\leq\lambda/4\leq 1/2$. Note that then $\theta \sim L_f/4$ as $L_f \rightarrow 0$.

Let $r = \frac{\sqrt{6}}{12}$. Then, let $\gamma$ be an isoline satisfying the properties of Lemma \ref{lemma:isoline_section} with $l = \frac{\theta r}{2}$. Let the end points of $\gamma$ be $\p_1$ and $\q$ (chosen such that $\q$ is the same end as $\q$ in the statement of the Lemma), $\vec{n}\defeq \frac{g(\p_1)}{\norm{g(\p_1)}}$, and $\vec{t}$ a unit vector orthogonal to both $\vec{n}$ and $\vec{1}$. Then we have, in particular:
\begin{enumerate}[label = (P\arabic*)]
    \item $\p_1-2r\vec{n} \in \Pset$ and $G(\p_1-2r\vec{n}) \leq G(\p_1)$.\label{p:can_drop}
    \item For all $\p \in \gamma$, the angle between $g(\p)$ and $g(\p_1)$ (equivalently, $\vec{n}$) is at most $\frac{2l}{r} = \theta$.\label{p:small_angle}
    \item Each point on $\gamma$ has distance at least $r-l \geq 3l$ from the boundary of the simplex.\label{p:margin}
    \item $|t^\top(\q-\p_1)| \geq l(1-\frac{2l}{r}) = l(1-\theta)$.\label{p:width}
\end{enumerate}

Let $\epsilon = \frac{1}{2}|t^\top(\q-\p_1)|$. We have, by \ref{p:width}, $\epsilon \geq l(1-\theta)/2\geq l/4 > 0$. Note that $l(1-\theta)/2 \leq \epsilon \leq l/2$, and $l\theta = o(L_f)$, and so $\epsilon \sim \frac{1}{2}l = \frac{1}{4}\theta r \sim \frac{1}{16}L_f r=\frac{\sqrt{6}}{192}L_f$ as $L_f \rightarrow 0$.

Let $\p_0 = \p_1 - \epsilon\lambda\vec{n}$. Note that since $\epsilon\lambda \leq 2\epsilon \leq l$, we have by \ref{p:margin} that $\p_0$ is within the simplex.

\begin{center}
\begin{tikzpicture}[
    tangent/.style={
        decoration={
            markings,
            mark=
                at position #1
                with
                {
                    \coordinate (tangent point-\pgfkeysvalueof{/pgf/decoration/mark info/sequence number}) at (0pt,0pt);
                    \coordinate (tangent unit vector-\pgfkeysvalueof{/pgf/decoration/mark info/sequence number}) at (1,0pt);
                    \coordinate (tangent orthogonal unit vector-\pgfkeysvalueof{/pgf/decoration/mark info/sequence number}) at (0pt,1);
                }
        },
        postaction=decorate
    },
    use tangent/.style={
        shift=(tangent point-#1),
        x=(tangent unit vector-#1),
        y=(tangent orthogonal unit vector-#1)
    },
    use tangent/.default=1
]
    \coordinate (B) at (5,0.5); 
    \coordinate (p1) at (0,1); 
    \coordinate (p0) at (0,0); 

    \draw [name path = gamma, tangent = 1, tangent = 0.6] (p1) node [above left] {$\p_1$} to[out = 0, in = 165] (B); 

    \coordinate [use tangent] (q); 
    \draw (q) node [right] {$\q$};
    \coordinate (q-) at (q |- p0); 
    \draw (q-) node [below] {$\p_0+2\epsilon \vec{t}$};

    \draw [dashed] (q-) -- (q);
    \draw [dashed] (p1) -- (p0) node [below] {$\p_0$};
    \draw [dashed] (p0)--(q-) node [midway, below] {2$\epsilon$};

    \tkzDrawPoints(p1,p0,q,q-);

    \node (compass) at (8,1) {};
    \draw [->](compass.north)--(8,2) node [midway, left] {$\vec{n}$};
    \draw [->](compass.east)--(9,1) node [midway, below] {$\vec{t}$};

    \draw [use tangent, ->] (0,0.2) -- (0,1.3) node [midway, right]{$g(\q)$};
    \draw [->] ($(p1)+(0,0.2)$) -- ($(p1)+(0,1.5)$) node [midway, right]{$g(\p_1)$};
\end{tikzpicture}
\end{center}

By construction, there is a supporting line to $\gamma$, parallel to $\vec{t}$, through $\p_1$. Thus, $(\q-\p_1)^\top \vec{n} \leq 0$.

Now, let $f(\p) = \p_0 + \lambda\min(|\vec{t}^\top (\p-\p_0)|,2\epsilon)\vec{n}$. Note that the image of $f$ is the line segment $[\p_0,\p_0 + 2\epsilon\lambda\vec{n}]$, which has maximum distance $\lambda\epsilon \leq l$ from $\gamma$, and hence by \ref{p:margin} is entirely within the probability simplex. Also, $f$ has Lipschitz constant $\lambda \leq L_f$.

Then, if $\norm{f(\p)-\p}\leq\epsilon$, we must have
\begin{equation*}
    \epsilon \geq |\vec{t}^\top (f(\p)-\p)| = |\vec{t}^\top (\p_0-\p)|
\end{equation*}
and hence $f(\p)$ must in fact lie in the line segment $[\p_0,\p_0 + \epsilon\lambda\vec{n}] = [\p_0,\p_1]$. Moreover, $\norm{f(\q)-q} \geq 2\epsilon > \epsilon$.

Meanwhile, we have by convexity and \ref{p:can_drop} that for $\p \in [\p_1-2r\vec{n}, \p_1]$, $G(\p) \leq \max(G(\p_1-2r\vec{n}),G(\p_1)) = G(\p_1)$. Hence, since $\lambda\epsilon < 2r$, the maximum of $G$ on $[\p_0,\p_1]$ is $G(\p_1)$.

Therefore, whenever $\norm{f(\p)-\p}\leq\epsilon$:

\begin{equation*}
    S(\p,f(\p))\leq S(f(\p),f(\p)) = G(f(\p)) \leq G(\p_1)
\end{equation*}
that is, the maximum achievable score is at most the score of honestly reporting $\p_1$.

We will now show that the score of reporting $\q$ is greater than $G(\p_1)$.

First, we have that
\begin{align*}
    S(\q,f(\q))&= g(\q)^\top (f(\q)-\q) + G(\q) &\text{(Gn\&Raf)}\\
    &= g(\q)^\top (\p_0+2\epsilon\lambda \vec{n} - \q) + G(\p_1) &\text{(Def. of } f, \q)
\end{align*}

It is left to show that the left summand is positive. We have that
\begin{align*}
    & g(\q)^\top (\p_0+2\epsilon\lambda\vec{n} -\q)\\
    &= (2\epsilon\lambda+(\p_0-\q)^\top \vec{n})g(\q)^\top \vec{n} + ((\p_0-\q)^\top \vec{t})g(\q)^\top \vec{t}\\
    &\geq  \epsilon\lambda g(\q)^\top \vec{n} - 2\epsilon |g(\q)^\top \vec{t}| &\text{(Def. of }\q,\p_0)\\
    &\geq \norm{g(\q)}\epsilon(\lambda\cos\theta - 2\sin\theta) &\text{(By \ref{p:small_angle}})\\
    &=  2\norm{g(\q)}\epsilon \cos(\theta)(\lambda/2-\tan\theta)\\
    &\geq \norm{g(\q)}\epsilon\cos(\theta) \lambda/2 > 0 &\text{(Choice of }\theta)
\end{align*}

\end{proof}

\section{Preferences between different fixed points}
\label{preferences-between-fps}

\begin{proposition}
\label{prop:preferences-between-fps}
    Let $F=\{\p\colon f(\p)=\p\}$ be a set of fixed points of $f$. Let $\p\in F$ such that $\p$ is the convex combination of elements of $F-\{\p\}$. (In other words, $\p$ is in the interior of the convex hull of $F$). Then if $S$ is strictly proper, there exists a $\p^*\in F$ s.t.\ $S(p^*,f(p^*))> S(p,f(p))$. Thus, $\argmax_{p\in F} S(p,f(p))$ is a subset of the extreme points of $F$.
\end{proposition}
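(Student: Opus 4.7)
The plan is to reduce the claim to a direct application of Jensen's inequality for the strictly convex potential $G$ from the Gneiting--Raftery characterization.

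First I would observe that at any fixed point $\q \in F$ we have $f(\q) = \q$, so by Theorem~\ref{theorem:gneiting-raftery} the expected score simplifies to
\begin{equation*}
S(\q, f(\q)) = S(\q,\q) = G(\q) + g(\q)^\top(\q - \q) = G(\q).
\end{equation*}
Thus the ranking of fixed points under $S(\cdot, f(\cdot))$ is exactly the ranking under $G$.

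Next, using the hypothesis, write $\p = \sum_{i=1}^{k} \lambda_i \p_i$ with $\p_i \in F \setminus \{\p\}$, $\lambda_i > 0$, and $\sum_i \lambda_i = 1$. Since every $\p_i \neq \p$, the representation cannot have $k = 1$ (that would force $\p = \p_1 \neq \p$), so the combination is nontrivial, involving at least two distinct points. Strict properness of $S$ implies that $G$ is strictly convex (Theorem~\ref{theorem:gneiting-raftery}), so Jensen's inequality gives
\begin{equation*}
G(\p) \;<\; \sum_{i=1}^{k} \lambda_i\, G(\p_i).
\end{equation*}
Hence at least one summand must exceed $G(\p)$: choosing $\p^* \in \{\p_1, \dotsc, \p_k\} \subseteq F$ achieving the maximum of $G(\p_i)$, we get $S(\p^*, f(\p^*)) = G(\p^*) > G(\p) = S(\p, f(\p))$, which proves the main claim.

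For the final ``thus'' statement, I would argue by contrapositive: an extreme point of $F$ (equivalently, an extreme point of $\operatorname{conv}(F)$ lying in $F$) is one that cannot be written as a nontrivial convex combination of other points of $F$. Any non-extreme $\p \in F$ therefore satisfies the hypothesis of the proposition and is strictly dominated by some $\p^* \in F$, so it cannot lie in $\argmax_{\p \in F} S(\p, f(\p))$. I do not anticipate any real obstacle here; the only subtlety is being careful that the convex combination is nontrivial, which follows automatically from $\p \notin F \setminus \{\p\}$.
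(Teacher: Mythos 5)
Your proof is correct and follows essentially the same route as the paper's: both reduce $S(\q,f(\q))$ to $G(\q)$ at fixed points via the Gneiting--Raftery representation and then invoke strict convexity of $G$ (Jensen) on the convex combination to force some $\p_i$ to strictly dominate. You are slightly more careful than the paper in flagging why the combination is nontrivial (needed for strict Jensen) and in spelling out the contrapositive for the ``thus'' clause, but these are polish, not a different argument.
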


This follows directly from the convexity of the expected score under honest reporting as per \Cref{theorem:gneiting-raftery}, but for completeness we provide a detailed proof.

\begin{proof}
Let $\p = \sum_{i=1}^k c_i\p_i$ for $c_i\in [0,1]$ with $\sum_{i=1}^k c_i=1$ and $\p_i\in F-{\p}$. Then
\begin{eqnarray*}
S(\p,f(\p)) &=& g(\p)(f(\p)-\p)+G(\p)\\
& \underset{\text{$\p$ fixed point}}{=} & G(\p)\\
&= & G\left(\sum_{i=1}^k c_i\p_i\right)\\
&\underset{\text{G strictly convex}}{<} & \sum_{i=1}^k c_i G(\p_i)\\
&\underset{\text{$\p_i$ fixed point}}{=}& \sum_{i=1}^k c_i (g(\p_i)(f(\p_i)-\p_i) +G(\p_i))\\
&=& \sum_{i=1}^k c_i S(\p_i,f(\p_i)).
\end{eqnarray*}
Now for the average of the $S(\p_i,f(\p_i))$ to be greater than $S(\p,f(\p))$, at least one of the $S(\p_i,f(\p_i))$ must be greater than $S(\p,f(\p))$.
\end{proof}

\co{LOW PRIORITY: Think about the symmetric case again. Maybe one can prove something interesting for the many-outcome case after all. But given our results the two-outcome case is most important.}

\co{LOW PRIORITY: Prove the following.

Let $F$ be a set of fixed points of $f$ and let $\p^*$ be an extreme point of $F$, i.e., an element of $F$ that is not the convex combination of other elements of $F$. Then there exists a strictly proper scoring rule $S$ such that $\argmax_{\p\in F} S(\p,f(\p))=\p^*$.}

\co{There are alternative versions of the above, but I think the above is the simplest. If $G$ was allowed to be weakly convex, then $G$ could just be $v^\top p$ s.t.\ $G$ could be any utility function. But since $G$ must be strictly convex, this can only be approximately true. Which is still interesting, but not as nice to state.}

\section{Additional experimental results}
\label{appendix:experimental-results}

\subsection{Two outcomes}


%
%

\begin{figure}[H]
\centering
\includegraphics[width=0.7\columnwidth]{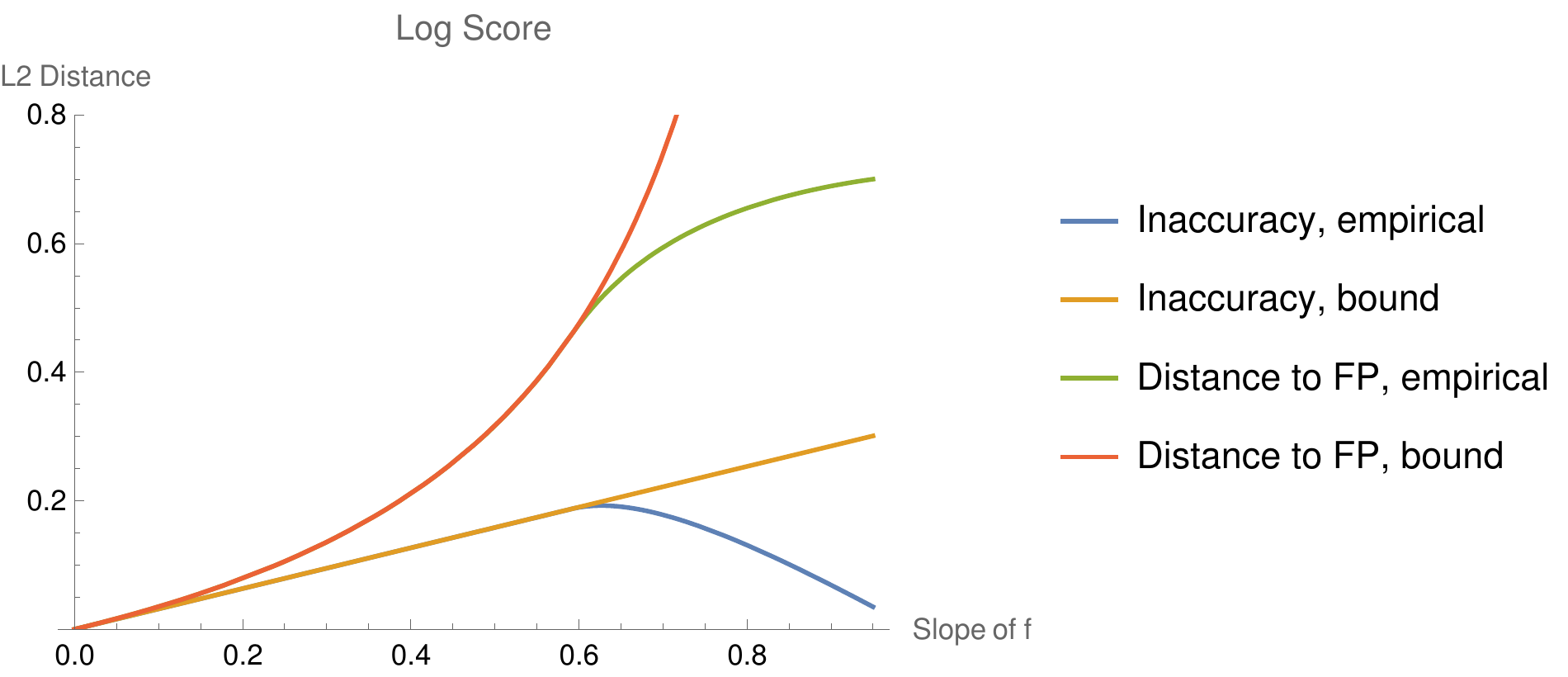}
\caption{Maximal inaccuracy and maximal distance to fixed point (FP) of optimal predictions, depending on the slope of \(f\), according to our simulation and our theoretical bound.}
\label{fig:max-distance-log-score-new}
\end{figure}

\begin{figure}[H]
\centering
\includegraphics[width=0.49
\textwidth]{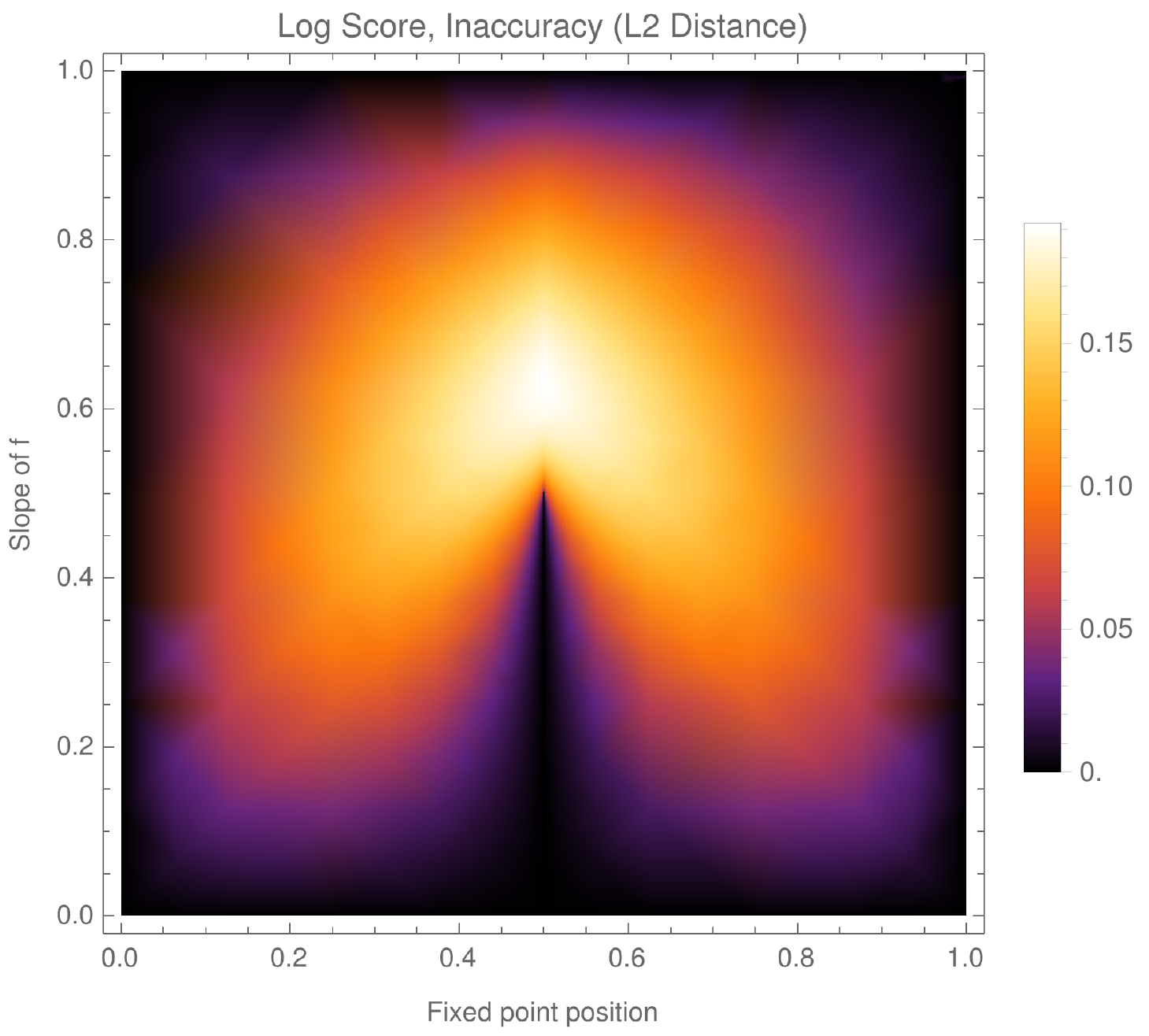}
\caption{Heatmap of the L2 inaccuracy  of optimal predictions, depending on fixed point position and slope of \(f\), for the logarithmic scoring rule.}
\label{fig:density-plot-log-l2-inaccuracy}
\end{figure}

\begin{figure}[H]
\centering
\includegraphics[width=0.49
\textwidth]{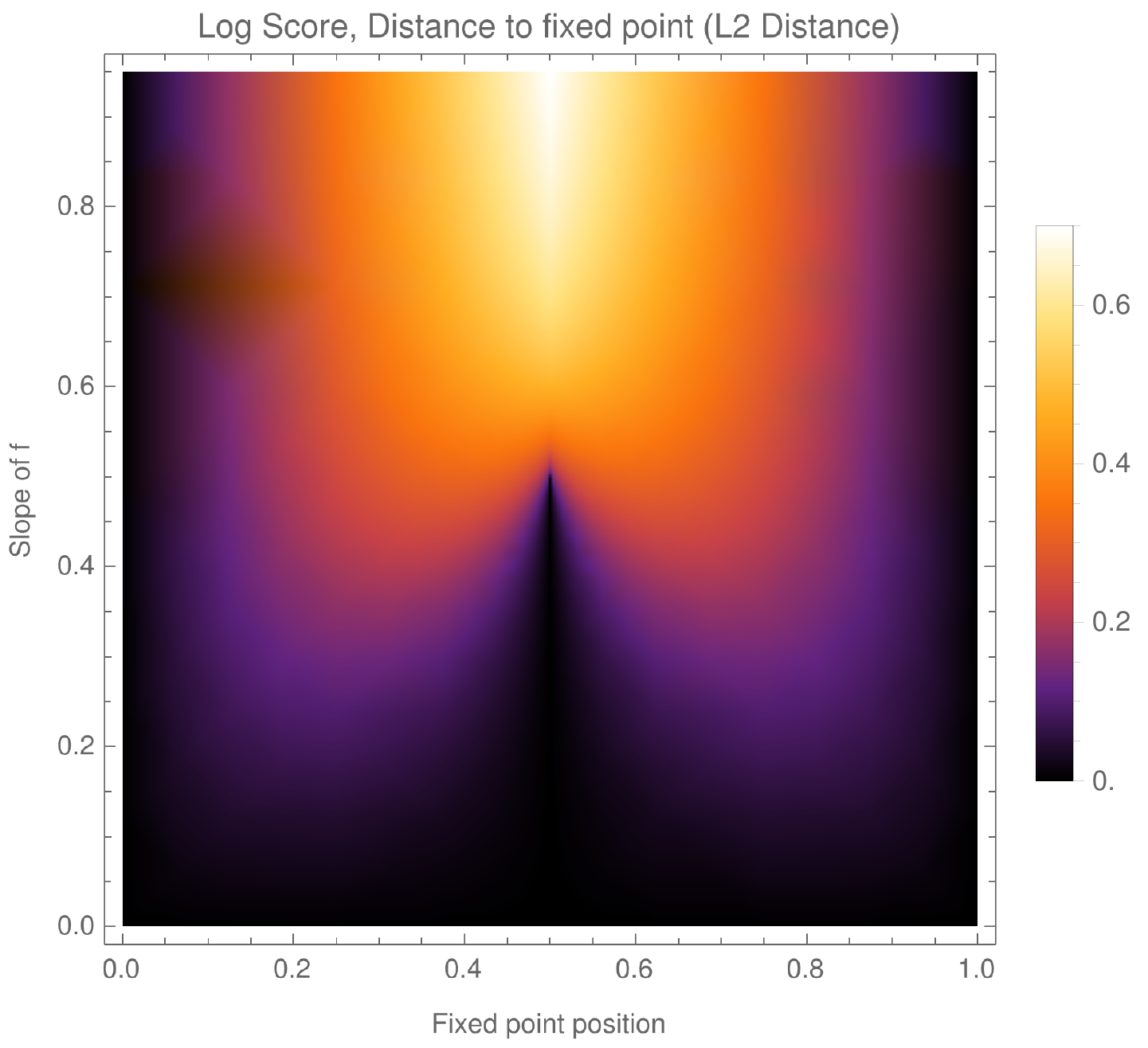}
\caption{Heatmap of the L2 distance to the fixed point of optimal predictions, depending on fixed point position and slope of \(f\), for the logarithmic scoring rule.}
\label{fig:density-plot-log-l2-disttofp}
\end{figure}

\Cref{fig:max-distance-log-score-new,fig:density-plot-log-l2-disttofp,fig:density-plot-log-l2-inaccuracy} give the same graphs that we give for the Brier scoring rule in the main text.
\jt{We have to give the theoretical bound here! (since we don't give in main text)}

Here the bounds for the log scoring rule are obtained as follows. First, note that for the log scoring rule we have that $g(p)=(\log p_i - \nicefrac{1}{2} (\log p_1+\log p_2 ))_i$. So, $\Vert g(\p)\Vert =\frac{| \log (p_1)-\log (p_2)| }{\sqrt{2}}$ and $Dg(p)=\begin{pmatrix}
\frac{1}{2p_1} & -\frac{1}{2p_2}\\
-\frac{1}{2p_1} &   \frac{1}{2p_2}\\
\end{pmatrix}.$ The eigenvalue of this on the tangent space is $1/(2p_1p_2)$. Thus, since $Dg$ is symmetric, $DG(\p)\succeq 1/(2p_1p_2)$. By \Cref{theorem:Caspar-approx-fix-point},
$\Vert \p-f(\p) \Vert \leq L_f \Vert g(p) \Vert 2p_1p_2= \sqrt{2} L_f p_1p_2 | \log (p_1)-\log (p_2)|$. Numerically this bound seems to be maximized at $p=0.824$ so that we get a bound $\Vert \p-f(\p) \Vert \leq 0.316 L_f$. Similarly, by \Cref{thm:distance-to-fp}, $\Vert \p - \p^* \Vert \leq 0.316 L_f / (1-L_f)$.

For the logarithmic scoring rule, we also give the same plot for the absolute distance between the logits or log odds of the two probabilities (logit distance), see \Cref{fig:density-plot-log-score-odds}. It is defined as \(d(\p,\p'):=|\sigma^{-1}(\p)-\sigma^{-1}(\p')|\), where
\(\sigma^{-1}(\p):=\log \frac{p_1}{p_2}\) is the logit of \(\p\) (or the inverse sigmoid transform). If probabilities are close to \(0\) or \(1\), then L2 distance will always evaluate to very small distances. In contrast, the logit distance depends on order of magnitude differences between probabilities, which may be the more useful quantity.

\begin{figure}[H]
\centering
\includegraphics[width=0.5\textwidth]{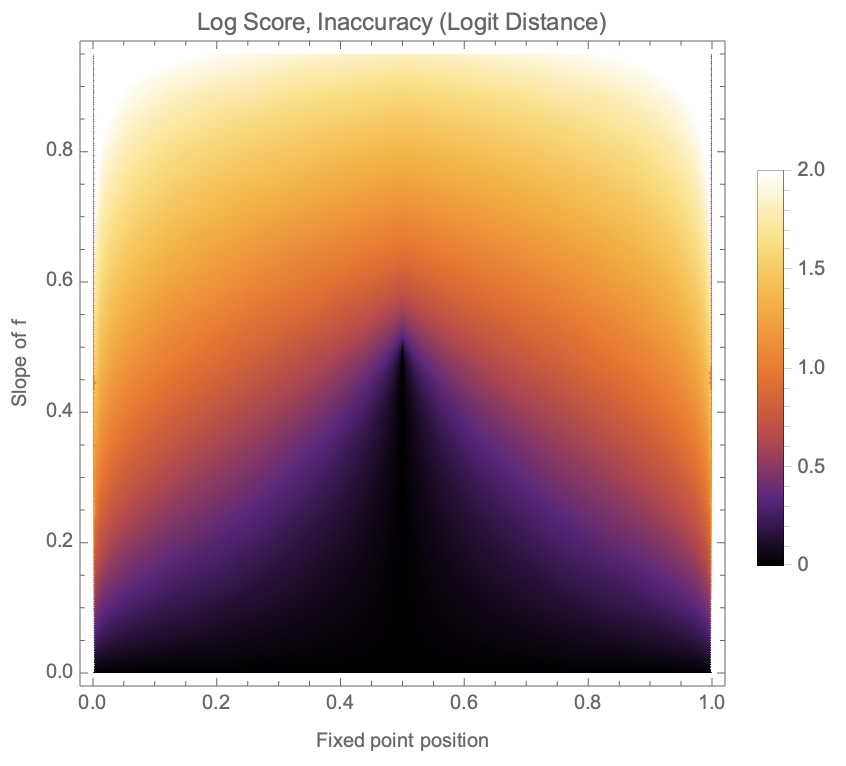}
\caption{Heatmap of logit distance inaccuracy of optimal predictions for the log scoring rule.}
\label{fig:density-plot-log-score-odds}
\end{figure}

We can see that inaccuracy remains high in logit space for fixed points close to \(0\) and \(1\). We don't plot logit distances for the quadratic score, since for that score, optimal predictions often take values close to or equal to \(\{(0,1),(1,0)\}\) (even if neither \(f(\p)\) nor \(\p^*\) lie in \(\{(0,1),(1,0)\}\)), so the corresponding distances become very large or infinite. The fact that logit distances are bounded for the log score is an advantage of that scoring rule.

\subsection{Many outcomes}

\subsubsection{Inaccuracy and ditance to fixed point are strongly correlated}

Throughout this paper we consider two measures of how wrong a prediction a prediction is, the inaccuracy, i.e., distance of the performatively optimal report $\p$ to $f(\p)$, and the distance of the performatively optimal report to the fixed point. Our experiments show that these measures are closely but not perfectly correlated, see \Cref{fig:scatterplot-distfptounif-l2-inaccuracy-brier-it2}. The correlation is $0.958$.

\begin{figure}[H]
    \centering
    \includegraphics[width=0.6\linewidth]{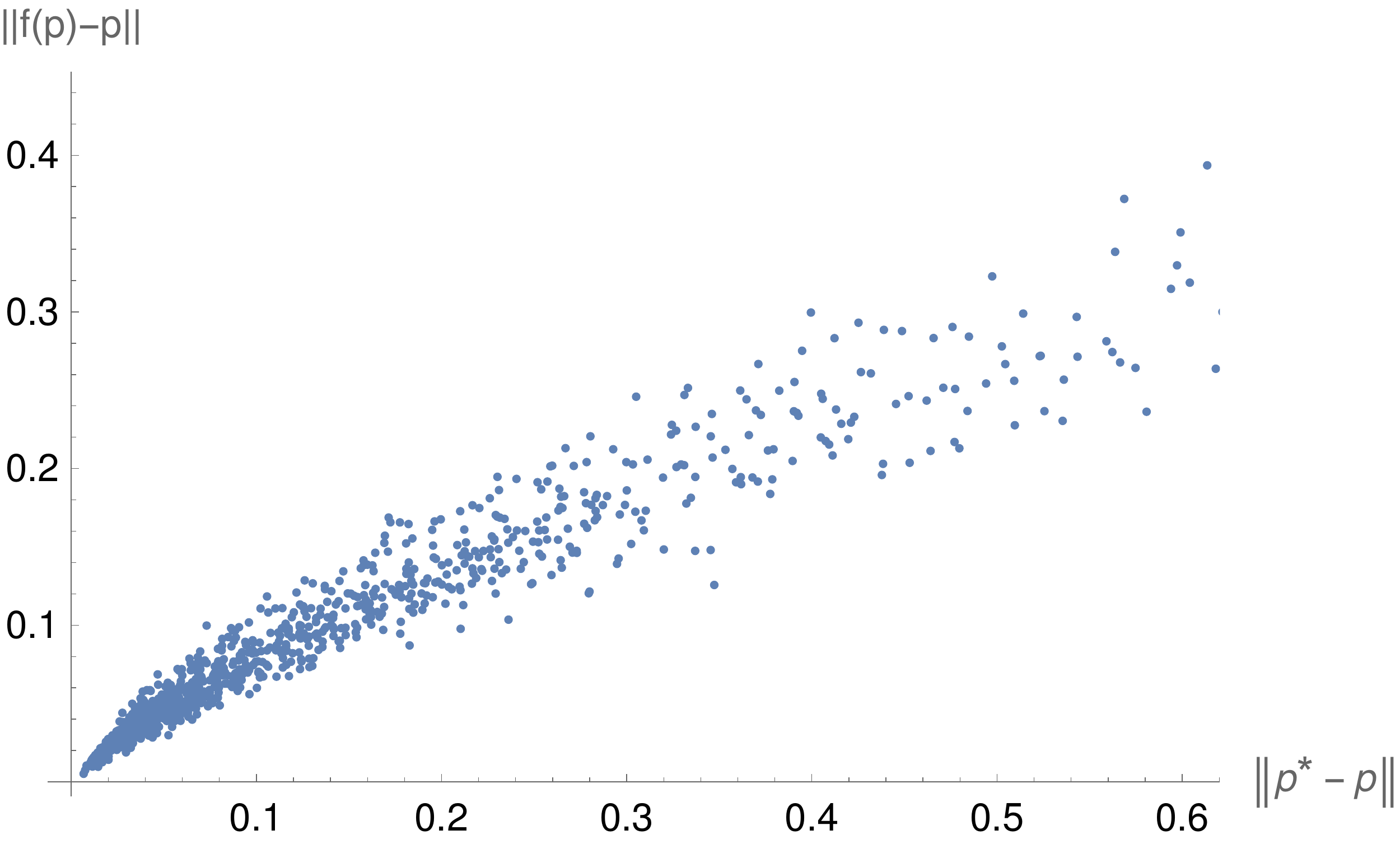}
    \caption{Scatter plot showing the L2 inaccuracy of the performatively optimal report against the L2 distance of the performatively optimal report to the fixed point report.}
    \label{fig:scatterplot-distfptounif-l2-inaccuracy-brier-it2}
\end{figure}

\subsubsection{The effect of fixed point location}
\label{appendix:experiments-many-outcomes-effect-of-fp-loc}

\Cref{fig:scatterplot-distfptounif-l2-disttofp-brier} scatter-plots the distance to fixed points against the distance of the fixed points from the uniform distribution. The blue line is the best linear fit, which is $0.0274 + 0.751 x$. Similarly \Cref{fig:scatterplot-distfptounif-l2-inaccuracy-brier} scatter-plots the inaccuracy of the performatively optimal report against the distance of the fixed point report to the uniform distribution. The blue line is again given by the best linear fit, which is $0.0231 + 0.468 x$.

The overall effect of the distance of $\p^*$ from uniform actually seems larger than the effect of the operator norm, as indicated by the correlation coefficients in \Cref{table:two-by-two-correlation}.

\begin{figure}[H]
    \centering
    \includegraphics[width=0.6\linewidth]{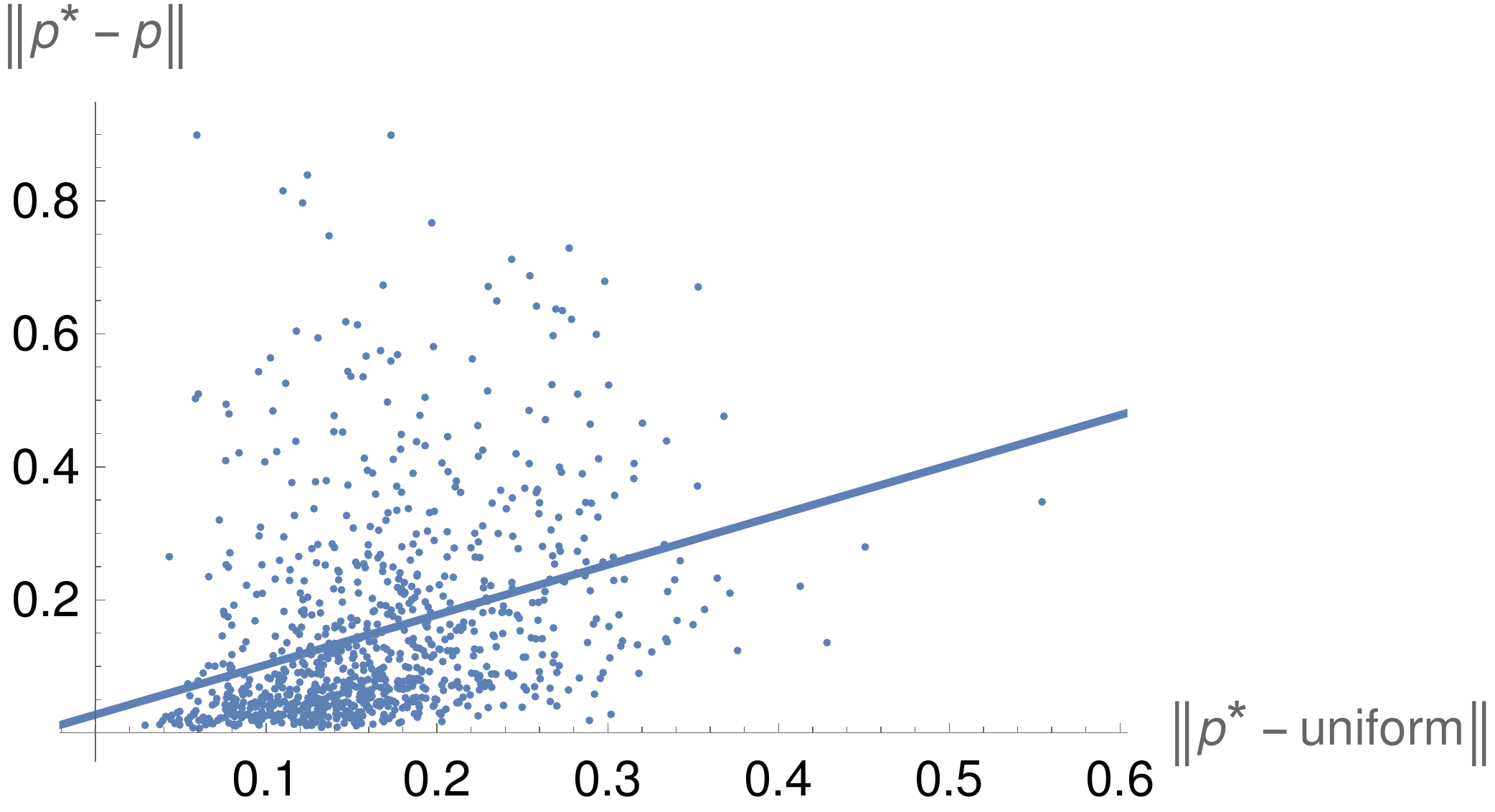}
    \caption{Scatter plot showing the L2 distance of the performatively optimal report to the fixed point report against distance of the fixed point to the uniform distribution in our experiments. The blue line is found by linear regression on the points.}
    \label{fig:scatterplot-distfptounif-l2-disttofp-brier}
\end{figure}

\begin{figure}[H]
    \centering
    \includegraphics[width=0.6\linewidth]{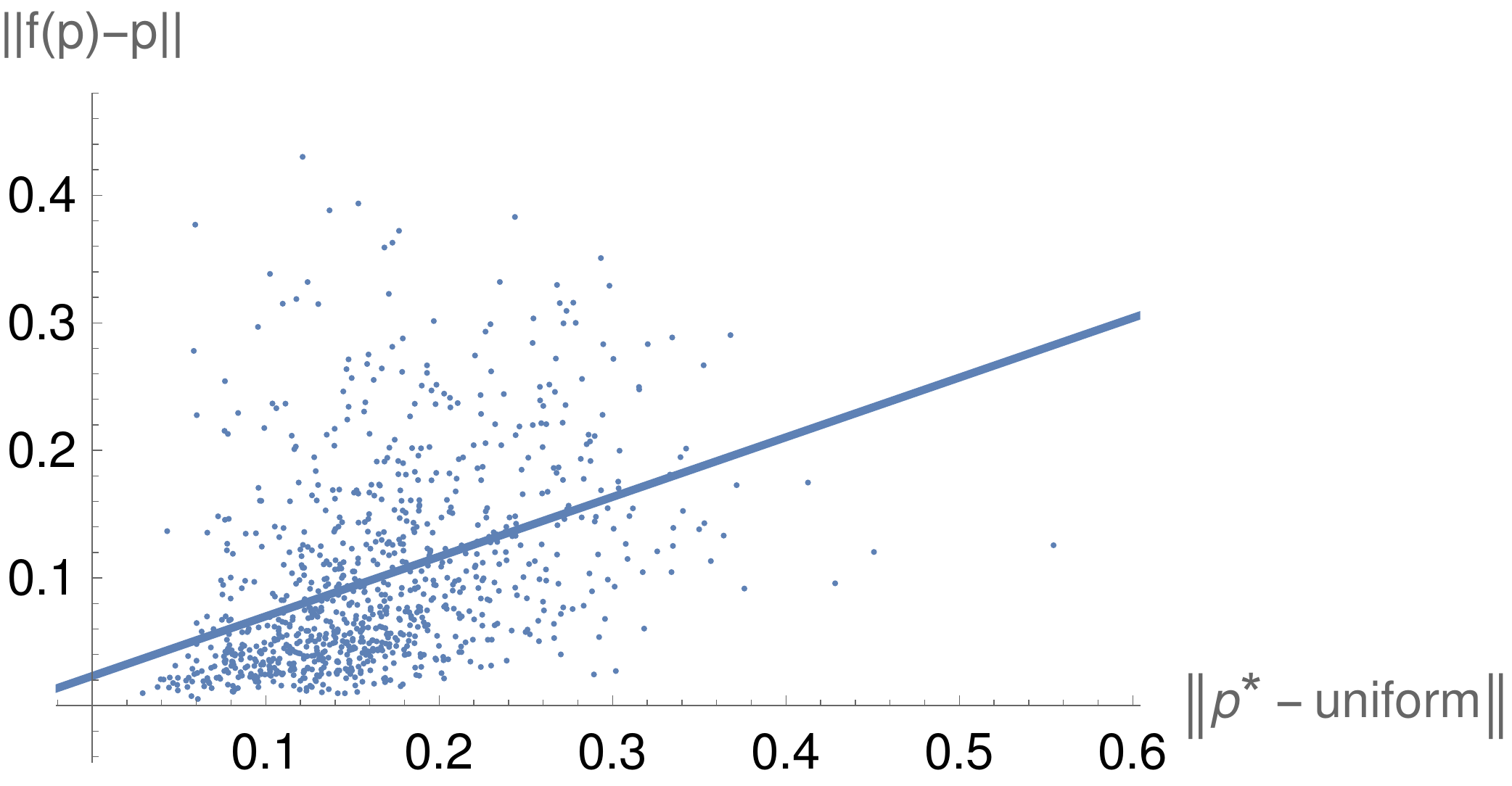}
    \caption{Scatter plot showing the L2 inaccuracy of the performatively optimal report against the distance of the fixed point of $f$ to the uniform distribution in our experiments. The blue line is found by linear regression on the points.}
    \label{fig:scatterplot-distfptounif-l2-inaccuracy-brier}
\end{figure}

\begin{table}[H]
    \centering
    \begin{tabular}{ccc}
    \hline\hline
         & $\Vert \p - \p^*\Vert$ & $\Vert \p - f(\p) \Vert$ \\
        \hline
        $\Vert f_A\Vert_{\mathrm{op}}$ & 0.294 & 0.311 \\
        $\Vert p^*-\frac{1}{n}\mathbf{1}\Vert$ & 0.331 & 0.411
    \end{tabular}
    \caption{Each entry shows the empirical correlation between the quantities determined by the row and column.}
    \label{table:two-by-two-correlation}
\end{table}

\section{Fixed points via alternative notions of optimality}
\label{appendix:alternative-notions-rationality}

In this section, we will review alternatives to performance optimality under which fixed points are incentivized. We will elaborate on the settings introduced in \Cref{stop-gradients} and provide formal statements and proofs.

To motivate the following, consider an expert AI that chooses its prediction to match its world model, but without explicitly considering the effect of its prediction. For instance, such cognition could arise in an AI trained via a purely supervised objective on historical data. This AI may not learn to take into account effects of its predictions on the outcome of the prediction. If it nevertheless has a world model that generalizes correctly to performative predictions, this could put the AI in a game in which it is trying to make a prediction to match its world model, while the world model updates its beliefs conditional on the AI's prediction. The only equilibria of this game would be fixed points.

Alternatively, fixed points could also result from different training schemes that explicitly optimize an AI's prediction to track empirical outcomes, without also incentivizing influencing the outcomes themselves, such as repeated risk minimization or repeated gradient descent \citep{perdomo2020performative}. 

Such expert AIs would likely be safer than ones optimizing for performative optimality. First, they report their true beliefs, which gives us better information to base decisions on. This also enables approaches in which we ensure that there is only one safe fixed point. Second, they do not explicitly optimize the choice of fixed point for a goal such as decreasing entropy. Instead, which fixed point is chosen will be contingent on initialization and specifics of the fixed point finding procedure.




\subsection{Performative stability and game theory}
\label{appendix:performative-stability-and-game-theory}

We begin by defining performative stability and relating it to an equilibrium in a two-player game. This represents the core idea behind all of the following settings.
A prediction \(\p^*\) is called \emph{performatively stable} \citep{perdomo2020performative} if
\begin{equation}\label{eq:stability-appendix}\p^*\in \argmax_{\p}\Score(\p,f(\p^*)).\end{equation}
First, it is clear that in our case, this is equivalent to \(\p^*\) being a fixed point.
\begin{proposition}\label{performative-stable-fixed-point}
    Assume \(S\) is strictly proper. Then a prediction \(\p^*\) is a fixed point if and only if it is performatively stable.
\end{proposition}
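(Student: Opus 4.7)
The plan is to unpack both directions using only the definition of strict propriety, which states that for every $\q \in \Pset$, the map $\p \mapsto S(\p,\q)$ is uniquely maximized at $\p=\q$.

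First I would handle the ``performatively stable $\Rightarrow$ fixed point'' direction. Suppose $\p^*$ satisfies \eqref{eq:stability-appendix}, i.e., $\p^* \in \argmax_{\p} S(\p, f(\p^*))$. Setting $\q := f(\p^*)$, strict propriety implies that the unique maximizer of $S(\cdot, \q)$ is $\q$ itself. Hence $\p^* = f(\p^*)$, so $\p^*$ is a fixed point of $f$.

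Conversely, for the ``fixed point $\Rightarrow$ performatively stable'' direction, assume $\p^* = f(\p^*)$. Then for any $\p \in \Pset$, propriety (even the non-strict version suffices here) gives
\begin{equation*}
    S(\p, f(\p^*)) = S(\p, \p^*) \leq S(\p^*, \p^*) = S(\p^*, f(\p^*)),
\end{equation*}
so $\p^* \in \argmax_{\p} S(\p, f(\p^*))$, establishing performative stability.

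There is no real obstacle here; the statement is essentially a restatement of the definition of strict propriety once the argument $f(\p^*)$ in the inner score is treated as a fixed probability vector. The only subtlety to flag is that strict propriety is used in the forward direction to guarantee uniqueness of the maximizer (ruling out other optima $\p^* \neq f(\p^*)$), whereas the backward direction only uses propriety.
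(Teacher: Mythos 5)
Your proof is correct and is essentially identical to the paper's: both directions unpack the definitions, with propriety giving the fixed-point-to-stability implication and strict propriety giving uniqueness in the reverse direction. You merely handle the two implications in the opposite order.
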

\begin{proof}
``\(\Rightarrow\)''. Assume \(f(\p^*)=\p^*\). Then \(S(\p^*,f(\p*))=S(\p^*,\p^*)\geq S(\p,\p^*)=S(\p,f(\p^*))\) for any \(\p\) since \(S\) is proper. Hence,
    \(\p^*\in \argmax_{\p}\Score(\p,f(\p^*)).\)
    
``\(\Leftarrow\)''. Assume \(\p^*\in \argmax_{\p}\Score(\p,f(\p^*))\). Then since \(S\) is strictly proper, it must be \(\p^*=f(\p^*)\).
\end{proof}

Next, the above objective is equivalent to the definition of a Nash equilibrium in the following game.


\begin{definition}[Oracle game]
Consider a two-player continuous game in which the first player controls \(\p\in \Pset\) and the second player controls \(\q\in\Pset\), with payoff functions \(U_1(\p,\q):=\Score(\p,\q)\) and \(U_2(\p,\q):=\Score(\q,f(\p))\) for the two players, respectively.
\end{definition}

If \(\p^*,\q^*\) is a Nash equilibrium of the oracle game, we have \(p^*=\argmax_\p \Score(\p,\q)\) and \(\q^*=\argmax_{\q} S(\q,f(\p^*))\). Substituting the optimal value \(\q^*=f(\p^*)\) for the second player gives us exactly above definition of performative stability in \Cref{eq:stability-appendix}. Conversely, if a prediction \(\p^*\) is performatively stable, then setting \(\q^*:=f(\p^*)\) yields a Nash equilibrium.

\begin{proposition}\label{prop:fp-are-ne}
Assume \(S\) is a proper scoring rule. Then \(\p\in \Pset\), \(\q:=f(\p)\) is a Nash equilibrium of the oracle game, if and only if \(\p\) is performatively stable. By \Cref{performative-stable-fixed-point}, this is equivalent to \(\p\) being a fixed point.
\end{proposition}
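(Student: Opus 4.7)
The plan is to unpack the Nash equilibrium conditions of the oracle game and match them directly against the definition of performative stability, using only propriety of $S$.

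For the forward direction, I would assume $(\p, \q)$ with $\q = f(\p)$ is a Nash equilibrium. The first player's best response condition reads $\p \in \argmax_{\p'} U_1(\p', \q) = \argmax_{\p'} S(\p', \q)$. Substituting $\q = f(\p)$ gives $\p \in \argmax_{\p'} S(\p', f(\p))$, which is exactly the definition of performative stability.

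For the reverse direction, I would assume $\p$ is performatively stable and set $\q := f(\p)$, and then verify both best-response conditions. The first player's condition $\p \in \argmax_{\p'} S(\p', f(\p))$ is just the performative stability assumption. For the second player, I need $\q \in \argmax_{\q'} U_2(\p, \q') = \argmax_{\q'} S(\q', f(\p))$; since $S$ is proper, $f(\p)$ itself lies in this argmax, so $\q = f(\p)$ qualifies.

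The second sentence of the proposition (the equivalence with fixed points) follows immediately by invoking \Cref{performative-stable-fixed-point}, which was proved earlier. The main obstacle here is essentially bookkeeping — making sure the substitution $\q = f(\p)$ is carried out cleanly and that both best-response conditions are checked, rather than only the first. No continuity, differentiability, or strict propriety is needed for this proposition itself; propriety (not strict) suffices to guarantee that $f(\p)$ is a best response for player 2.
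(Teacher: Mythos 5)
Your proof is correct and takes essentially the same approach as the paper's inline argument preceding the proposition: both directions reduce to matching player 1's best-response condition against the definition of performative stability, with player 2's best-response condition discharged by (non-strict) propriety of $S$. You are slightly more careful than the paper's informal exposition in exploiting that the proposition fixes $\q := f(\p)$ upfront (so the forward direction needs no argument that player 2 would choose $f(\p)$), and your remark that plain propriety suffices is accurate.
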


The oracle game could arise in an agent that uses a causal decision theory \citep{sep-decision-causal} to maximize its score and that believes that \(\Score\) is influenced causally by \(\p\), but only acausally by \(f(\p)\). In that case, the only \emph{ratifiable} \cite[][Ch.~1.7]{jeffrey1990logic,bell2021reinforcement} decision is a Nash equilibrium of the above game. Similarly, the deliberational causal epistemic decision theory discussed by \citet{greaves2013epistemic} would output Nash equilibria of this game (whereas performative optimality would correspond to an agent using evidential epistemic decision theory in this case).

Note that it is important that both players act simultaneously. \citet{perdomo2020performative} introduce a Stackelberg version of the oracle game that produces performatively optimal instead of performatively stable reports. Consider a game in which player \(1\) acts first and chooses \(\p\), after which player \(2\) responds with a prediction \(\q\). Then player \(2\) responds \(\q=f(\p)\) to player \(1\)'s action, and player \(1\)'s optimization problem becomes
\[p^*=\argmax_{\p}S(\p,\argmax_{\q}\Score(\q,f(\p)))=\argmax_\p \Score(\p,f(\p)).\]

\subsection{Repeated risk minimization and repeated gradient descent}
\label{rrm-and-rgd}
Above, we have defined performative stability and a related game which yield fixed points, but we have not defined methods for solving these problems. In the performative prediction context, \citet{perdomo2020performative} introduce \emph{repeated risk minimization} and \emph{repeated gradient descent}, both methods that converge to performatively stable points. In this section, we review both schemes and show how repeated gradient descent can be seen as gradient descent on a \emph{stop-gradient} \citep{foerster2018dice,demski2019partial} objective.

We assume direct access to \(\q\), instead of having only access to samples distributed according to \(\q\). In the next section, we discuss online learning when we only have access to samples. One way to understand this distinction is that the former corresponds to the internal cognition of an agent with a belief \(\q=f(\p)\) optimizing a prediction \(\p\). The latter instead corresponds to a machine learning training setup for an oracle AI, where \(\q\) is the ground truth environment distribution instead of the oracle's belief. Of course, there is no strict divide between the two. Any optimization algorithm could be used either by the agent itself or to train the agent.


First, \emph{repeated risk minimization} is a procedure by which we start with a prediction \(\p_0\) and then iteratively update the prediction as \(\p_{t+1}=\argmax_{\p}S(\p,f(\p_{t}))\). This is also the same as alternating best response learning in the oracle game, where player~\(1\) iteratively updates their prediction, responding to predictions \(\q_{t}=f(\p_{t})\) from player \(2\). If \(S\) is strictly proper, \(\p_{t+1}=f(\p_{t})\), and this results in \emph{fixed point iteration} for \(f\). Fixed point iteration converges globally to a fixed point if \(f\) has Lipschitz constant \(L_f<1\). It also converges locally to a fixed point \(\p^*\) if \(f\) is continuously differentiable at \(\p^*\) and \(\rho(Df(\p^*))<1\), where \(\rho(Df(\p^*))\) is the spectral radius of the Jacobian matrix \(Df(\p^*)\).

Second, assume that \(\Score\) is differentiable. Then \emph{repeated gradient ascent} updates points via
\[\p_{t+1}:=\Pi_\Delta(\p_{t}+\alpha \E_{y\sim f(\p_{t})}[\nabla_\p S(\p_{t},y)]),\]
where \(\Pi_\Delta\) is the Euclidean projection onto the probability simplex \(\Pset\), and \(\alpha>0\) is the learning rate.

Using the definition of \(\Score(\p,\q)\), we have
\[\E_{y\sim f(\p_t)}[\nabla_\p\Score(\p_t,y)]
=\nabla_\p(\E_{y\sim\q}[\Score(\p_t,y)])|_{\q=f(\p_t)}
=\nabla_\p(\Score(\p_t,\q))|_{\q=f(\p)}\]
We can express this as
\[\nabla_\p(\Score(\p_t,\bot f(\p_t))):=\nabla_\p(\Score(\p_t,\q))|_{\q=f(\p)},\]
where \(\bot\) is the \emph{stop-gradient operator}, which evaluates to the identity function but sets gradients to zero, \(\nabla_x( \bot x)=0\) \citep{foerster2018dice,demski2019partial}.\footnote{This is not a mathematical function (there is no function that is equal to the identity but has gradient zero everywhere), but rather a notational convention in reference to the \texttt{stop\_gradient} or \texttt{detach} functions from the tensorflow or pytorch python libraries. Interestingly, one can perform valid derivations using the stop-gradient operator (e.g., using the chain rule). We leave it to future work to explore the mathematics behind stop-gradients further.} In the following, we call \(S(\p, \bot f(\p))\) the \emph{stop-gradient objective}.

Importantly, it matters that the gradient in repeated gradient ascent lies inside instead of outside the expectation:
\[\E_{y\sim f(\p_t)}[\nabla_\p S(\p_t,y)]=\nabla_\p(S(\p_t,\bot f(\p_t)))\neq \nabla_p(S(\p_t,f(\p_t)))=\nabla_\p \E_{y\sim f(\p_{t})}[ S(\p_{t},y)]).\]
Unlike repeated gradient ascent, the latter implements gradient ascent on \(S(\p,f(\p))\) and thus leads to performatively optimal reports.

\citet{perdomo2020performative} show that, given their assumptions, repeated gradient descent globally converges to stable fixed points. They also provide convergence rates. We will show an analogous result relating repeated gradient ascent to fixed points in our setting, though we won't analyze global convergence or rates of convergence.

To begin, we show that repeated gradient descent is equivalent to Naive Learning \citep{letcherstable} in the oracle game, assuming that player~\(2\) always plays \(\q=f(\p)\).    
\begin{proposition}\label{prop:game-stop-gradient}
    Assume player \(1\) is performing gradient ascent on its objective with learning rate \(\alpha\), under the assumption that player \(2\) always plays \(\q=f(\p)\). Then player \(1\)'s update is
    \[\p_{t+1}=\Pi_\Delta(\p_t+\alpha\nabla_\p(\mathbf{S}(\p_t,\bot f(\p_t)))).\]
\end{proposition}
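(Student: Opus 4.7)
The plan is to unpack definitions and verify the identity directly. Player $1$'s utility in the oracle game is $U_1(\p,\q) = S(\p,\q)$. Under the assumption that player $2$ always plays $\q = f(\p)$, player $1$ at time $t$ treats player $2$'s current action $\q_t = f(\p_t)$ as an exogenous constant when computing its own gradient—crucially, player $1$ does not differentiate through $f$, since from its point of view $\q$ is an action chosen by the other player, not a function of $\p$. Therefore, the projected gradient ascent update with step size $\alpha$ is
\begin{equation*}
\p_{t+1} = \Pi_\Delta\!\left(\p_t + \alpha\, \nabla_\p S(\p,\q)\big|_{\p=\p_t,\,\q=f(\p_t)}\right).
\end{equation*}

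The only remaining step is to identify this gradient with the stop-gradient expression in the statement. By the definition of the stop-gradient operator $\bot$, evaluating $\bot f(\p_t)$ yields the value $f(\p_t)$ but contributes zero to the gradient with respect to $\p$. Consequently, $\nabla_\p\bigl(S(\p_t, \bot f(\p_t))\bigr)$ is by definition equal to $\nabla_\p S(\p,\q)|_{\p=\p_t,\,\q=f(\p_t)}$. Substituting this into the update above gives exactly
\begin{equation*}
\p_{t+1} = \Pi_\Delta\!\left(\p_t + \alpha\, \nabla_\p\bigl(S(\p_t, \bot f(\p_t))\bigr)\right),
\end{equation*}
as claimed.

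There is essentially no obstacle here beyond notational bookkeeping: the result is a direct consequence of what it means for player $1$ to perform gradient ascent on $U_1$ while regarding player $2$'s action as fixed. The only conceptual point worth emphasizing in the write-up is the contrast with performative optimality: had player $1$ instead differentiated through $f$ (treating $\q$ as an implicit function of $\p$), the chain rule would produce an extra term $Df(\p_t)^\top \nabla_\q S(\p_t,\q)|_{\q=f(\p_t)}$, yielding gradient ascent on $S(\p,f(\p))$ rather than on the stop-gradient objective. It is precisely the omission of this term—encoded by $\bot$—that makes the fixed points of the iteration coincide with fixed points of $f$ rather than with performative optima.
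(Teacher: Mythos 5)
Your proof is correct and takes essentially the same route as the paper: unwind the definition of gradient ascent on $U_1$ with player~$2$'s action held fixed at $\q = f(\p_t)$, then recognize the resulting gradient as the stop-gradient expression by definition of $\bot$. The remark contrasting this with differentiating through $f$ is a nice clarification, but the core argument matches the paper's.
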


\begin{proof}The proof follows immediately from the definitions. Player \(1\)'s update is, by assumption,
    \[\p_{t+1}=\Pi_\Delta(\p_t+\alpha\nabla_\p(U_1(\p_t,\q)))=\Pi_\Delta(\p_t+\alpha\nabla_\p(\Score(\p_t,\q)))\]
    where \(\q\) is player \(2\)'s action. Assuming player \(2\) plays \(\q=f(\p_t)\), we get
\[\p_{t+1}=\Pi_\Delta(\p_t+\alpha\nabla_\p(\Score(\p_t,\q)))=
\Pi_\Delta(\p_t+\alpha\nabla_p(\Score(\p_t,\bot f(\p_t))))\]
\end{proof}

Next, we show that fixed points are critical points of the stop-gradient objective.

\begin{proposition}\label{stop-gradient-critical-points}
Assume \(S\) is proper and let \(G,g\) as in the Gneiting and Raftery characterization of \(S\) (\Cref{theorem:gneiting-raftery}) be differentiable. Then for any \(\p\in \Pset\), we have
\[\nabla_\p(S(\p,\bot f(\p)))= Dg(\p)^\top (f(\p)-\p).\]
In particular, if \(\p\) is a fixed point, it follows that \(\nabla_\p(\Score(\p,\bot f(\p)))=0\). The reverse is true if \(Dg(\p)|_{\TPset}\succ 0\).
\end{proposition}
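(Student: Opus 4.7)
The plan is to leverage \Cref{lemma:derivative-S} together with the observation that the stop-gradient operator $\bot$ removes exactly the chain-rule term that would otherwise propagate through $f$. \Cref{lemma:derivative-S} already decomposes
\[
\nabla_\p(S(\p,f(\p))) = Dg(\p)^\top(f(\p)-\p) + Df(\p)^\top g(\p),
\]
where the first summand arises from differentiating in $\p$ while holding the second argument of $S$ constant, and the second summand arises from differentiating through $f(\p)$. Since $\bot f(\p)$ is by definition treated as constant with respect to $\p$, only the first summand remains, yielding $\nabla_\p(S(\p,\bot f(\p))) = Dg(\p)^\top(f(\p)-\p)$.

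Alternatively, and to make the proof self-contained, I would verify the identity directly from the Gneiting--Raftery form $S(\p,\q)=G(\p)+g(\p)^\top(\q-\p)$. Differentiating at fixed $\q$ and using $\nabla G = g$ gives $g(\p) + Dg(\p)^\top(\q-\p) - g(\p) = Dg(\p)^\top(\q-\p)$, after which substituting $\q=f(\p)$ produces the claimed formula. The $-g(\p)$ cancellation coming from differentiating the $-g(\p)^\top \p$ piece is the only calculation worth spelling out.

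For the two consequences, the forward direction is immediate: if $\p$ is a fixed point then $f(\p)-\p=0$ and the formula gives $\nabla_\p(S(\p,\bot f(\p)))=0$. For the converse under the assumption $Dg(\p)|_{\TPset}\succ 0$, I would argue as follows: $f(\p),\p\in\Pset$ implies $f(\p)-\p\in\TPset$; the operator $Dg(\p)=D^2G(\p)$ is symmetric, so strict positive definiteness on $\TPset$ rules out nontrivial kernel elements inside $\TPset$; hence $Dg(\p)^\top(f(\p)-\p)=0$ forces $f(\p)-\p=0$.

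I do not anticipate a real obstacle. The one point requiring care is the representation ambiguity of $Dg(\p)$ as an $n\times n$ matrix (since $g$ is only defined on $\Pset$), but as the paper's notation section emphasizes, $Dg(\p)\vec{v}$ is unambiguously defined for $\vec{v}\in\TPset$ and stays in $\TPset$, and that is all we use here.
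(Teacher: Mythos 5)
Your direct computation from the Gneiting--Raftery form $S(\p,\q)=G(\p)+g(\p)^\top(\q-\p)$, differentiating at fixed $\q$ and then substituting $\q=f(\p)$, is exactly the paper's derivation of the gradient identity, and the forward implication for fixed points is identical. The one place you diverge is the converse: you invoke symmetry of $Dg(\p)=D^2G(\p)$ to conclude that strict positive definiteness of $Dg(\p)|_{\TPset}$ gives a trivial kernel there, so that $Dg(\p)^\top(f(\p)-\p)=0$ with $f(\p)-\p\in\TPset$ forces $f(\p)-\p=0$. The paper instead sidesteps symmetry entirely by pairing the gradient with $f(\p)-\p$: if $f(\p)-\p\neq 0$, then
\[
\nabla_\p\bigl(S(\p,\bot f(\p))\bigr)^\top(f(\p)-\p)=(f(\p)-\p)^\top Dg(\p)(f(\p)-\p)>0,
\]
so the gradient cannot vanish. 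The quadratic-form route is marginally more economical since it uses only the paper's definition of $\succ 0$ on $\TPset$ and requires no appeal to Hessian symmetry; your route is also valid because twice differentiability of $G$ does deliver a symmetric $Dg$, as the paper notes elsewhere, but you should make that dependence explicit. Your opening idea of bootstrapping from \Cref{lemma:derivative-S} by ``removing the chain-rule term'' is a correct heuristic but not a proof on its own; the self-contained calculation you give next is the right way to make it rigorous, and it is what the paper does.
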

\begin{proof}
    \begin{multline}
        \nabla_\p (\Score(\p,\bot f(\p)))
        =\nabla_\p(\Score(\p,\q))|_{\q=f(\p)}
        =\nabla_\p(G(\p)+g(\p)^\top (\q-\p))|_{\q=f(\p)}
        \\
        =(g(\p)+Dg(\p)^\top (\q-\p)-g(\p))|_{\q=f(\p)}
        =Dg(\p)^\top (f(\p)-\p).
    \end{multline}
    If \(\p\) is a fixed point, it follows that \(\nabla_\p (\Score(\p,\bot f(\p)))=0\). Moreover, if \(Dg(\p)|_{\TPset}\succ 0\), then if \(f(\p)-\p\neq 0,\) 
    \[  \nabla_\p (\Score(\p,\bot f(\p)))^\top (f(\p)-\p)
    =(f(\p)-\p)^\top Dg(\p)(f(\p)-\p)>0\]
    and thus \( \nabla_\p (\Score(\p,\bot f(\p)))\neq 0\).
\end{proof}


Finally, we show that in our setting, repeated gradient ascent locally converges to fixed points \(\p^*\), assuming that \(\Vert Df(\p^*)\Vert_{\op}\) is sufficiently small. This is a local version of convergence results from \cite{perdomo2020performative}, adapted to our setting.

\begin{proposition}\label{prop:convergence-stop-gradient}
   Let \(S\) be a strictly proper scoring rule. Let \(\p^*\in\interior{\Pset}\) be a fixed point of \(f\) such that \(G\) is three times differentiable at \(\p^*\), i.e. \(D^2g(\p^*)=D^2\nabla g(\p^*)\) exists. Assume \(\beta\succeq Dg(\p^*)|_{\TPset}\succeq \gamma>0\), that \(f\) is differentiable at \(\p^*\), and \(\Vert Df(\p^*)\Vert_{\op}<\frac{\gamma}{\beta}\). Then, for small enough \(\alpha>0\), an agent taking updates \(\p_{t+1}=\Pi_{\Delta}(\p_t+\alpha \nabla_\p(\mathbf{S}(\p_t,\bot f(\p_t))))\) will locally converge to \(\p^*\).
\end{proposition}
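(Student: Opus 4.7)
Let $F(\p) := \nabla_\p \bigl(S(\p,\bot f(\p))\bigr) = Dg(\p)^\top(f(\p)-\p)$ by \Cref{stop-gradient-critical-points}, so that the iteration is $\p_{t+1}=T(\p_t)$ with $T(\p) := \Pi_\Delta(\p + \alpha F(\p))$. Since $\p^* \in \interior{\Pset}$ is a fixed point of $f$, $T(\p^*)=\p^*$, and for $\p$ in a small neighborhood $U$ of $\p^*$ and $\alpha$ small, $\p+\alpha F(\p)\in\interior{\Pset}$, so $\Pi_\Delta$ acts as the identity on $U$. Moreover $F(\p)\in\TPset$ (because $f(\p)-\p\in\TPset$ and $Dg(\p)^\top$ preserves $\TPset$), so the dynamics stay within the affine hull of $\Pset$. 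By the standard linearization principle for fixed points of $C^1$ maps (spectral radius of the Jacobian strictly less than $1$ implies local attractivity), it suffices to bound the spectral radius of $DT(\p^*)|_\TPset$.

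\textbf{Linearization.} Write $A:=Dg(\p^*)$ and $B:=Df(\p^*)$. Apply the product rule to $F(\p)=Dg(\p)^\top(f(\p)-\p)$: one term is $D^2g(\p^*)$ (which exists by our three-times differentiability hypothesis) contracted with $f(\p^*)-\p^*=0$, so it vanishes; the remaining term yields $DF(\p^*)|_\TPset = A^\top(B-I) = A(B-I)$, where we used that $A$ is symmetric on $\TPset$ (as the Hessian of $G$). Hence $DT(\p^*)|_\TPset = I - \alpha A(I-B)$. It remains to show that for $\alpha>0$ small, all eigenvalues of $I-\alpha A(I-B)$ on $\TPset$ have modulus strictly less than $1$, equivalently, that every eigenvalue $\lambda$ of $A(I-B)$ on $\TPset$ has $\mathrm{Re}(\lambda)>0$.

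\textbf{Spectral argument and conclusion.} It suffices to show that the symmetric part of $A(I-B)$ on $\TPset$ is positive definite, since then any (possibly complex) eigenvalue $\lambda$ with eigenvector $v=u+iw$ satisfies $\mathrm{Re}(\lambda)\,(\|u\|^2+\|w\|^2) = u^\top A(I-B) u + w^\top A(I-B) w > 0$. Using symmetry of $A$, we have $v^\top AB v = v^\top B^\top A v$, so for unit $v\in\TPset$:
\[
    v^\top\bigl(A(I-B)\bigr)_{\mathrm{sym}} v \;=\; v^\top A v \;-\; v^\top A B v \;=\; v^\top A v \;-\; \langle Av, Bv\rangle.
\]
The first term is at least $\gamma$. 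For the second, Cauchy--Schwarz and $A\preceq \beta I$, $\|B\|_\op<\gamma/\beta$ give $|\langle Av,Bv\rangle|\leq \|Av\|\,\|Bv\|\leq \beta \cdot (\gamma/\beta) = \gamma$, and the inequality is strict. By compactness of the unit sphere in $\TPset$, there is $\mu>0$ with $v^\top(A(I-B))_{\mathrm{sym}} v \geq \mu$ for all unit $v\in\TPset$. Hence every eigenvalue $\lambda$ of $A(I-B)|_\TPset$ has $\mathrm{Re}(\lambda)\geq \mu$, and $|1-\alpha\lambda|^2 = 1 - 2\alpha\,\mathrm{Re}(\lambda) + \alpha^2|\lambda|^2 < 1$ for $\alpha$ smaller than $2\mu/\max_i|\lambda_i|^2$. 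Therefore $DT(\p^*)|_\TPset$ is a strict contraction in an appropriate inner product (obtained from the discrete Lyapunov equation), and the linearization principle yields a neighborhood of $\p^*$ on which $\p_t\to\p^*$.

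\textbf{Main obstacle.} The delicate step is the spectral argument: because $A(I-B)$ is not symmetric, positivity of its eigenvalues does not follow from positivity of $A$ and smallness of $B$ in isolation. The trick is to reduce to the symmetric part and exploit both bounds on $Dg(\p^*)$ simultaneously---the lower bound $\gamma$ to dominate $v^\top A v$, and the upper bound $\beta$ to control $\langle Av,Bv\rangle$ through $\|Df(\p^*)\|_\op<\gamma/\beta$. The condition $\|B\|_\op<\gamma/\beta$ is used exactly once and is precisely what is needed to make this cancellation strict.
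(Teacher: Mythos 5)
Your overall strategy mirrors the paper's: reduce to a local fixed-point iteration on the tangent space, show the nonlinear $D^2g$ term in the Jacobian of the update map vanishes because $f(\p^*)=\p^*$, and deduce local convergence from a spectral-radius bound. The difference lies in the spectral step. You show that the symmetric part of $A(I-B)$ is positive definite on $\TPset$ (using $v^\top Av\geq\gamma$ and $|\langle Av,Bv\rangle|\leq\beta\Vert B\Vert_\op<\gamma$), deduce that every eigenvalue of $A(I-B)$ has strictly positive real part, and then observe that $|1-\alpha\lambda|<1$ for small $\alpha$. The paper instead picks $\alpha=1/\beta$ explicitly and bounds $|\lambda|$ directly via the Rayleigh-quotient identity $\lambda=\vec v^{*}D\varphi(0)\vec v$ and Cauchy--Schwarz, arriving at $|\lambda|\leq 1-\alpha\gamma+\alpha\Vert Dg\Vert_\op\Vert Df\Vert_\op<1$. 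Both exploit the same interplay of $\gamma$, $\beta$, and $\Vert Df(\p^*)\Vert_\op<\gamma/\beta$; your route is marginally cleaner in that it decouples positivity of the real parts from the choice of step size, and it handles complex eigenvalues explicitly (the paper's $\vec v^\top$ should really be $\vec v^*$ for a complex eigenvector, though the bound still goes through). One small mismatch: you invoke the linearization principle ``for $C^1$ maps,'' but the hypotheses only give differentiability of $Dg$ and $f$ at the single point $\p^*$, not in a neighborhood; the paper therefore cites Kitchen's generalization of Ostrowski's theorem, which requires only differentiability at the fixed point. The classical Ostrowski statement also only needs Fréchet differentiability at the fixed point, so your conclusion is fine, but the phrasing ``$C^1$ maps'' slightly overstates the regularity you actually have.
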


For the proof, we use the following generalization of Ostrowski's theorem, adapted from \citet{Kitchen1966}.

\begin{theorem}[\cite{Kitchen1966}]\label{thm:kitchen}
Let \(\varphi\colon D\subseteq V\rightarrow W\) where \(V,W\) are Banach spaces. Assume
\begin{itemize}
\item \(\varphi\) has a fixed point \(\x^*\in \mathrm{int}(D)\)
\item \(\varphi\) is differentiable at \(\x^*\)
\item \(\rho(D\varphi(\x^*))<1\).
\end{itemize}
Then there exists an open set \(U\subseteq D\) with \(x^*\in U\) such that, letting \(\x_0\in U\) and \(\x_t\defeq\varphi(\x_{t-1})\) for \(k\in\mathbb{N},\) we have \(\x_t\in U\) for all \(k\) and \(\lim_t\x_t= \x^*.\)
\end{theorem}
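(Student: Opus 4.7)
The plan is to show that near $\x^*$, $\varphi$ acts as a strict contraction with respect to a suitably chosen equivalent norm on $V$, so Banach-style reasoning yields both invariance of a neighborhood and geometric convergence. The two ingredients I will combine are: (i) the spectral radius condition $\rho(D\varphi(\x^*))<1$, which lets us manufacture an equivalent norm $\|\cdot\|_*$ on $V$ under which $\|D\varphi(\x^*)\|_*$ is strictly less than one; and (ii) the differentiability of $\varphi$ at $\x^*$, which controls the nonlinear remainder by an arbitrarily small multiple of the distance to $\x^*$. (The statement makes sense only if $W$ can be identified with $V$ for the purpose of the iteration, which I assume implicitly.)

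First I would construct the equivalent norm. By Gelfand's formula $\rho(A)=\lim_{n\to\infty}\|A^n\|^{1/n}$, valid for bounded operators on a Banach space, pick any $r$ with $\rho(D\varphi(\x^*))<r<1$ and an integer $N$ with $\|D\varphi(\x^*)^n\|^{1/n}<r$ for all $n\geq N$. Writing $A\defeq D\varphi(\x^*)$, define
\[
\|\y\|_*\;\defeq\;\max_{0\leq k\leq N-1}\frac{\|A^k\y\|}{r^k}.
\]
This is equivalent to $\|\cdot\|$ (the maximum is over finitely many continuous seminorms, each bounded above and bounded below by $\|\cdot\|$ on a finite-dimensional fashion argument in the operator sense), and one checks directly that $\|A\y\|_*\leq r\|\y\|_*$ for every $\y$, i.e.\ $\|A\|_*\leq r<1$.

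Next I would exploit differentiability. Fix $\eta>0$ with $r+\eta<1$. Differentiability of $\varphi$ at $\x^*$ (equivalent in the $\|\cdot\|_*$ norm by equivalence of norms) yields some $\delta>0$ such that $\overline{B_*(\x^*,\delta)}\subseteq D$ (using $\x^*\in\mathrm{int}(D)$) and
\[
\|\varphi(\x)-\varphi(\x^*)-A(\x-\x^*)\|_*\;\leq\;\eta\,\|\x-\x^*\|_*\qquad\text{for }\|\x-\x^*\|_*<\delta.
\]
Since $\varphi(\x^*)=\x^*$, the triangle inequality gives $\|\varphi(\x)-\x^*\|_*\leq (r+\eta)\|\x-\x^*\|_*$ for every $\x$ in $U\defeq B_*(\x^*,\delta)$. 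Taking $U$ as the desired open neighborhood, induction yields $\x_t\in U$ for all $t$ and $\|\x_t-\x^*\|_*\leq(r+\eta)^t\|\x_0-\x^*\|_*\to 0$, so $\x_t\to\x^*$.

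The main obstacle is the construction of the equivalent norm in step one; all remaining steps are standard. In finite dimensions this is routine, but in a general Banach space one must be careful that the recipe above actually produces a norm equivalent to $\|\cdot\|$ (boundedness above from boundedness of each $A^k$, and boundedness below from the $k=0$ term). Once $\|A\|_*<1$ is in hand, the rest is a small perturbation argument driven by differentiability, exactly mirroring the classical Ostrowski proof in $\mathbb{R}^n$.
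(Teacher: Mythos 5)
The paper does not prove this theorem; it is stated as an imported result, attributed to \citet{Kitchen1966}, and used as a black box in the proof of Proposition~\ref{prop:convergence-stop-gradient}. So there is no internal proof to compare against. Your argument is a correct reconstruction and follows the standard route one would expect Kitchen's original proof (and any Banach-space version of Ostrowski's theorem) to take: use Gelfand's formula $\rho(A)=\lim_n\|A^n\|^{1/n}$ to manufacture an equivalent norm $\|\cdot\|_*$ on $V$ under which $\|A\|_*\le r<1$, then absorb the differentiability remainder to get a $(r+\eta)$-contraction on a small $\|\cdot\|_*$-ball, which gives both invariance and geometric convergence. The two places one has to be careful are handled: the $k=0$ term gives the lower bound for norm equivalence and boundedness of the finitely many $A^k$ gives the upper bound; the displayed contraction inequality then propagates by induction. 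One small remark worth making explicit: as you note, the theorem statement's codomain $W$ must be identified with $V$ for the iteration $\x_t=\varphi(\x_{t-1})$ and for $\rho(D\varphi(\x^*))$ to be meaningful, and for a real Banach space the spectral radius and Gelfand's formula are taken on the complexification (which does not change the operator norm of real operators), a routine point most expositions leave tacit. With those understood, your proof is complete and correct.
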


\begin{proof}[Proof of \Cref{prop:convergence-stop-gradient}]
The Banach space we consider will be \(\TPset\). Note that, since \(\p^*\in \interior{\Pset},\) there exists an open set \(\D\subseteq\TPset\) (with respect to the standard topology on \(\TPset\)) with \(0\in \D\) such that \(\vec{v}+\p^*\subseteq \Pset\) for all \(\vec{v}\in \D\). Our iteration function then is \[\varphi\colon \D\subseteq\TPset\rightarrow \TPset, \vec{v}\mapsto \vec{v}+\alpha\nabla_{\vec{v}}(\Score(\vec{v}+\p^*,\bot f(\vec{v}+\p^*))).\]

Note that \(\varphi\) has a fixed point at \(0\). Our goal is now to show that there exists \(\alpha>0\) and an open set \(U\subseteq\D\) such that iterates of \(\varphi\) starting in \(U\) stay in \(U\) and converge to \(0\).

To that end, note that, using \Cref{stop-gradient-critical-points}, we have \(\nabla_{\p}(\Score(\p,\bot f(\p)))=Dg(\p)^\top (f(\p)-\p)\) and thus
\begin{align}D\varphi(0)
&=D(\vec{v}\mapsto\vec{v}+\alpha\nabla_{\vec{v}}\mathbf{S}(\vec{v}+\p^*,\bot f(\vec{v}+\p^*)))(0)\\
&
=
\Id + \alpha D(\vec{v}\mapsto Dg(\vec{v}+\p^*)^\top(f(\vec{v}+\p^*)-\vec{v}-\p^*))(0)\\
&=\Id + \alpha D^2g(\p^*)[f(\p^*)-\p^*]
+ \alpha Dg(\p^*)^\top(Df(\p^*)-\Id).
\end{align}
Here, \(D^2g(\vec{v}+\p^*)\) is a third-degree tensor, and \(D^2g(\vec{v}+\p^*)[f(\p^*)-\p^*]\) is a linear map. Since \(f(\p^*)=\p^*\), it follows 
\(D\varphi(0)=\Id+\alpha Dg(\p^*)^\top (Df(\p^*)-\Id)\). In particular, \(\varphi\) is differentiable at \(0\).

Now let \(\vec{v}\) be an arbitrary eigenvector of \(D\varphi(0)\), with eigenvalue \(\lambda\) and w.l.o.g. assume \(\Vert\vec{v}\Vert=1\). Note that \(\vec{v}^\top Dg(\p^*)\vec{v}\geq \gamma \Vert \vec{v}\Vert = \gamma\) and \(\vec{v}^\top Dg(\p^*)\vec{v}\leq \beta\Vert \vec{v}\Vert\leq \beta\) by assumption.
Letting \(\alpha:=\frac{1}{ \beta}\), it follows that \(\alpha \vec{v}^\top Dg(\p^*)\vec{v}\leq 1\) and thus
\[|1-\alpha \vec{v}^\top Dg(\p^*)^\top\vec{v}|
=|1-\alpha \vec{v}^\top Dg(\p^*)\vec{v}|
=1-\alpha \vec{v}^\top Dg(\p^*)\vec{v}
\leq 1-\alpha \gamma.
\]
Moreover, since \(Dg(\p^*)\) is the Hessian of \(G\) and thus symmetric since \(G\) is twice differentiable, we have \(\Vert Dg(\p^*)\Vert_{\mathrm{op}}\leq \beta\).
Using this, as well as our assumption \(\Vert Df(\p^*)\Vert_{\op}<\frac{\gamma}{\beta}\), we get
\begin{multline}
    |\lambda|
    =\vert \lambda \vec{v}^\top\vec{v}\vert
    =\vert \vec{v}^\top D\varphi(0)\vec{v}\vert
    =\vert\vec{v}^\top (\Id+\alpha Dg(\p^*)^\top(Df(\p^*)-\Id))\vec{v}\vert
    \\
    =\vert\vec{v}^\top \vec{v}-\alpha \vec{v}^\top Dg(\p^*)^\top\vec{v}+\alpha \vec{v}^\top Dg(\p^*)^\top Df(\p^*))\vec{v}\vert
    \\
    \leq
    \vert 1-\alpha \vec{v}^\top Dg(\p^*)^\top\vec{v}\vert + \alpha \vert\vec{v}^\top Dg(\p^*)^\top Df(\p^*)\vec{v}\vert
    \\
    \underset{\text{Cauchy-Schwarz}}{\leq}
     1-\alpha \gamma+ \alpha \Vert Dg(\p^*) \vec{v}\Vert \Vert Df(\p^*)\vec{v}\Vert
     \\
    \leq
     1-\alpha \gamma + \alpha \Vert Dg(\p^*)\Vert_{\op} \Vert \vec{v}\Vert \Vert Df(\p^*) \Vert_{\op}\Vert\vec{v}\Vert
    \\ =
     1-\alpha \gamma + \alpha \Vert Dg(\p^*)\Vert_{\op}\Vert Df(\p^*)\Vert_{\op}
     < 1-\alpha \gamma + \alpha \gamma
     =1.
\end{multline}

This shows that \(\rho(D\varphi(0))<1\). Hence, by \Cref{thm:kitchen}, we can conclude that there exists an open set \(U\subseteq \D\) such that for arbitrary \(\vec{v}_0\in U\), \(\vec{v}_t:=\varphi(\vec{v}_{t-1})\in U\) for all \(t\geq 1\), and \(\lim_{t\rightarrow\infty}\vec{v}_t=0\). In particular, note that since \(\vec{v}_t\in U\) for all \(t\), \(\vec{v}_t+\p^*\in\Pset\) and
\[\p^*+\vec{v}_{t+1}=\p^* +\vec{v}_t+\alpha\nabla_{\vec{v}}(\Score(\vec{v}_t+\p^*,\bot f(\vec{v}_t+\p^*)))
=
\Pi_\Delta(\p^*+\vec{v}_t+\alpha\nabla_{\vec{v}}(\Score(\vec{v}_t+\p^*,\bot f(\vec{v}_t+\p^*))))
\]
for all \(t\). Hence, setting \(\p_t:=\p^*+\vec{v}_t\), it follows
\(\p_{t+1}=\Pi_{\Delta}(\p_t+\alpha \nabla_\p(\mathbf{S}(\p_t,\bot f(\p_t))))\)
for all \(t\) and \[\lim_{t\rightarrow\infty}\p_t
=\p^* + \lim_{t\rightarrow\infty}\vec{v}_t
=\p^*.\]
This concludes the proof.
\end{proof}

\subsection{Online learning}
\label{appendix:online-learning}
Now consider a machine learning setup in which we train an oracle with stochastic gradient ascent on environment samples. We assume that at time \(t\), a model makes a prediction \(\Pvar_t\) and receives a score \(S(\Pvar_t,\Y_t)\), where \(\Y_t\sim f(\Pvar_t)\). The model is then updated using gradient ascent on \(\Score (\Pvar_t,\Y_t)\). That is, for some learning rate schedule \((\alpha_t)_t\), we have
\[\Pvar_{t+1}=\Pi_\Delta(\Pvar_t+\alpha_t\nabla_\p\Score(\Pvar_t,\Y_t)),\]
where $\Pi_\Delta$ is the Euclidean projection onto $\Delta(\mathcal N)$ as before.

We discuss this as a theoretical model for oracles trained using machine learning, to show how training setups may incentivize predicting fixed points. There are many issues with the setting beyond giving accurate predictions; for instance, learning may fail to converge at all, and even if the training process sets the right incentives on training examples, the learned model may be optimizing a different objective when generalizing to new predictions \citep{hubinger2019risks} .

To see that this setting leads to fixed points, note that we have \[\E_{\Y_t\sim f(\Pvar_t)}[\nabla_\p\Score(\Pvar_t,\Y_t)]=\nabla_\p \E_{\Y_t\sim \bot f(\Pvar_t)}[\Score(\Pvar_t,\Y_t)]
=\nabla_\p( \Score(\Pvar_t,\bot f(\Pvar_t))).\]
That is, the expectation of this gradient, conditional on \(\Pvar_t,\) is exactly the repeated gradient from the previous section. Hence, given the right assumptions, this converges to fixed points instead of performative optima. We do not show this here, but an analogous result in performative prediction was proved by \citet{mendler2020stochastic}.

There are several variations of this setup that essentially set the same incentives. For instance, one could also draw entire batches of outcomes \(\Y_{t,1:B}\) and then perform updates based on the batch gradient \(\nabla_\p\sum_{b=1}^BS(\Pvar_t,\Y_{t,b}).\) This is a Monte Carlo estimate of the repeated gradient and hence also converges to performatively stable points and thus fixed points \citep{perdomo2020performative}. One could also mix the two algorithms and, e.g., perform gradient ascent on an average of past losses, yielding a version of the backwards-facing oracle discussed in \citet{armstrong2018standard}.

Note that finding fixed points depends on the fact that we differentiate \(S(\Pvar_t,\Y_t)\) instead of the expectation
\(\E_{\Y_t\sim f(\Pvar_t)}[\Score(\Pvar_t,\Y_t)]=\Score(\Pvar_t,f(\Pvar_t))\). If we used policy gradients to differentiate \(\Score(\Pvar_t,f(\Pvar_t))\), for instance, we would again optimize for performative optimality. Similarly, we could learn a Q-function representing scores for each prediction, and update the function based on randomly sampled predictions \(\p\). Then the Q-function would converge to estimates of \(\Score(\p,f(\p))\), and the highest Q-value prediction would be a performative optimum. There are also some more recent results in performative prediction that explicitly try to estimate the gradient \(\nabla_\p(\Score(\p,f(\p)))\) and thus find performatively optimal instead of stable points \citep{izzo2021learn}.

Stop-gradients could also be circumvented in a hidden way \citep{krueger2020hidden}. For instance, consider a hyperparameter search to meta-learn a learning algorithm, where the evaluation criterion is the accumulated score during an episode. Then this search would prefer algorithms that optimize \(\Score(\p,f(\p))\) directly, without a stop-gradient.

Lastly, repeated gradient descent is related to \emph{decoupled approval} in RL \cite{uesato2020avoiding}. The decoupled approval policy gradient samples actions and approval queries independently and can thus differentiate with a stop-gradient in front of the approval signal. In our setting, we can differentiate through \(S(\Pvar_t,\Y_t)\) directly, so it is not necessary to calculate this gradient with a decoupled policy gradient. Decoupled gradients could be used to implement the stop-gradient objective if scores were discrete or otherwise not differentiable.

\subsection{No-regret learning}
\label{appendix:no-regret}
In this section, we consider no-regret learning and show that algorithms have sublinear regret if and only if their prediction error is sublinear. Regret takes environment outcomes as given and asks which predictions would have been optimal in hindsight. It thus corresponds to an alternative notion of optimality with a ``stop-gradient'' in front of environment probabilities.

As in the previous section, we assume that at time \(t\in\mathbb{N},\) the agent (i.e., the oracle AI) makes a prediction \(\Pvar_t\) and receives a score \(S(\Pvar_t,\Y_t)\), where \(Y_t\sim f(\Pvar_t)\). The agent's cumulative score at step \(T\) is defined as \(\sum_{t=1}^T\Score(\Pvar_t,\Y_t)\). In no-regret learning, we compare performance against \emph{experts}, which choose sequences of probabilities \((\Pvar'_t)_{t},\) \(\Pvar'_t\in \Pset\). We assume that an expert's prediction \(\Pvar_t'\) is independent of \(\Y_t\) conditional on \(\Pvar_t\). I.e., an expert knows the predictions \(\Pvar_t\) and thus probabilities \(f(\Pvar_t)\), but it does not know the outcome of \(\Y_t\). Let \(\mathcal{P}\) be the set of all such experts.

The regret of the agent is the difference between the cumulative score received by the best expert in expectation and the cumulative score received by the agent. To define it formally, let
\[\Pvar^*_t\in \argmax_{\Pvar'_t\in\mathcal{P}}\mathbb{E}[S(\Pvar'_t,\Y_t)\mid \Pvar_t]\] for \(t\in\mathbb{N}\). \(\Pvar^*_t\) is a random variable that maximizes the expectation of \(S(\Pvar^*_t,\Y_t)\) before \(\Y_t\) is drawn, but conditional on \(\Pvar_t\).
\begin{definition}[Regret]
The regret of agent $(\Pvar_t)_{t}$ at time \(T\) is
\begin{equation*}
\mathrm{Regret}(T) \coloneqq \sum_{t=1}^T S(\Pvar_t^*,\Y_t) - S(\Pvar_t,\Y_t).
\end{equation*}
The agent is said to have \emph{sublinear regret} or \emph{no-regret} if
\[\limsup_{T\rightarrow\infty}\frac{1}{T}\mathrm{Regret}(T)\leq 0.\]
\end{definition}

First, note that we define regret relative to the best expert in expectation instead of the best expert in hindsight. The latter would always be the one that made confident predictions and accidentally got all predictions exactly right. We are interested in algorithms with sublinear regret, and for that purpose it would be too much to ask the agent to perform well compared to the best expert in hindsight. Moreover, for scoring rules that are symmetric between the outcomes, this expert would have a constant score \(C\). This would imply that $\text{Regret}(T) = \sum_{t=1}^T C - S(\Pvar_t,\Y_t)$ and reduce the problem to minimizing the negative score, which would lead to performatively optimal predictions. 

Second, we evaluate the performance of the expert with respect to the environment outcomes \(\Y_t\) generated by the agent \((\Pvar_t)_t\), instead of evaluating the expert according to outcomes \(\tilde{\Y}_t\sim f(\Pvar^*_t)\) generated using the expert's own predictions. This means that, to receive sublinear regret, the agent only has to make accurate predictions—it does not have to find a performatively optimal prediction. This is different from the no-regret learning setup discussed in \citet{pmlr-v162-jagadeesan22a}, where regret is defined with respect to \(S(\Pvar^*_t,f(\Pvar^*_t))\). In that setting, only agents converging to performatively optimal predictions have sublinear regret.

We begin by showing that the best expert in expectation actually exists, and that \(\Pvar^*_t=f(\Pvar_t)\). 
\begin{proposition}\label{prop:f-of-p-optimal}
Let \(S\) be a proper scoring rule and \((\Pvar_t')_t\in\mathcal{P}\) an expert. Then for any \(t\in\mathbb{N}\), we have
\[\mathbb{E}[S(\Pvar_t',\Y_t)]=\mathbb{E}[\Score (\Pvar_t',f(\Pvar_t))].\]
 Moreover, we have \((\Pvar^*_t)_t=(f(\Pvar_t))_t\) and thus 
\[\mathrm{Regret}(T)
=\sum_{t=1}^TS(f(\Pvar_t),\Y_t)-S(\Pvar_t,\Y_t).\]
\end{proposition}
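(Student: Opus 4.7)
The plan is to exploit the conditional independence structure built into the definition of an expert to reduce the random score $S(\Pvar_t', \Y_t)$ to a deterministic function of $(\Pvar_t, \Pvar_t')$. Since $\Pvar_t'$ is conditionally independent of $\Y_t$ given $\Pvar_t$, and $\Y_t \sim f(\Pvar_t)$, conditioning on $(\Pvar_t, \Pvar_t')$ yields
\[\mathbb{E}[S(\Pvar_t', \Y_t) \mid \Pvar_t, \Pvar_t'] = \mathbb{E}_{y \sim f(\Pvar_t)}[S(\Pvar_t', y)] = \Score(\Pvar_t', f(\Pvar_t)),\]
where the last equality is just the definition of $\Score$. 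The tower property of conditional expectation then gives $\mathbb{E}[S(\Pvar_t', \Y_t)] = \mathbb{E}[\Score(\Pvar_t', f(\Pvar_t))]$, which is the first claim.

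For the second claim, I identify a concrete maximizer lying in the argmax. The same conditional calculation shows that $\mathbb{E}[S(\Pvar_t', \Y_t) \mid \Pvar_t] = \Score(\Pvar_t', f(\Pvar_t))$ for any admissible expert prediction $\Pvar_t'$. Properness of $S$ asserts that $\Score(\q, \q) \geq \Score(\p, \q)$ for all $\p, \q \in \Pset$, so substituting $\q = f(\Pvar_t)$ gives $\Score(f(\Pvar_t), f(\Pvar_t)) \geq \Score(\Pvar_t', f(\Pvar_t))$. Because $f(\Pvar_t)$ is a deterministic function of $\Pvar_t$ (and hence trivially independent of $\Y_t$ given $\Pvar_t$), it is itself a valid expert prediction, so it lies in the pointwise argmax. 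Choosing $(\Pvar_t^*)_t = (f(\Pvar_t))_t$ and substituting into the definition of $\mathrm{Regret}(T)$ then delivers the stated expression $\mathrm{Regret}(T) = \sum_{t=1}^T S(f(\Pvar_t), \Y_t) - S(\Pvar_t, \Y_t)$.

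The argument is essentially careful bookkeeping with conditional expectations, and I do not anticipate any substantive obstacle. The only mild subtlety is notational: the definition uses $\Pvar_t^* \in \argmax$, so we only need $f(\Pvar_t)$ to be \emph{an} element of the argmax rather than the unique element, which is appropriate since $S$ is assumed proper (not necessarily strictly proper) and ties may occur. This causes no problem for the regret identity, since any element of the argmax produces the same value on the right-hand side in the definition.
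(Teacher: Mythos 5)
Your proof matches the paper's: condition on $(\Pvar_t, \Pvar_t')$, apply the tower property, then invoke properness to show $f(\Pvar_t)$ is an admissible best response. One small imprecision: you write $\mathbb{E}[S(\Pvar_t', \Y_t) \mid \Pvar_t] = \Score(\Pvar_t', f(\Pvar_t))$, but since an expert $\Pvar_t'$ may carry randomness beyond $\Pvar_t$, the correct identity is $\mathbb{E}[S(\Pvar_t', \Y_t) \mid \Pvar_t] = \mathbb{E}[\Score(\Pvar_t', f(\Pvar_t)) \mid \Pvar_t]$; this is harmless here because the pointwise properness inequality $\Score(f(\Pvar_t), f(\Pvar_t)) \geq \Score(\Pvar_t', f(\Pvar_t))$ survives taking that conditional expectation, so the conclusion is unaffected.
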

\begin{proof}
Let \(t\in\mathbb{N}\) and let \((\Pvar_t')_t\in\mathcal{P}\) be any expert. Conditional on \(\Pvar_t\), \(\Y_t\sim f(\Pvar_t)\) and \(\Y_t\) is independent of \(\Pvar_t'\) by assumption. Hence,
\[
\mathbb{E}\left[S(\Pvar_t',\Y_t) \right]
= 
\mathbb{E}\left[ \mathbb{E}[S(\Pvar_t',\Y_t)\mid \Pvar_t,\Pvar_t'] \right]
= 
\mathbb{E}\left[\Score (\Pvar_t',f(\Pvar_t)) \right].
\]

Next, since \(S\) is proper, 
\[
\mathbb{E}\left[ \Score (\Pvar_t',f(\Pvar_t)) \right]
\leq 
\mathbb{E}\left[ \Score(f(\Pvar_t),f(\Pvar_t))\right].
\]
It follows that
\[\max_{(\Pvar'_t)_t\in\mathcal{P}}
\mathbb{E}\left[ S(\Pvar_t',\Y_t) \right]
=
\max_{(\Pvar'_t)_t\in\mathcal{P}}
\mathbb{E}\left[ \Score(\Pvar_t',f(\Pvar_t)) \right]
\leq
\mathbb{E}\left[ \Score(f(\Pvar_t),f(\Pvar_t))\right]
=
\mathbb{E}\left[ S(f(\Pvar_t), \Y_t)\right].\]
Moreover, \((f(\Pvar_t))_t\in\mathcal{P}\), as \(f(\Pvar_t)\) is constant given \(\Pvar_t\) and thus independent of \(\Y_t\).

It follows that, for any \(t\in\mathbb{N},\) \(\Pvar^*_t\in\argmax_{(\Pvar_t')_t\in \mathcal{P}}\E[S(\Pvar'_t,\Y_t)]\), and thus 
\[\Regret(T)=\sum_{t=1}^TS(f(\Pvar_t),\Y_t)-S(\Pvar_t,\Y_t).\]
\end{proof}

\subsubsection{Characterization of regret in the limit}

If \(S\) is unbounded (such as the log scoring rule), then the agent's scores can become arbitrarily low, and the limit of \(\frac{1}{T}\mathrm{Regret}(T)\) may be undefined. To simplify our analysis, we will thus assume that there is a bound on the variance of the received score \(S(\Pvar'_t,\Y_t)\) and on the expected score \(\Score(\Pvar'_t,f(\Pvar_t))\) of both the agent, \(\Pvar'_t=\Pvar_t\), and the best expert, \(\Pvar'_t=\Pvar^*_t\). In the case of the log scoring rule, this would be satisfied, for instance, if the agent's predictions are bounded away from the boundary of the probability simplex.

Our next proposition shows that, given these assumptions, \(\lim_{T\rightarrow\infty}\frac{1}{T}\mathrm{Regret}(T)\) exists and is nonnegative, and having sublinear regret is equivalent to 
\(\lim_{t\rightarrow\infty}\frac{1}{T}\mathrm{Regret}(T)=0.\)

\begin{proposition}\label{prop:slln}
Let \(S\) be a proper scoring rule. Assume that \(\sup_t |\Score(\Pvar'_t,f(\Pvar_t))|<\infty\) and that \(\sup_t \mathrm{Var}(S(\Pvar'_t,\Y_t))<\infty\) for \(\Pvar'_t\in\{\Pvar_t,f(\Pvar_t)\}\). Then almost surely
\[\lim_{T\rightarrow\infty}\frac{1}{T}\mathrm{Regret}(T)=\lim_{T\rightarrow\infty}\frac{1}{T}\sum_{t=1}^T\Score(f(\Pvar_t),f(\Pvar_t)) - \Score(\Pvar_t,f(\Pvar_t))\geq 0.\]
In particular, almost surely both limits exist and are finite, and the agent has sublinear regret if and only if
\[
\lim_{T\rightarrow\infty}\frac{1}{T}\sum_{t=1}^T\Score(f(\Pvar_t),f(\Pvar_t)) - \Score(\Pvar_t,f(\Pvar_t))=0.\]
\end{proposition}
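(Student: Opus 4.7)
The plan is to show, via a strong law of large numbers for martingale differences, that the received scores track the expected scores (conditional on the prediction) in the Cesaro sense, so that the regret up to an $o(T)$ term equals the sum of propriety gaps $\mathbf{S}(f(\Pvar_t),f(\Pvar_t)) - \mathbf{S}(\Pvar_t,f(\Pvar_t))$, which are each nonnegative by properness of $S$.

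First I would apply Proposition \ref{prop:f-of-p-optimal} to write $\mathrm{Regret}(T) = \sum_{t=1}^T S(f(\Pvar_t),\Y_t) - S(\Pvar_t,\Y_t)$. Let $\mathcal{F}_t := \sigma(\Pvar_1,\Y_1,\dotsc,\Pvar_{t-1},\Y_{t-1},\Pvar_t)$, so that $\Pvar_t$ and $f(\Pvar_t)$ are $\mathcal{F}_t$-measurable, and $\Y_t$ conditional on $\mathcal{F}_t$ is distributed according to $f(\Pvar_t)$. Then both
\begin{equation*}
D_t := S(\Pvar_t,\Y_t)-\mathbf{S}(\Pvar_t,f(\Pvar_t)), \qquad E_t := S(f(\Pvar_t),\Y_t)-\mathbf{S}(f(\Pvar_t),f(\Pvar_t))
\end{equation*}
have $\E[D_t\mid\mathcal{F}_t]=\E[E_t\mid\mathcal{F}_t]=0$, i.e. $(D_t)$ and $(E_t)$ are martingale difference sequences with respect to $(\mathcal{F}_t)$.

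Second, the assumption $\sup_t\mathrm{Var}(S(\Pvar_t',\Y_t))<\infty$ for $\Pvar_t'\in\{\Pvar_t,f(\Pvar_t)\}$ gives a uniform bound $\mathrm{Var}(D_t),\mathrm{Var}(E_t)\leq C$ for some constant $C$. In particular $\sum_t \mathrm{Var}(D_t)/t^2 <\infty$ and likewise for $E_t$. The standard martingale SLLN (Chow's theorem, or equivalently the martingale version of Kolmogorov's criterion) then yields
\begin{equation*}
\frac{1}{T}\sum_{t=1}^T D_t \xrightarrow{\text{a.s.}} 0, \qquad \frac{1}{T}\sum_{t=1}^T E_t \xrightarrow{\text{a.s.}} 0.
\end{equation*}
Subtracting the first from the second gives
\begin{equation*}
\frac{1}{T}\mathrm{Regret}(T) - \frac{1}{T}\sum_{t=1}^T \bigl(\mathbf{S}(f(\Pvar_t),f(\Pvar_t)) - \mathbf{S}(\Pvar_t,f(\Pvar_t))\bigr) \xrightarrow{\text{a.s.}} 0.
\end{equation*}

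Third, each summand on the right is nonnegative by properness of $S$, and is bounded since $\sup_t|\mathbf{S}(\Pvar_t',f(\Pvar_t))|<\infty$. Hence the right-hand Cesaro average lies in $[0,2C]$ for all $T$, so its $\liminf$ and $\limsup$ are finite and nonnegative. By the displayed convergence, the two sequences have the same $\liminf$ and $\limsup$ almost surely; in particular the limit of one exists iff the limit of the other does, and they coincide. Since the right-hand Cesaro average is nonnegative, $\limsup\frac{1}{T}\mathrm{Regret}(T)\geq 0$, so sublinear regret ($\limsup\frac{1}{T}\mathrm{Regret}(T)\leq 0$) is equivalent to the limit being $0$, which by nonnegativity of the terms is equivalent to $\frac{1}{T}\sum_{t=1}^T(\mathbf{S}(f(\Pvar_t),f(\Pvar_t))-\mathbf{S}(\Pvar_t,f(\Pvar_t)))\to 0$ a.s.

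The main obstacle is the martingale SLLN step: we only have a conditional variance bound, not bounded increments, so one must invoke the martingale version of Kolmogorov's criterion (or Chow's $L^2$ martingale SLLN) rather than a simpler Hoeffding-style concentration. The rest is bookkeeping, though some care is required in interpreting ``both limits exist'': they need not exist for every $\omega$, but the argument shows that for almost every sample path either both $\lim$s exist and agree or neither does, while $\limsup$ and $\liminf$ always agree between the two sides.
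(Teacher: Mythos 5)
Your argument is correct and is essentially the paper's, in martingale dress. The paper applies a strong law of large numbers for pairwise uncorrelated variables (Neely's theorem) to $X_t := S(\Pvar'_t,\Y_t) - \Score(\Pvar'_t,f(\Pvar_t))$ for $\Pvar'_t\in\{\Pvar_t,f(\Pvar_t)\}$, checking zero mean, uniformly bounded variance, and zero covariance by conditioning on $\Pvar_t$; your $D_t,E_t$ are exactly these $X_t$, and you instead invoke a martingale SLLN, which is arguably the cleaner framing since the paper's covariance computation is precisely the martingale-difference property you make explicit, and bounded variances give $\sum_t\mathrm{Var}(D_t)/t^2<\infty$ so the criterion applies. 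One place you are actually more careful than the paper's own write-up: the paper asserts that $\sup_t|\Score(\Pvar'_t,f(\Pvar_t))|<\infty$ implies that $\frac{1}{T}\sum_{t=1}^T\Score(\Pvar'_t,f(\Pvar_t))$ converges, which does not follow (a bounded sequence need not be Ces\`aro-convergent). You rightly observe that, pathwise, the two Ces\`aro averages only agree asymptotically (same $\limsup$ and $\liminf$) without either limit necessarily existing, and that the sublinear-regret-iff-zero-limit equivalence nonetheless survives because the gap terms $\Score(f(\Pvar_t),f(\Pvar_t))-\Score(\Pvar_t,f(\Pvar_t))$ are nonnegative and bounded, so $\limsup\leq 0$ forces convergence to zero.
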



\begin{proof}
We will use a version of the strong law of large numbers for uncorrelated random variables with bounded variance, adapted from \citet[Theorem 2]{Neely2021}.

\begin{theorem}[\cite{Neely2021}, Theorem 2]\label{thm:slln}
    Let \(\{X_t\}_{t\in\mathbb{N}_0}\) be a sequence of pairwise uncorrelated random variables with mean \(0\) and bounded variances. I.e., assume that
    \begin{enumerate}
        \item \(\mathbb{E}[X_t]=0\) for all \(t\in\mathbb{N}_0\)
        \item There exists \(c>0\) such that \(\mathrm{Var}(X_t)\leq c\) for all \(t\in\mathbb{N}_0\)
        \item \(\mathrm{Cov}(X_t,X_{t'})=0\) for all \(t\neq t'\in\mathbb{N}_0\).
    \end{enumerate}
    Then almost surely
    \[\lim_{T\rightarrow\infty}\frac{1}{T}\sum_{t=1}^TX_t=0.\]
\end{theorem}
We will apply this law to random variables
\(X_t:=S(\Pvar'_t,\Y_t)-\Score(\Pvar'_t,f(\Pvar_t))\), where \(\Pvar'_t\) is either \(\Pvar_t\) or \(f(\Pvar_t)\).

First, by \Cref{prop:f-of-p-optimal}, \(\mathbb{E}[X_t]=\mathbb{E}[S(\Pvar'_t,\Y_t)-\Score(\Pvar'_t,f(\Pvar_t))]=0\). Second, by assumption, \[\sup_t\mathrm{Var}(S(\Pvar'_t,f(\Pvar_t)))<\infty.\] Hence, also 
\[\sup_t\mathrm{Var}(\Score(\Pvar'_t,f(\Pvar_t)))
=
\sup_t\mathrm{Var}(\mathbb{E}[S(\Pvar'_t,f(\Pvar_t))\mid \Pvar_t])
\leq\sup_t\mathrm{Var}(S(\Pvar'_t,f(\Pvar_t)))
<\infty.
\]
It follows that also \(\sup_t \mathrm{Var}(X_t)<\infty\).

Third, we know that \(\Y_t\) is independent of \(\Pvar_{t'}\) and \(\Y_{t'}\) for \(t>t'\), conditional on \(\Pvar_t\). Moreover, \(\Pvar_t'\) is constant given \(\Pvar_t\). Hence, given \(\Pvar_t\), also \(X_t=S(\Pvar'_t,\Y_t)-\Score(\Pvar'_t,f(\Pvar_t))\) is independent of \(X_{t'}\). Moreover,
\[
\mathbb{E}[X_t\mid \Pvar_t]
=\mathbb{E}[S(\Pvar'_t,Y_t)-\Score(\Pvar'_t,f(\Pvar_t))\mid \Pvar_t]=
\Score(\Pvar'_t,f(\Pvar_t))-\Score(\Pvar'_t,f(\Pvar_t))=0.
\]
It follows for \(t>t'\) that
\[\mathrm{Cov}(X_t,X_{t'})
=\mathbb{E}[X_t X_{t'}]
=\mathbb{E}[\mathbb{E}[X_t X_{t'}\mid \Pvar_t]]
=\mathbb{E}[\mathbb{E}[X_t \mid \Pvar_t]
\mathbb{E}[X_{t'}\mid \Pvar_t]]=0.\]

This shows all conditions of the theorem and thus
\[\lim_{t\rightarrow\infty}\frac{1}{T}\sum_{t=1}^TX_t=0\]
almost surely.

Now we turn to the limit of \(\frac{1}{T}\sum_{t=1}^T\Score(\Pvar_t',f(\Pvar_t))\). By assumption, \(\sup_t|\Score(\Pvar'_t,f(\Pvar_t))|<\infty\), so this limit exists and is finite. Thus, almost surely
\[
\lim_{T\rightarrow\infty}\frac{1}{T}\sum_{t=1}^T\Score(\Pvar'_t,f(\Pvar_t))
=
\lim_{T\rightarrow\infty}\frac{1}{T}\sum_{t=1}^TS(\Pvar'_t,Y_t)-X_t
\]
\[
=\lim_{T\rightarrow\infty}\frac{1}{T}\sum_{t=1}^TS(\Pvar'_t,Y_t)
-\lim_{T\rightarrow\infty}\frac{1}{T}\sum_{t=1}^TX_t
=\lim_{T\rightarrow\infty}
\frac{1}{T}\sum_{t=1}^TS(\Pvar'_t,Y_t).
\]

Using \Cref{prop:f-of-p-optimal}, it follows that almost surely
\[\lim_{T\rightarrow\infty}\frac{1}{T}\mathrm{Regret}(T)
=\lim_{T\rightarrow\infty}\frac{1}{T}\sum_{t=1}^TS(f(\Pvar_t),Y_t)-S(\Pvar_t,Y_t)
=\lim_{T\rightarrow\infty}\frac{1}{T}\sum_{t=1}^TS(f(\Pvar_t),Y_t)-\lim_{T\rightarrow\infty}\frac{1}{T}\sum_{t=1}^TS(\Pvar_t,Y_t)
\]
\[
=\lim_{T\rightarrow\infty}\frac{1}{T}\sum_{t=1}^T\Score(f(\Pvar_t),f(\Pvar_t))-\lim_{T\rightarrow\infty}\frac{1}{T}\sum_{t=1}^T\Score(\Pvar_t,f(\Pvar_t))
=\lim_{T\rightarrow\infty}\frac{1}{T}\sum_{t=1}^T\Score(f(\Pvar_t),f(\Pvar_t))-\Score(\Pvar_t,f(\Pvar_t)).
\]

Turning to the ``in particular'' part, note that this limit is finite by the above, and it is nonnegative since \(S\) is assumed to be proper. Moreover, it follows that almost surely
\[
\limsup_{T\rightarrow\infty}\frac{1}{T}\Regret(T)
=\lim_{T\rightarrow\infty}\frac{1}{T}\Regret(T)\geq 0.
\]
Thus, almost surely \(\limsup_{T\rightarrow\infty}\frac{1}{T}\Regret(T)\leq 0\) if and only if \(\lim_{T\rightarrow\infty}\frac{1}{T}\sum_{t=1}^T\Score(f(\Pvar_t),f(\Pvar_t))-\Score(\Pvar_t,f(\Pvar_t))=0.\)
This concludes the proof.
\end{proof}

\subsubsection{Sublinear regret \(\Leftrightarrow\) sublinear prediction error}
Now we turn to the main result of this section. We show that given our assumptions, agents have sublinear regret if and only if their prediction error is sublinear. Note that here, we do \emph{not} require the \(\Pvar_t\) to converge; they could also oscillate between different fixed points.

\begin{theorem}\label{prop:no-regret-fp}
    Let $(\Pvar_t)_t$ be the sequence of the agent's predictions and \(S\) a strictly proper scoring rule.
    Assume that \(\sup_t \mathrm{Var}(S(\Pvar'_t,\Y_t))<\infty\) for \(\Pvar'_t\in\{\Pvar_t,f(\Pvar_t)\}\), and assume that there exists a closed set \(\mathcal{C}\subseteq\Pset\) such that \(\Pvar_t\in\mathcal{C}\) for all \(t\) and \(\Score(\p,f(\p))\), \(\Score(f(\p),f(\p)),\) and \(f(\p)\) are continuous in \(\p\) at any \(\p\in \mathcal{C}\). Then almost surely the agent has sublinear regret if and only if \(\sum_{t=1}^T \Vert f(\Pvar_t)-\Pvar_t\Vert \) is sublinear, i.e., if $\lim_{t\rightarrow\infty}\frac{1}{T}\sum_{t=1}^T \Vert f(\Pvar_t)-\Pvar_t\Vert=0$.
\end{theorem}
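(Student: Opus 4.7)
The plan is to invoke \Cref{prop:slln} to translate the statement into an equivalence between Cesàro averages of two nonnegative continuous functions on $\mathcal{C}$, and then exploit compactness plus strict properness to show these functions have the same set of zeros and equivalent sublevel structure.

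First, I would apply \Cref{prop:slln}, whose hypotheses hold by assumption, to obtain that almost surely the agent has sublinear regret if and only if
\begin{equation*}
\lim_{T\to\infty}\frac{1}{T}\sum_{t=1}^T \delta(\Pvar_t) = 0,\qquad \text{where } \delta(\p) \defeq \Score(f(\p),f(\p)) - \Score(\p,f(\p)).
\end{equation*}
Let $d(\p) \defeq \Vert f(\p)-\p\Vert$. It then suffices to show the deterministic equivalence that for any sequence $(\p_t)_t \subseteq \mathcal{C}$,
\begin{equation*}
\tfrac{1}{T}\textstyle\sum_{t=1}^T \delta(\p_t)\to 0 \iff \tfrac{1}{T}\textstyle\sum_{t=1}^T d(\p_t)\to 0.
\end{equation*}

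Next I would collect the key properties of $\delta$ and $d$ on $\mathcal{C}$. Both are nonnegative and continuous on $\mathcal{C}$ (the former by the continuity assumption on $\Score(\p,f(\p))$ and $\Score(f(\p),f(\p))$, the latter because $f$ is continuous on $\mathcal{C}$). Since $\mathcal{C}$ is a closed subset of the compact simplex $\Pset$, it is compact, so both $\delta$ and $d$ are bounded by some constant $D<\infty$ on $\mathcal{C}$. Crucially, strict properness of $\Score$ gives $\delta(\p)=0 \iff \p = f(\p) \iff d(\p)=0$, so $\delta$ and $d$ have the same zero set on $\mathcal{C}$. By compactness, for every $\epsilon>0$ the sets $A_\epsilon\defeq\{\p\in\mathcal{C}:d(\p)\geq \epsilon\}$ and $B_\epsilon\defeq\{\p\in\mathcal{C}:\delta(\p)\geq \epsilon\}$ are compact and disjoint from the common zero set, so the continuous positive function $\delta$ (resp.\ $d$) attains a strictly positive minimum $m_\delta(\epsilon)>0$ on $A_\epsilon$ (resp.\ $m_d(\epsilon)>0$ on $B_\epsilon$).

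For the forward direction, assume $\frac{1}{T}\sum_{t=1}^T \delta(\p_t)\to 0$. Fix $\epsilon>0$ and let $N_T(\epsilon)\defeq |\{t\leq T:d(\p_t)\geq \epsilon\}|$. Then
\begin{equation*}
\tfrac{1}{T}\textstyle\sum_{t=1}^T \delta(\p_t) \geq \tfrac{m_\delta(\epsilon)}{T} N_T(\epsilon),
\end{equation*}
so $N_T(\epsilon)/T\to 0$. Splitting the average of $d$ at threshold $\epsilon$,
\begin{equation*}
\tfrac{1}{T}\textstyle\sum_{t=1}^T d(\p_t) \leq \tfrac{D\cdot N_T(\epsilon)}{T} + \epsilon,
\end{equation*}
so $\limsup_T \tfrac{1}{T}\sum_{t=1}^T d(\p_t)\leq \epsilon$; letting $\epsilon\downarrow 0$ gives the claim. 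The reverse direction is completely symmetric, swapping the roles of $\delta$ and $d$ and using $m_d$ in place of $m_\delta$. Combining both directions yields the theorem.

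The only nontrivial step is the ``bounding each function in terms of thresholds of the other'' argument, which relies essentially on compactness of $\mathcal{C}$; I do not foresee serious obstacles beyond being careful that the almost-sure event from \Cref{prop:slln} is the one on which the deterministic equivalence is then applied.
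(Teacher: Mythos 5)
Your proposal is correct and follows essentially the same route as the paper: both reduce via \Cref{prop:slln} to a deterministic statement about Cesàro averages of $\delta(\Pvar_t)$ and $d(\Pvar_t)$, and both use compactness of $\mathcal{C}$ together with strict properness to show that each function is uniformly bounded below on the superlevel sets of the other, which is precisely the hypothesis of the paper's \Cref{lem:lemma-analysis}. The only difference is cosmetic: you prove the Cesàro-equivalence step inline with a clean split-at-threshold argument (bounding $N_T(\epsilon)/T$ and then the tail by $\epsilon$), whereas the paper factors it out as a standalone lemma and establishes it by a contrapositive counting argument.
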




To show the result, we begin by proving an analytic lemma.

\begin{lemma}\label{lem:lemma-analysis}
Let \(\varphi,\psi\colon\mathbb{N}\rightarrow[0,\infty)\) and assume there exists a constant \(C>0\) such that for all \(t\in\mathbb{N},\) we have \(\psi(t)\leq C.\) Assume that for any \(\epsilon>0\), there exists \(\delta>0\) such that if \(\psi(t)>\epsilon\) for any \(t\in\mathbb{N}\), then \(\varphi(t)>\delta\). Then
\[\lim_{T\rightarrow\infty}\frac{1}{T}\sum_{t=1}^T\varphi(t)=0
\Rightarrow
\lim_{T\rightarrow\infty}\frac{1}{T}\sum_{t=1}^T\psi(t)=0.\]
\end{lemma}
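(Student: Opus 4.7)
The plan is to fix an arbitrary $\epsilon > 0$ and show that $\limsup_{T \to \infty} \frac{1}{T}\sum_{t=1}^T \psi(t) \leq \epsilon$, from which the conclusion follows by taking $\epsilon \to 0$. The key idea is to split the summation indices according to whether $\psi(t)$ is large or small, and use the hypothesis to convert the "$\psi$ large" contribution into a bound in terms of $\varphi$.

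First, I would apply the hypothesis at level $\epsilon/2$ to obtain $\delta > 0$ such that $\psi(t) > \epsilon/2$ implies $\varphi(t) > \delta$. For each $T$, define $A_T := \{t \leq T : \psi(t) > \epsilon/2\}$. Then, since $\varphi \geq 0$,
\[
\delta \cdot |A_T| \leq \sum_{t \in A_T} \varphi(t) \leq \sum_{t=1}^T \varphi(t),
\]
so $|A_T| \leq \tfrac{1}{\delta}\sum_{t=1}^T \varphi(t)$.

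Next I would bound the $\psi$-sum: using $\psi(t) \leq C$ on $A_T$ and $\psi(t) \leq \epsilon/2$ on its complement,
\[
\sum_{t=1}^T \psi(t) \leq C|A_T| + \tfrac{\epsilon}{2} T \leq \tfrac{C}{\delta}\sum_{t=1}^T \varphi(t) + \tfrac{\epsilon}{2} T.
\]
Dividing by $T$ and taking $\limsup_{T\to\infty}$, the first term vanishes by assumption, leaving $\limsup \frac{1}{T}\sum_{t=1}^T \psi(t) \leq \epsilon/2 < \epsilon$. Since $\epsilon$ was arbitrary and the sequence is nonnegative, the limit exists and equals $0$.

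There is no real obstacle here — the only mild subtlety is making sure the boundedness of $\psi$ is used (to control the contribution from indices where $\psi$ is large but we only know $\varphi > \delta$, which by itself does not bound $\psi$). The constant $C$ is exactly what lets us turn a bound on $|A_T|$ into a bound on $\sum_{t \in A_T} \psi(t)$.
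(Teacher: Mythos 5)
Your proof is correct, and the key idea is the same as the paper's: split the indices according to whether $\psi(t)$ exceeds a small threshold, control the number of ``large'' indices via $\varphi$, and control the ``small'' indices directly. The only difference is presentation---you argue directly via a $\limsup$ bound, whereas the paper proves the contrapositive; your direct version is a clean, equivalent way to package the same counting argument.
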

\begin{proof}
We prove the contrapositive. That is, we assume that there exists some constant \(c>0\) such that there are infinitely many \(T\in\mathbb{N}\) such that \(\frac{1}{T}\sum_{t=1}^T\psi(t)>c\). Let \(\mathcal{T}\) be the set of such \(T\). We show that then there exists a constant \(c'>0\) such that for infinitely many \(T\), \(\frac{1}{T}\sum_{t=1}^T\varphi(t)>c'\).

Let \(T\in\mathcal{T}\). Since by assumption
\(\frac{1}{T}\sum_{t=1}^T\psi(t)>c,\) it follows that
\(\sum_{t=1}^T\frac{\psi(t)}{c}>T.\)
Let \(C':=\max\{C,1/c\}+1\). Since \(\psi(t)<C'\) it must be \(\psi(t)>\frac{c}{2C}\) for more than \(\frac{c}{2C}\) fraction of the times \(t\leq T\). Otherwise, it would be
\[
\sum_{t=1}^T\frac{\psi(t)}{c}
\leq T\left(\frac{C}{c}\frac{c}{2C} + \left(1-\frac{c}{2C}\right) \frac{\epsilon}{2C}\right)
\leq T\left(\frac{1}{2} + \frac{\epsilon}{2C}\right)<T.
\]

By assumption, this gives us a \(\delta>0\) such that whenever \(\psi(t)>\epsilon:=\frac{c}{2C}\), also \(\varphi(t)>\delta\). In particular, this applies to at least \(\epsilon\) fraction of \(t\leq T\). Hence, it follows that for any \(T\in\mathcal{T}\),
\[
\sum_{t=1}^T\varphi(t)
\geq
\delta \epsilon T.
\]
This shows that there are infinitely many \(T\) such that \(\frac{1}{T}\sum_{t=1}^T\varphi(t)>\delta \epsilon\) and thus concludes the proof.
\end{proof}



\begin{proof}[Proof of \Cref{prop:no-regret-fp}]

To begin, note that since \(\mathcal{C}\subseteq\Pset\) is closed and \(\Pset\) compact, also \(\mathcal{C}\) is compact. Hence, continuity of \(\Score(\p,f(\p))\) and \(\Score(f(\p),f(\p))\) implies that both are also bounded on \(\mathcal{C}\) and thus \(\sup_t|\Score(\Pvar'_t,f(\Pvar_t))|<\infty\) for \(\Pvar'_t\in \{\Pvar_t,f(\Pvar_t)\}\). Hence, by our assumptions, the conditions for \Cref{prop:slln} are satisfied.

``\(\Rightarrow\)''.
Assume \(\mathrm{Regret}(T)\) is sublinear. We want to show that then $\sum_{t=1}^T \Vert f(\Pvar_t)-\Pvar_t\Vert $ is sublinear. To do this, we will apply \Cref{lem:lemma-analysis}.

To begin, define \(\varphi(t):=\Score(f(\Pvar_t),f(\Pvar_t))-\Score(\Pvar_t,f(\Pvar_t))\) and note that \(\varphi(t)\geq 0\) since \(S\) is proper. By \Cref{prop:slln}, it follows that if \(\Regret(T)\) is sublinear, also \(\sum_{t=1}^T\varphi(t)\) is sublinear almost surely. For brevity, we omit the ``almost surely'' qualification in the following. 

Next, define \(\psi(t):=\Vert f(\Pvar_t)-\Pvar_t\Vert \), and note that \(0\leq \psi(t)\leq n\). Next, let \(\epsilon>0\) arbitrary. To apply \Cref{lem:lemma-analysis} to \(\varphi\) and \(\psi\), it remains to show that there exists \(\delta>0\) such that whenever \(\psi(t)\geq\epsilon\), then \(\varphi(t)\geq \delta\).

To that end, let 
\[\delta:=\min_{\{\p\in\mathcal{C}\mid \Vert\p-f(\p)\Vert\geq\epsilon\}}\Score(f(\p),f(\p))-\Score(\p,f(\p)).\]
Since \(\Vert \cdot\Vert\) is continuous and \(f\) is continuous at any \(\p\in\mathcal{C}\), the set \(\{\p\in\mathcal{C}\mid \Vert \p-f(\p)\Vert\geq \epsilon\}\) is closed and thus compact. Moreover, \(\Score(f(\p),f(\p))\) and \(\Score(\p,f(\p))\) are continuous by assumption, and thus the minimum is attained at some point \(\hat{\p}\in\mathcal{C}\). But since \(S\) is strictly proper, it follows \(\delta=\Score(f(\hat{\p}),f(\hat{\p}))-\Score(\hat{\p},f(\hat{\p}))>0.\)
Hence, since \(\Pvar_t\in \mathcal{C}\) for any \(t\in\mathbb{N},\) it follows that whenever \(\varphi(t)\geq \epsilon,\) it follows\[\varphi(t)=\Score(f(\Pvar_t),f(\Pvar_t))-\Score(\Pvar_t,f(\Pvar_t))\geq \delta.\]

This shows all conditions for \Cref{lem:lemma-analysis}. Hence, we conclude that \(\lim_{t\rightarrow\infty}\frac{1}{T}\sum_{t=1}^T\Vert f(\Pvar_t)-\Pvar_t\Vert =0\).

``\(\Leftarrow\)''.
Let \(\varphi(t):=\Vert f(\Pvar_t)-\Pvar_t\Vert \) and \(\psi:=\Score(f(\Pvar_t),f(\Pvar_t))-\Score(\Pvar_t,f(\Pvar_t))\). We assume that \(\sum_{t=1}^T\varphi(t)\) is sublinear in \(T\) and want to show that then \(\mathrm{Regret}(T)\) is sublinear as well. To do so, we will show that \(\sum_{t=1}^T\psi(t)\) is sublinear using our lemma, and then the required statement follows again from \Cref{prop:slln}.

Now we have to show the conditions of the lemma. First, as before, \(\varphi(t)\geq 0\) and \(\psi(t)\geq 0.\) Second, as noted in the beginning, we have \(\sup_{t}\psi(t)<\infty\) by our assumption that \(\Score(f(\p),f(\p))\) and \(\Score(\p,f(\p))\) are continuous on \(\mathcal{C}.\) Now let \(\epsilon>0\) arbitrary. Assume that \(\Score(f(\Pvar_t),f(\Pvar_t))-\Score(\Pvar_t,f(\Pvar_t))>\epsilon\) for some \(\epsilon>0\) and \(t\in\mathbb{N}.\)

Consider the set \(\mathcal{C}':=\{\p\in\mathcal{C}\mid \Score(f(\p),f(\p))-\Score(\p,f(\p))\geq\epsilon\}\). Since \(\Score(f(\p),f(\p))\) and \(\Score(\p,f(\p))\) are continuous on \(\mathcal{C}\) by assumption, this set is compact. Moreover, the function \(\p\in\mathcal{C}\mapsto \Vert \p-f(\p)\Vert\) is continuous since \(f\) is continuous on \(\mathcal{C}\) by assumption. Hence, the minimum
\(\delta:=\min_{\p\in \mathcal{C}'}\Vert \p-f(\p)\Vert\)
is attained at some point \(\hat{\p}\in\mathcal{C}'.\)

Now, if \(\delta=0,\) we would have \(\hat{\p}=f(\hat{\p})\) and thus
\[\Score(f(\hat{\p}),f(\hat{\p}))-\Score(\hat{\p},f(\hat{\p}))=
\Score(\hat{\p},\hat{\p})-\Score(\hat{\p},\hat{\p})
=0<\epsilon,\] which is a contradiction. Hence, \(\delta>0.\) Since \(\Pvar_t\in\mathcal{C},\) it follows from \(\Score(f(\Pvar_t),f(\Pvar_t))-\Score(\Pvar_t,f(\Pvar_t))\geq\epsilon,\) for \(t\in\mathbb{N}\) that \(\Pvar_t\in\mathcal{C}'\) and thus \(\Vert\Pvar_t-f(\Pvar_t)\Vert\geq \delta.\)
This shows the third condition for the lemma. We can thus conclude that \(\lim_{T\rightarrow\infty}\frac{1}{T}\sum_{t=1}^T\psi(t)=0.\) Using \Cref{prop:slln}, this concludes the proof.
\end{proof}

\subsubsection{Convergence to fixed points}

The next result shows that if the agent's predictions converge to some distribution $\p$, then $\p$ must be a fixed point. 

\begin{corollary}
    \label{cor:no-regret-converge}
   In addition to the assumptions from \Cref{prop:no-regret-fp}, assume that \(\Pvar_t\) converges almost surely to a limit \(\lim_{t\rightarrow\infty} \Pvar_t=\p^*\). Then almost surely \(\p^*\) is a fixed point if and only if the agent has sublinear regret.
\end{corollary}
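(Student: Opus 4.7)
The plan is to reduce this corollary directly to \Cref{prop:no-regret-fp}, which characterizes sublinear regret as the Cesàro-convergence to zero of the sequence $\Vert f(\Pvar_t)-\Pvar_t\Vert$. Under the additional assumption of convergence $\Pvar_t\to\p^*$ almost surely, this Cesàro condition collapses to a pointwise condition on the limit $\p^*$.

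First, I would argue that $\p^*\in\mathcal{C}$ almost surely: since $\Pvar_t\in\mathcal{C}$ for all $t$ and $\mathcal{C}$ is closed, the almost-sure limit lies in $\mathcal{C}$. This matters because the continuity assumption on $f$ in \Cref{prop:no-regret-fp} is only posited on $\mathcal{C}$, and we need it at $\p^*$. By continuity of $f$ at $\p^*$ and continuity of the norm, we get $\Vert f(\Pvar_t)-\Pvar_t\Vert \to \Vert f(\p^*)-\p^*\Vert$ almost surely.

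Next, I invoke the Cesàro mean theorem: if a real sequence $a_t$ converges to $a$, then $\frac{1}{T}\sum_{t=1}^T a_t \to a$. Applying this with $a_t=\Vert f(\Pvar_t)-\Pvar_t\Vert$ and $a=\Vert f(\p^*)-\p^*\Vert$ yields
\begin{equation*}
\lim_{T\to\infty}\frac{1}{T}\sum_{t=1}^T \Vert f(\Pvar_t)-\Pvar_t\Vert = \Vert f(\p^*)-\p^*\Vert \quad \text{almost surely.}
\end{equation*}
By \Cref{prop:no-regret-fp}, the agent has sublinear regret almost surely iff this Cesàro limit equals $0$, i.e., iff $\Vert f(\p^*)-\p^*\Vert=0$, iff $\p^*$ is a fixed point of $f$.

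I do not anticipate any real obstacle: the only subtleties are (i) confirming $\p^*\in\mathcal{C}$ so that the continuity hypothesis of \Cref{prop:no-regret-fp} applies at the limit point, and (ii) carefully tracking the ``almost surely'' quantifiers, since convergence of $\Pvar_t$, the Cesàro equivalence from \Cref{prop:no-regret-fp}, and the sublinear-regret condition all hold only on a full-measure event, so I would take their intersection (still full measure) before concluding.
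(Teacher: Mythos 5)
Your proposal is correct and follows essentially the same route as the paper's proof: reduce to \Cref{prop:no-regret-fp}, deduce $\p^*\in\mathcal{C}$ from closedness of $\mathcal{C}$ (the paper phrases this via compactness, but the point is identical), use continuity of $f$ on $\mathcal{C}$ to get $\Vert f(\Pvar_t)-\Pvar_t\Vert\to\Vert f(\p^*)-\p^*\Vert$, and then equate the limit to its Cesàro mean. Your extra care with the almost-sure quantifiers is sound and matches what the paper leaves implicit.
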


\begin{proof}
By \Cref{prop:no-regret-fp}, almost surely the agent has sublinear regret if and only if
\[\lim_{t\rightarrow\infty}\frac{1}{T}\sum_{t=1}^T \Vert f(\mathbf{P}_t)-\mathbf{P}_t\Vert =0.\]
It remains to show that, given that the \(\Pvar_t\) converge, the latter is equivalent to convergence to a fixed point.

Since \(\mathcal{C}\) is compact and \(\Pvar_t\in\mathcal{C}\) for all \(t\in\mathbb{N},\) also \(\p^*\in\mathcal{C}.\) Hence, \(f\) is continuous at \(\p^*,\) so \[\Vert f(\p^*)-\p^*\Vert 
=\left\Vert f\left(\lim_{t\rightarrow\infty}\Pvar_t\right)-\lim_{t\rightarrow\infty}\Pvar_t\right\Vert
=\lim_{t\rightarrow\infty}\Vert f(\Pvar_t)-\Pvar_t\Vert .\]
Since this sequence converges, it is equal to its  
Cesàro mean,
\[\lim_{t\rightarrow\infty}\Vert f(\Pvar_t)-\Pvar_t\Vert
=\lim_{T\rightarrow\infty}\frac{1}{T}\sum_{t=1}^T\Vert f(\Pvar_t)-\Pvar_t\Vert.\]
Hence,
\[
\Vert f(\p^*)-\p^*\Vert
=\lim_{t\rightarrow\infty}\Vert f(\Pvar_t)-\Pvar_t\Vert
=\lim_{T\rightarrow\infty}\frac{1}{T}\sum_{t=1}^T\Vert f(\Pvar_t)-\Pvar_t\Vert
.\]
It follows that, if \(\lim_{t\rightarrow\infty}\Pvar_t=\p^*,\) then \[\Vert f(\p^*)-\p^*\Vert =0\Leftrightarrow \lim_{T\rightarrow\infty}\frac{1}{T}\sum_{t=1}^T\Vert f(\Pvar_t)-\Pvar_t\Vert=0.\]

This shows that, almost surely, \(\p^*\) is a fixed point, if and only if \(\sum_{t=1}^T\Vert f(\Pvar_t)-\Pvar_t\Vert \) is sublinear.
\end{proof}

\subsection{Prediction markets}

\label{appendix:prediction-markets}
Lastly, we consider prediction markets. We assume a simplified model of a prediction market, in which traders submit a single prediction and get scored using a proper scoring rule. The prediction that is output by the market and that influences the outcome is just a weighted average of the individual traders' predictions. In this situation, if a trader has a small weight and can thus barely influence the market prediction, the trader's score will mostly be determined by the accuracy of the report, rather than the influence of the report on the market. Thus, if all traders are small relative to the market, the equilibrium prediction will be close to a fixed point.

A similar result was shown by \citet{hardt2022performative} in the performative prediction context. They define a firm's performative power as the degree to which the firm can influence the overall outcome with their prediction. \citeauthor{hardt2022performative} show that in an equilibrium, the distance between a player's (performatively optimal) equilibrium strategy and their strategy when optimizing loss against the fixed equilibrium distribution (here, this means predicting the market probability) is bounded by the power of the trader. We give an analogous result for our formal setting and assumptions.

To formalize the setting, assume that there are $N$ players. We associate with each player $n\in [N]$ a number $w_n\in[0,1]$ s.t.\ $\sum_nw_n=1$, representing, intuitively, what fraction of the overall capital in the market is provided by player $n$. In the game, all players simultaneously submit a probability distribution $\p_n$. Then the event \(\Y\) is sampled according to the distribution $\q=f(\sum_nw_n\p_n)$. Finally, each player is scored in proportion to
$S(\p_n,\Y)$ for some strictly proper scoring rule $S$. Typical market scoring rules would consider terms like $S(\p_n,\Y)-S(\p_n,\Y)$, but subtracting $S(\p_n,\Y)$ (or multiplying by constants) does not matter for the game. We assume that players maximize their expected score, \(\E[S(\p_n,\Y)]=\Score(\p_n,f(\sum_mw_m\p_m))\).

For discussions of market scoring rules, see 
\citet{Hanson2003}
and 
\citet{Pennock2007}.
Prior work has connected these market scoring rules to more realistic prediction markets that trade Arrow--Debreu securities markets such as PredictIt 
[e.g., \citealp{Hanson2003}; 
\citealp{Pennock2007}, Section 4; \citealp{Chen2007}; \citealp{Agrawal2009}
].

We assume that $f$ is common knowledge. Moreover, in the following we only consider pure strategy equilibria, and we do not investigate the existence of equilibria. 

\begin{theorem}\label{thm:market}
    Let \(S\) be a proper scoring rule and let \(G,g\) as in the Gneiting and Raftery characterization of \(S\). Let $(\p_n)_n$ be a pure strategy Nash equilibrium of the aforedefined game and let \(\hat{\p}:=\sum_n w_n\p_n\) be the market prediction. Assume \(f\) is differentiable at \(\hat{\p}\). For any player \(n\), if \(G,g\) are differentiable at \(\p_n\) and \(Dg(\p_n)\succ \gamma_{\p_n},\) it follows that
    \begin{equation*}
        \left\Vert f\left(\hat{\p}\right)-\p_n\right\Vert\leq \frac{w_n\Vert Df\left(\hat{p}\right)\Vert_{\op}\Vert g(\p_n)\Vert}{\gamma_{\p_n}}.
    \end{equation*}
\end{theorem}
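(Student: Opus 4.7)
The plan is to reduce the market result to \Cref{theorem:Caspar-approx-fix-point} by rewriting player $n$'s best-response problem as a single-agent performative prediction. I would fix the other players' equilibrium reports and set $c_n \defeq \sum_{m \neq n} w_m \p_m$, so that when player $n$ submits $\p$ the market prediction equals $w_n \p + c_n$, which lies in $\Pset$ as a convex combination of simplex points. Define the effective belief function
$$\tilde f_n \colon \Pset \to \Pset, \qquad \tilde f_n(\p) \defeq f(w_n \p + c_n).$$
Player $n$'s expected score is then exactly $\Score(\p, \tilde f_n(\p))$, and the Nash best-response condition states $\p_n \in \argmax_{\p \in \Pset} \Score(\p, \tilde f_n(\p))$; that is, $\p_n$ is performatively optimal for the pair $(S, \tilde f_n)$ in the sense required by \Cref{theorem:Caspar-approx-fix-point}.

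Next I would verify and apply the hypotheses of \Cref{theorem:Caspar-approx-fix-point} to $\tilde f_n$ at $\p_n$. Differentiability of $G, g$ at $\p_n$ is assumed, and $\tilde f_n$ is differentiable at $\p_n$ by the chain rule because $f$ is differentiable at $\hat\p = w_n \p_n + c_n$; the chain rule likewise yields $D\tilde f_n(\p_n) = w_n\, Df(\hat\p)$, hence $\Vert D\tilde f_n(\p_n)\Vert_{\op} = w_n \Vert Df(\hat\p)\Vert_{\op}$ (using $w_n \geq 0$). The spectral condition $Dg(\p_n)|_{\TPset} \succeq \gamma_{\p_n}$ is hypothesized directly. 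Applying the theorem yields
$$\Vert \p_n - \tilde f_n(\p_n)\Vert \leq \frac{\Vert D\tilde f_n(\p_n)\Vert_{\op}\,\Vert g(\p_n)\Vert}{\gamma_{\p_n}},$$
and substituting $\tilde f_n(\p_n) = f(\hat\p)$ together with the chain-rule expression for $\Vert D\tilde f_n(\p_n)\Vert_{\op}$ recovers the stated bound.

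I do not expect a serious obstacle: the argument is essentially chain-rule bookkeeping layered onto the existing inaccuracy bound. The one point worth flagging is that ``performative optimality'' as used in \Cref{theorem:Caspar-approx-fix-point} must coincide with the Nash best-response condition for player $n$; this holds immediately because both reduce to $\p_n \in \argmax_{\p \in \Pset} \Score(\p, f(w_n \p + c_n))$. Conceptually, the factor $w_n$ in the final bound arises from the chain rule contracting $Df(\hat\p)$ by the trader's market weight, formalizing the intuition that a trader with small share has little leverage to move the market and therefore faces incentives close to those of an honest reporter, so equilibrium predictions concentrate near $f(\hat\p)$.
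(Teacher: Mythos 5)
Your proof is correct and takes essentially the same approach as the paper, which explicitly states its proof ``is analogous to that of \Cref{theorem:Caspar-approx-fix-point}'' and then redoes the computation: it differentiates $\p_n \mapsto \Score(\p_n, f(\sum_m w_m \p_m))$, lets the chain rule pull out the factor $w_n$, and repeats the same inequality chain. You package this observation as a clean reduction, introducing $\tilde f_n(\p) \defeq f(w_n\p + c_n)$ and invoking \Cref{theorem:Caspar-approx-fix-point} as a black box; this is tidier and correctly identifies $D\tilde f_n(\p_n) = w_n\, Df(\hat\p)$ as the single place the weight enters. The one wrinkle is that \Cref{theorem:Caspar-approx-fix-point} is stated for \emph{strictly} proper $S$ whereas \Cref{thm:market} only assumes $S$ proper; this mismatch already exists in the paper (its proof of \Cref{theorem:Caspar-approx-fix-point} never actually uses strictness beyond the hypothesis $\gamma_\p > 0$), so it is not a gap you introduced, but a write-up that invokes the theorem verbatim should either note this or add strict properness to the market theorem's hypotheses.
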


In particular, this theorem shows that players $n$ with very low $w_n$ (little capital/influence on $\q$) will accurately predict \(\q=f(\hat{\p})\). Note, however, that $\hat{\p}$ is not necessarily a fixed point or close to a fixed point. If there are are also players $n$ with very high $w_n$, then their prediction and the overall market prediction may be wrong. (So interestingly the overall market probability $\hat{\p}=\sum_nw_n\p_n$ is worse than the prediction of individuals. One might take this to suggest that anyone interested in $\q$ should look at the latter type of predictions. Of course, if this is what everyone does, it is not so clear anymore that the model $\q=f(\sum_nw_n\p_n)$ is accurate.)

\begin{proof}The proof is analogous to that of \Cref{theorem:Caspar-approx-fix-point}.
Let \((\p_n)_n\) be a pure strategy Nash equilibrium and \(\hat{\p}:=\sum_mw_m\p_m\). Each player must play a best response to the other player's strategies, so $\p_n$ must be a global maximum of the function $\varphi\colon\p_n\mapsto S(\p_n,\sum_mw_m\p_m)$
Hence, it must be \(\nabla\varphi(\p_n)^\top(f(\hat{\p})-\p_n)\leq 0\), i.e., the directional derivative of \(\varphi\) in the direction \(f(\hat{\p})-\p_n)\) must be at most zero. Otherwise, player \(n\) could improve their loss by changing their prediction marginally towards \(f(\hat{\p})\).

Computing the gradient, we have
\begin{eqnarray*}
    \nabla_{\p_n}\left(S\left(\p_n,f\left(\sum_m w_m\p_m\right)\right)\right)
    &=& \nabla_{\p_n}\left(G(\p_n)+g(\p_n)^\top\left(f\left(\sum_mw_m\p_m\right)-\p_n\right)\right)\\
    &=& g(\p_n)+Dg(\p_n)^\top\left(f\left(\sum_mw_m\p_m\right)-\p_n\right)+ w_n Df(\hat{\p})^\top g(\p_n) - \Id g(\p_n)
    \\
    &=& Dg(\p_n)^\top(f(\hat{\p})-\p_n)+w_nDf(\hat{\p})^\top g(\p_n).
\end{eqnarray*}
It follows
\begin{align}&0\geq \nabla\varphi(\p_n)^\top (f(\hat{\p})-\p_n)=
(f(\hat{\p})-\p_n)^\top Dg(\p_n) (f(\hat{\p})-\p_n)+w_ng(\p_n)^\top Df(\hat{\p}) (f(\hat{\p})-\p_n)\\
\Rightarrow&
-w_ng(\p_n)^\top Df(\hat{\p}) (f(\hat{\p})-\p_n)
\geq (f(\hat{\p})-\p_n)^T (Dg(\p_n)) (f(\hat{\p})-\p_n).
\end{align}
Using that \(Dg(\p_n)|_{\TPset}\succ \gamma_{\p_n}\) and thus \((f(\hat{\p})-\p_n)^\top (Dg(\p_n)) (f(\hat{\p})-\p_n) \geq \gamma_\p\Vert f(\hat{\p})-\p_n\Vert^2\), it follows that
\begin{eqnarray*}
    && \gamma_{\p_n}\Vert f(\hat{\p})-\p_n\Vert^2\\
    &\leq & (f(\hat{\p})-\p_n)^\top Dg(\p_n) (f(\hat{\p})-\p_n) \\
    &\leq& - w_ng (\p_n)^\top Df(\hat{\p})(f(\hat{\p})-\p_n)\\
    &\leq& w_n\vert g(\p_n)^\top Df(\hat{\p})(f(\hat{\p})-\p_n)\vert\\
    &\underset{\text{Cauchy-Schwarz}}{\leq}&w_n \Vert g(\p_n)\Vert \Vert Df(\hat{\p})(f(\hat{\p})-\p_n)\Vert\\
    &\leq& w_n\Vert g(\p_n)\Vert \Vert Df(\hat{\p})\Vert_{\mathrm{op}}\Vert f(\hat{\p})-\p_n\Vert
\end{eqnarray*}
The result follows by dividing by $\gamma_{\p_n}\Vert f(\hat{\p})-\p_n\Vert $.

\end{proof}

\begin{corollary}
In addition to the assumptions from \Cref{thm:market}, assume that $f$ is Lipschitz-continuous and \(C:=\sup_{p\in\Pset}\frac{\Vert g(\p)\Vert }{\gamma_\p}<\infty.\) Let \((\p_n)_n\) be a Nash equilibrium and let $\epsilon>0$ arbitrary. Then there exists a $\delta>0$ such that if for all $n$, $w_n<\delta,$ all of $\p_n$ and $f(\p_n)$, for all $i$, as well as $\sum_mw_m\p_m$ and $f(\sum_mw_m\p_m)$ are within $\epsilon$ of each other.
\end{corollary}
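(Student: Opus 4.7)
The plan is to use Theorem \ref{thm:market} as the key black box and then chase triangle inequalities. Concretely, for any Nash equilibrium $(\p_n)_n$, Theorem \ref{thm:market} applied to each player $n$ gives
\begin{equation*}
\Vert f(\hat{\p}) - \p_n\Vert \leq \frac{w_n \Vert Df(\hat{\p})\Vert_{\op}\Vert g(\p_n)\Vert}{\gamma_{\p_n}} \leq w_n L_f C,
\end{equation*}
using the global bound $C$ on $\Vert g(\p)\Vert/\gamma_\p$ and the Lipschitz constant $L_f$ of $f$ (which bounds $\Vert Df(\hat{\p})\Vert_{\op}$). So if every $w_n < \delta$, then every $\p_n$ is within $\delta L_f C$ of the common point $f(\hat{\p})$.

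From there the remaining distances follow from the triangle inequality and Lipschitz continuity. For the market prediction itself, since $\hat{\p} = \sum_m w_m \p_m$ is a convex combination,
\begin{equation*}
\Vert \hat{\p} - f(\hat{\p})\Vert \leq \sum_m w_m \Vert \p_m - f(\hat{\p})\Vert \leq \max_m w_m \cdot L_f C \leq \delta L_f C.
\end{equation*}
Then $\Vert \p_n - \hat{\p}\Vert \leq \Vert \p_n - f(\hat{\p})\Vert + \Vert f(\hat{\p}) - \hat{\p}\Vert \leq 2\delta L_f C$, and by Lipschitz continuity $\Vert f(\p_n) - f(\hat{\p})\Vert \leq L_f \Vert \p_n - \hat{\p}\Vert \leq 2\delta L_f^2 C$. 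Combining these, all six pairwise distances among $\{\p_n, f(\p_n), \hat{\p}, f(\hat{\p})\}$ (for any indices) can be bounded by a fixed constant multiple of $\delta$ depending only on $L_f$ and $C$; choosing $\delta$ to be $\epsilon$ divided by that constant yields the claim.

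There is no real obstacle: the corollary is essentially a direct packaging of Theorem \ref{thm:market} under the extra uniformity assumption on $\Vert g(\p)\Vert/\gamma_\p$ and Lipschitz continuity of $f$. The only thing worth being a little careful about is writing the bound on $\Vert \hat{\p} - f(\hat{\p})\Vert$ in a way that exploits $\sum_m w_m = 1$ rather than bounding each summand by $\delta L_f C$ and then summing over $n$ players (which would give an $n$-dependent constant); the convex-combination argument above avoids any dependence on the number of players.
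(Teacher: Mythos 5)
Your proof is correct and follows essentially the same route as the paper's: apply Theorem~\ref{thm:market} with the uniform bounds $L_f$ and $C$ to get $\Vert f(\hat{\p})-\p_n\Vert \leq w_n L_f C$, use the convex-combination fact to bound $\Vert \hat{\p}-f(\hat{\p})\Vert$, and then close everything off with triangle inequalities and Lipschitz continuity of $f$. The only cosmetic difference is that the paper fixes an explicit $\delta = \frac{\epsilon\lambda}{4CL_f}$ with $\lambda=\min(1,1/L_f)$ up front, while you defer the precise constant to the end.
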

\begin{proof}
Let \(\epsilon>0\) arbitrary. Let \(L_f\) be the Lipschitz constant of \(f\) and note that then \(\Vert Df(\p)\Vert_{\op}\leq L_f\) for all \(\p\in\Pset.\) 
By \Cref{thm:market}, it follows for \(\hat{\p}:=\sum_mw_m\p_m\) and any player \(n\) that
\[\Vert f(\hat{\p})-\p_n\Vert \leq w_nL_fC.\]
Now let \(\lambda:=\min(\{1,\frac{1}{L_f}\})\) and \(\delta:=\frac{\epsilon\lambda}{4CL_f}\), and assume \(w_n<\delta\) for all \(n\in[N]\). Then it follows
\[\Vert f(\hat{\p})-\p_n\Vert \leq \delta L_fC\leq \frac{\lambda}{4}\epsilon.\]
Moreover, since \(\hat{\p}\) is a convex combination of probabilities \(\p_n\), it follows that
\[\Vert f(\hat{\p})-\hat{\p}\Vert \leq \max_{n}\Vert f(\hat{\p})-\p_n\Vert \leq \frac{\lambda}{4}\epsilon.\]
Thus, by the triangle equality, we have \(\Vert \p_n-\hat{\p}\Vert \leq \frac{2\lambda}{4}\epsilon\), and since \(f\) is Lipschitz-continuous,
\[\Vert f(\hat{p})-f(\p_n)\Vert \leq L_f\Vert \hat{\p}-\p_n\Vert \leq L_f\frac{2\lambda}{4}\epsilon \leq \frac{1}{2}\epsilon\]
for any \(n\in[N]\).

This shows that all of \(\p_n,\hat{\p},f(\p_n)\) are within \(\epsilon/2\) of \(f(\hat{\p})\) and thus by the triangle inequality within \(\epsilon\) of each other.
\end{proof}

It would be interesting to extend these results. For example, it is unclear what happens when players make predictions \textit{repeatedly}. (To keep things simple, one should probably still imagine that all players know $f$ and that the environment probability is determined by $f$ applied to the majority forecast. If the traders have private information, prediction markets become harder to analyze. For some discussions, see \cite{ostrovsky2009information}, \cite{chen2016informational}.)

\end{document}